\renewcommand{\le}{\leqslant}
\renewcommand{\leq}{\leqslant}
\renewcommand{\ge}{\geqslant}
\renewcommand{\geq}{\geqslant}
\newcommand{\eps}{\varepsilon}
\newcommand{\ups}{\upsilon}
\newcommand{\dd}{\mathrm{d}}
\renewcommand{\Im}{\mathrm{Im}}
\newcommand{\size}{\mathrm{size}}
\newcommand{\F}{\mathsf{F}}
\newcommand{\f}{\mathsf{f}}
\newcommand{\sfp}{\mathsf{p}}
\newcommand{\sfq}{\mathsf{q}}
\newcommand{\E}{\mathbb E}
\newcommand{\N}{\mathbb N}
\newcommand{\p}{\mathbb P}
\newcommand{\R}{\mathbb R}
\newcommand{\bbH}{\mathbb H}
\newcommand{\1}{\mathbbm 1}
\newcommand{\I}{\mathbb I}
\newcommand{\bb}{\mathbf{b}}
\newcommand{\bu}{\mathbf{u}}
\newcommand{\bv}{\mathbf{v}}
\newcommand{\bx}{\textbf{\textit{x}}}
\newcommand{\bX}{\mathbf{X}}
\newcommand{\bY}{\mathbf{Y}}
\newcommand{\bups}{\boldsymbol \ups}
\newcommand{\btheta}{{\boldsymbol \theta}}
\newcommand{\bmu}{\boldsymbol \mu}
\newcommand{\bPsi}{\boldsymbol \Psi}
\newcommand{\bnabla}{\boldsymbol \nabla}
\newcommand{\bzero}{\mathbf{0}}
\newcommand{\cE}{{\mathcal{E}}}
\newcommand{\cF}{{\mathcal{F}}}
\newcommand{\cL}{{\mathcal{L}}}
\newcommand{\cN}{{\mathcal{N}}}
\newcommand{\cO}{{\mathcal{O}}}
\newcommand{\cP}{{\mathcal{P}}}
\newcommand{\cR}{{\mathcal{R}}}
\newcommand{\cX}{{\mathcal{X}}}
\newcommand{\z}{\mathfrak z}
\newcommand{\ttR}{\mathtt R}
\newcommand{\ttZ}{\mathtt Z}
\newcommand{\myendproof}{\hfill$\blacksquare$}
\def\argmin{\operatornamewithlimits{argmin}}
\newtheorem{Th}{Theorem}[section]
\newtheorem{Alg}[Th]{Algorithm}
\newtheorem{As}[Th]{Assumption}
\newtheorem{Lem}[Th]{Lemma}
\newtheorem{Prop}[Th]{Proposition}
\title{Score-based Change Point Detection via\\ Tracking the Best of Infinitely Many Experts}
\author{
Anna Markovich\thanks{HSE University, Russian Federation, aamarkovich@hse.ru}
\and
Nikita Puchkin\thanks{HSE University, Russian Federation, npuchkin@hse.ru}
}
\date{}
\begin{document}
\maketitle

\begin{abstract}%
  We propose an algorithm for nonparametric online change point detection based on sequential score function estimation and the tracking the best expert approach. The core of the procedure is a version of the fixed share forecaster tailored to the case of infinite number of experts and quadratic loss functions. The algorithm shows promising results in numerical experiments on artificial and real-world data sets. Its performance is supported by rigorous high-probability bounds describing behaviour of the test statistic in the pre-change and post-change regimes.%
\end{abstract}


\section{Introduction}

Let $\bX_1, \dots, \bX_T$ be a sequence of i.i.d. random vectors from a Euclidean space $\cX$ such that $\bX_1, \dots, \bX_{\tau^*}$ have a probability density $\sfp$ (with respect to the Lebesgue measure) while $\bX_{\tau^* + 1}, \dots, \bX_T$ are drawn from a density $\sfq \neq \sfp$. The parameter $\tau^* \in \{1, \dots, T - 1\}$ as well as the densities $\sfp$ and $\sfq$ are unknown and, based on the successively arriving observations, our goal is to determine the moment of the distribution change as soon as possible. This problem referred to as online change point detection has a long history going back to classical works of \citet{page54, page55}, \citet{shiryaev61, shiryaev63} and \citet{roberts66} but still continues its extensive development. Researchers thoroughly examined various setups. The most popular one is the problem of mean shift detection \citep{pein17, eichinger18, eh13, wang20, yu20a, yu20b, rinaldo21, chen22, sun22}, that is, $\sfq(\bx) = \sfp(\bx - \bmu)$ for some vector $\bmu$. In addition, it is usually assumed that both the densities $\sfp$ and $\sfq$ are Gaussian or at least have sub-Gaussian tails. A natural extension of the mean shift detection is the framework of parametric change point detection \citep{cao18, maillard19, chen20, dette20, yu20a, chaudhuri21, chaudhuri22, corradin22, sun22, titsias22, lee23, xie23, wu23, wang24}, where the authors suppose that $\sfp$ and $\sfq$ belong to a fixed family of distributions induced by a finite-dimensional parameter. In this case, one has to detect the moment $\tau^*$ when the value of the corresponding parameter changes. We would like to note that, in the literature on parametric change point detection, the authors usually impose strong modeling assumptions on the underlying densities $\sfp$ and $\sfq$. This significantly affects the practical use of such procedures. In the present paper, we are interested in a more general nonparametric change point setup \citep{hero06, hbm08, liu13, xie13, zou14, li15, biau16, ga18, arlot19, kurt21, xie21b, padilla21b, shin22, ferrari23, puchkin23}. We do not make any presumptions on how $\sfq$ differs from $\sfp$. For our purposes, it is enough to suppose that the scores $\bnabla \log \sfp$ and $\bnabla \log \sfq$ corresponding to the pre-change and post-change distributions are smooth and can be approximated well by an appropriate system of functions.
Finally, it is worth mentioning that we are interested in sequential change point detection. This means that we focus our attention on rapid detection of change emergence. There is a vast of literature devoted to localization and estimation of $\tau^*$ from a retrospectively observed time series \citep{ds01, zou14, mj14, ds15, biau16, korkas17, ga18, arlot19, padilla21, corradin22, malte22}. This problem called offline change point detection goes beyond the scope of the present paper.

Let us briefly describe the idea of our algorithm.  We consider the problem of change point detection through the lens of sequential prediction with expert advice and online convex optimization. In \citep{cao18, goldman23}, the authors have already designed the quickest detection algorithms based on online mirror descent and follow-the-approximate-leader strategy, respectively, but our approach is different. We rely on the standard parametric modelling to approximate the unknown densities $\sfp$ and $\sfq$. For this purpose, we use a reference class of densities $\cP = \{ \sfp_\btheta : \btheta \in \R^d \}$
with $\sfp_\btheta$ of the form
\[
    \log \sfp_\btheta(\bx) = \bPsi(\bx)^\top \btheta - \Phi(\btheta),
    \quad \text{where} \quad
    \Phi(\btheta) = \log \int\limits_{\cX} e^{\btheta^\top \bPsi(\bx)} \, \dd \bx.
\]
Here $\bPsi : \bx \mapsto \big(\psi_1(\bx), \dots, \psi_d(\bx) \big)^\top$ is a fixed mapping. From now on, we can apply online prediction algorithms to determine the best parametric fit for the pre-change and post-change densities $\sfp$ and $\sfq$. In \citep{cao18}, the authors measured the quality of forecasts with the negative logarithmic loss. As a consequence, they had to calculate $\Phi(\btheta)$ for various values of $\btheta$ on each iteration. Unfortunately, numerical computation of an integral over a high-dimensional space is extremely time consuming. Hence, the algorithm of \cite{cao18} is useful only when $\Phi(\btheta)$ admits a closed-form expression. This drastically narrows down the practical use of their procedure. To avoid such issues, we measure the closeness of $\sfp_\btheta$ to the target densities $\sfp$ and $\sfq$ with the Fisher divergence (see eq.~\eqref{eq:fisher_divergence} for the definition). Given $\bX_t$, it is straightforward to construct an unbiased estimate $\ell_t(\btheta)$ of
\[
    \F(\sfp, \sfp_\btheta) - \frac12 \E_{\bX \sim \sfp} \left\| \bnabla \log \sfp(\bX) \right\|^2
    \quad \text{or} \quad
    \F(\sfq, \sfp_\btheta) - \frac12 \E_{\bX \sim \sfq} \left\| \bnabla \log \sfq(\bX) \right\|^2
\]
using Green's first identity (see Section~\ref{sec:application} for the details). This estimate is then used for construction of a loss function on the $t$-th round. As long as $t \leq \tau^*$ the problem of online prediction can be easily solved using, for instance, the exponentially weighted average forecaster with Gaussian prior (see Section~\ref{sec:ew}). This algorithm will have a small regret compared to the best static expert $\btheta_{1 : \tau^*}^\circ$. However, when $t > \tau^*$, the best expert $\btheta_{\tau^* + 1 : T}^\circ$ on the segment $[\tau^* + 1, T]$ will differ from $\btheta_{1 : \tau^*}^\circ$. This suggests us
to use online forecasters, which are able to compete with combinations of experts. For this reason, we develop a version of the fixed share algorithm \citep{herbster98} for infinite number of experts and construct the test statistic $\widehat S_t = \widehat L_{1 : t}^{EW} - \widehat L_{1 : t}^{FS}$, where $\widehat L_{1 : t}^{EW}$ and $\widehat L_{1 : t}^{FS}$ stand for cumulative losses of the exponentially weighted average and the fixed share forecasters, respectively. If $t \leq \tau^*$, then the standard exponential weighting performs nearly as well as the fixed share predictor and $\widehat S_t$ stays small. The situation changes when $t$ starts exceeding $\tau^*$. In this case $\widehat S_t$ rapidly grows, because of the ability of fixed share to adapt to changes in the underlying distribution instantly. We would like to mention that similar ideas were used in \citep{gokcesu18, yamanishi18} in the context of anomaly detection. \cite{gokcesu18} use the idea of tracking the best expert and derives bounds on the dynamic regret. However, the algorithm of \citet{gokcesu18} has the same limitations as the one of \cite{cao18} and is applicable to parametric setups only. Besides, \citet{gokcesu18} does not discuss how the regret bounds are related to type-I and type-II errors of their procedure. In \citep{yamanishi18}, the authors consider possible changes in the reference class $\cP$ rather than in switching of the best static predictor.

\medskip

\noindent
\textbf{Contribution.}\quad The main contribution of the paper can be summarized as follows.
\begin{itemize}
    \item We propose an algorithm for nonparametric online change point detection. It is based on a version of the fixed share forecaster tailored for the problem of online prediction with expert advice with infinite number of static experts and quadratic losses $\ell_t : \R^d \rightarrow \R$ of the form
    \begin{equation}
        \label{eq:loss}
        \ell_t(\btheta) = \frac12 \btheta^\top A_t \btheta - \bb_t^\top \btheta,
        \quad \text{where $A_t \succeq O_d$, $\bb_t \in \R^d$ for all $t \in \{1, \dots, T\}$}.
    \end{equation}
    \item We carry out theoretical study of the algorithm and derive non-asymptotic high-probability bounds on its test statistic in the pre-change and post-change regimes under realistic assumptions.
    \item We illustrate algorithm's performance with numerical experiments on both artificial and real-world data sets.
\end{itemize}

\noindent
\textbf{Paper structure.}\quad The rest of the paper is organized as follows. In Section~\ref{sec:online_learning}, we briefly introduce the problem of online prediction with expert advice, discuss the exponentially weighted average forecaster and present a version of the fixed share algorithm. In Section~\ref{sec:application}, we suggest an online change-point detection algorithm and elaborate on its theoretical properties. We also illustrate its performance with numerical experiments in Section~\ref{sec:numerical}. Proofs of the theoretical results and additional numerical experiments are moved to Appendix.

\medskip

\noindent
\textbf{Notation.}\quad Throughout the paper we use the following notation. Boldface font is reserved for vectors while scalars and matrices are displayed in regular font. For any \(x \in \R\) we denote \((x)_+ = \max(x, 0)\).
For any $t \in \{1, \dots, T\}$, $\ell_t : \R^d \rightarrow \R$ stands for the loss function observed on the round $t$. For any $1 \leq s \leq t \leq T$,
\[
    L_{s:t}(\btheta) = \sum\limits_{j = s}^t \ell_j(\btheta)
\]
denotes the cumulative loss suffered by the expert $\btheta$ on rounds $s, s + 1, \dots, t$. We also adopt the notation $L_{s:t}(\btheta) = 0$ for all $s > t$ and $\btheta \in \R^d$. Given a prior distribution $\pi$ on $\R^d$ (always clear from context), we introduce the exponentially weighted average
\begin{equation}
    \label{eq:z_st}
    \widehat \btheta_{s:t}(\eta) = \frac1{Z_{s:t}(\eta)} \int\limits_{\R^d} \btheta e^{-\eta L_{s:t}(\btheta)} \pi(\btheta) \, \dd \btheta,
    \quad \text{where} \quad
    Z_{s:t}(\eta) = \int\limits_{\R^d} e^{-\eta L_{s:t}(\btheta)} \pi(\btheta) \, \dd \btheta.
\end{equation}
For any probability densities $\sfp_0$ and $\sfp_1$, $\F(\sfp_0, \sfp_1)$ stands for the Fisher divergence defined as follows:
\begin{equation}
    \label{eq:fisher_divergence}
    \F(\sfp_0, \sfp_1)
    = \frac12 \int\limits_{\cX} \left\| \bnabla \log \sfp_0(\bx) - \bnabla \log \sfp_1(\bx) \right\|^2 \sfp_0(\bx) \, \dd \bx.
\end{equation}
The identity matrix of size $d \times d$ and the zero matrix of the same shape are denoted by $I_d$ and $O_d$, respectively. For a matrix $A$, $\Im(A)$ denotes its range and $A^\dag$ is its pseudoinverse. Finally, for any two functions $f$ and $g$, the relations $f \lesssim g$ and $g \gtrsim f$ are equivalent to $f = \cO(g)$.

\section{Online Learning with Quadratic Loss}
\label{sec:online_learning}

In the problem of prediction with expert advice, a learner aims to predict a sequence of outcomes based on a finite or infinite set $\Theta$ of reference forecasts. In the present paper, we focus on a bit more specific setup than described in the book \citep[Section 2]{cesa-bianchi06a}. To be more precise, we assume that $\Theta = \R^d$ for some $d \in \N$, that is, each expert is indexed with a $d$-dimensional vector. Moreover, we suppose that, for any $\btheta \in \R^d$, the expert $\btheta$ makes a prediction $\btheta$ on each round. In other words, we deal with the case of constant experts. We would like to emphasize that the choice $\Theta = \R^d$ allows us to handle scenarios of abrupt changes, when the densities $\sfp$ and $\sfq$ (and, hence, their corresponding approximations from the parametric class $\cP$) are very different.  
The game proceeds as follows. An adversary generates loss functions $\ell_t : \R^d \rightarrow \R$, $1 \leq t \leq T$, but does not reveal them to the learner. On a round $t \in \{1, \dots, T\}$, the forecaster makes his prediction $\widehat \btheta_t \in \R^d$ based on experts' performance on the previous rounds $1, \dots, t - 1$. After that, the adversary discloses $\ell_t$ and the learner suffers the loss $\ell_t(\widehat \btheta_t)$. This is the so-called \emph{oblivious opponent} setup, which is usual for analysis of the exponential weighting and fixed share algorithms. The performance after $T$ rounds is measured by the excess cumulative loss compared to the best expert, also referred to as \emph{regret}:
\begin{equation}
    \label{eq:static_regret}
    R_{1:T}
    = \widehat L_{1:T} - \inf\limits_{\btheta \in \R^d} L_{1:T}(\btheta)
    = \sum\limits_{t = 1}^T \ell_t(\widehat \btheta_t) - \inf\limits_{\btheta \in \R^d} \sum\limits_{t = 1}^T \ell_t(\btheta).
\end{equation}
In what follows, we assume that, for each $t \in \{1, \dots, T\}$, the loss function $\ell_t$ is quadratic~\eqref{eq:loss}.

\subsection{Exponentially Weighted Average Forecaster}
\label{sec:ew}  

Exponential weighting is one of the most popular strategies for prediction of individual sequences. Let $\pi$ be a prior distribution over $\R^d$. Let us recall that, for any $\btheta \in \R^d$ and $t \in \{1, \dots, T\}$, we denote the cumulative loss of the expert $\btheta$ after $t$ rounds by
\[
    L_{1:t}(\btheta) = \sum\limits_{j = 1}^t \ell_j(\btheta).
\]
We also adopt the notation $L_{1:0}(\btheta) = 0$ for any $\btheta \in \R^d$. Given a non-increasing sequence $\{\eta_t : 1 \leq t \leq T\}$ of positive numbers, on the round $t$, the exponentially weighted forecaster makes a prediction $\widehat \btheta{}_t^{EW} = \widehat \btheta_{1:t-1}(\eta_t)$ with $\widehat \btheta_{1:t-1}(\eta_t)$ given by \eqref{eq:z_st}.
In other words, $\widehat \btheta{}_t^{EW}$ is the mean with respect to the posterior measure
\begin{equation}
    \label{eq:posterior}
    \frac{e^{-\eta_t L_{1:t-1}(\btheta)} \pi(\btheta)}{Z_{1:t-1}(\eta_t)}.
\end{equation}
When the loss functions $\ell_1, \dots, \ell_T$ are  quadratic, the standard choice for the prior $\pi$ is the Gaussian distribution $\cN(\bzero, \lambda^{-1} I_d)$. The reason is that the exponentiallly weighted forecast $\widehat \btheta{}_t^{EW}$ can be computed explicitly. We provide the corresponding lemma below.

\begin{Lem}
    \label{lem:ew}
    Assume that the loss function $\ell_t$ is of the form~\eqref{eq:loss} for any $t \in \{1, \dots, T\}$. For any $\eta > 0$, let $\widehat \btheta_{s:t}(\eta)$ be the exponentially weighted average \eqref{eq:z_st} with the Gaussian prior
    \begin{equation}
        \label{eq:ew_prior}
        \pi(\btheta) = \left( \frac{\lambda}{2 \pi} \right)^{d / 2} e^{-\lambda \|\btheta\|^2 / 2}.
    \end{equation}
    Then it holds that
    \begin{equation}
        \label{eq:ew_st}
        \widehat \btheta_{s:t}(\eta) = \left( \sum\limits_{j = s}^t A_j + \frac{\lambda}\eta I_d \right)^{-1} \sum\limits_{j = s}^t \bb_j.
    \end{equation}
\end{Lem}

The proof of Lemma~\ref{lem:ew} is straightforward. However, we provide it in Section~\ref{sec:lem_ew_proof} to make the paper self-contained. The expression~\eqref{eq:ew_st} helps to formulate the exponential weighting procedure in a simple and closed form.

\subsection{Fixed Share Forecaster}

In the standard online learning framework, the performance of a forecaster is measured in terms of the regret~\eqref{eq:static_regret} with respect to the best expert. In \citep{auer98, herbster98}, the authors compared the cumulative loss $\widehat L_{1:T}$ of the forecaster with the one of a combination of experts. For any $\btheta_1, \dots, \btheta_T \in \R^d$, let $(\btheta_1, \dots, \btheta_T)$ stand for a \emph{compound expert}, that is, a forecaster, which follows the prediction of the expert $\btheta_t$ on the round $t$. It is straightforward to observe that the cumulative loss of the compound expert $(\btheta_1, \dots, \btheta_T)$ after $t$ rounds is equal to
\[
    \cL_t(\btheta_1, \dots, \btheta_T) = \sum\limits_{s = 1}^t \ell_s(\btheta_s).
\]
Of course, it would be unfair to compare the forecaster with all compound experts. For this reason, \cite{auer98} and \cite{herbster98} introduced the notion of size of a compound expert defined as
\[
    \size(\btheta_1, \dots, \btheta_T)
    = \sum\limits_{t = 2}^T \1(\btheta_t \neq \btheta_{t - 1})
\]
and restricted their attention on \emph{dynamic} (also called switching or shifting) regret with respect to the best compound expert of size at most $m$:
\[
    \cR_{1:T} = \widehat L_{1:T} - \inf\limits_{\substack{(\btheta_1, \dots, \btheta_T):\\ \size(\btheta_1, \dots, \btheta_T) \leq m}} \cL_T(\btheta_1, \dots, \btheta_T).
\]
Here $m \leq T$ is a predefined parameter. Obviously, if $m = 0$, then the switching regret $\cR_{1:T}$ coincides with $R_{1:T}$. The ambitious problem of tracking the best expert gained much attention in the last three decades starting from the pioneering works of \cite{willems96, willems97, herbster98, shamir99, vovk99}. While \cite{willems96, willems97, shamir99} were bothered with a particular prediction problem with logarithmic loss, \citet{herbster98} designed a universal algorithm predicting nearly as well as the best compound expert of size $m$ under quite general assumptions on the loss function. They suggested the fixed share forecaster, which is nothing but the exponential weighting in the space of all compound experts with a special prior. Their idea was simple and elegant and helped to overcome computational issues arising during the brute force implementation of the exponentially weighted averaging, which requires a lot of resources even in the case of finite set of experts (see, for instance, \citeauthor{cesa-bianchi06a}, \citeyear[Section 5.2]{cesa-bianchi06a}). The main advantage of the fixed share is that, despite the aggregation of all possible compound experts, it remains computationally efficient due to the apposite prior. The algorithm became very popular with the research community, and its properties (of both randomized and deterministic versions) were then intensively studied in numerous subsequent works (for example, \citealp{vovk99, bousquet02, gyorgy05, gyorgy08, cesa-bianchi12, gyorgy12, adamskiy16} to name a few). In \citep{bousquet02, adamskiy16}, the authors went further and examined general schemes for construction of online learning algorithms with low switching regret. Unfortunately, the algorithm of \cite{herbster98} (as well as many follow-up papers) deals with the case of finite number of experts.
One of the ways to overcome this limit is to consider setups with a growing pool of experts. For instance, in \citep{hazan07b, hazan09, mourtada17}, the authors propose meta-algorithms for aggregation of an increasing number of predictions at each step while maintaining strong dynamic regret bounds. These methods can also be adapted for construction of change point detection procedures in the same spirit as we describe in Section \ref{sec:application}. In our approach, however, we use a slightly different strategy. We develop a version of the fixed share forecaster tailored for the specific setup to ensure that its cumulative loss is as close as possible to that of the exponentially weighted average forecaster in the pre-change regime. The advantage of this strategy becomes evident in Section \ref{sec:numerical}, where our algorithm outperforms a procedure based on the Follow-the-Leading-History (FLH) meta-algorithm \citep[Section 3]{hazan07b}.

\subsubsection{Fixed Share Forecaster for the Infinite Class of Experts}
\label{sec:fixed-share}

Following the general idea of \cite{herbster98}, we define the fixed share forecaster as exponentially weighted predictor with a special prior $\rho$ over the set of compound experts $(\btheta_1, \dots, \btheta_T)$. We suggest using
\begin{equation}
    \label{eq:fs_prior}
    \rho(\btheta_1, \dots, \btheta_T) = \pi(\btheta_1) \prod\limits_{t = 2}^T \f(\btheta_t \,\vert\, \btheta_{t - 1}),
    \; \text{where} \;
    \f(\btheta \,\vert\, \btheta_{t - 1}) = \alpha \pi(\btheta) + (1 - \alpha) \delta(\btheta - \btheta_{t - 1}).
\end{equation}
Here and further in the paper, $\delta$ denotes the Dirac delta function. As before, $\pi(\btheta)$ stands for the probability density function of the Gaussian distribution $\cN(\bzero, \lambda^{-1} I_d)$, $\lambda > 0$, as defined in~\eqref{eq:ew_prior}.
Given a non-increasing sequence $\{\eta_t : 1 \leq t \leq T\}$ of positive numbers, the fixed share forecaster makes a prediction $\widehat\btheta{}_t^{FS} = \widetilde \btheta_t(\eta_t)$ on the round $t$, where, for any $\eta > 0$ and $t \in \{1, \dots, T\}$,
\begin{equation}
    \label{eq:exponential_weights_compound}
    \widetilde \btheta_t(\eta) = \frac{\int \btheta_t e^{-\eta \cL_{t - 1}(\btheta_1, \dots, \btheta_T)} \rho(\btheta_1, \dots, \btheta_T) \, \dd \btheta_1 \dots \dd \btheta_T}{\int e^{-\eta \cL_{t - 1}(\btheta_1, \dots, \btheta_T)} \rho(\btheta_1, \dots, \btheta_T) \, \dd \btheta_1 \dots \dd \btheta_T}.
\end{equation}

Straightforward computation of the integrals in~\eqref{eq:exponential_weights_compound} requires a lot of efforts even in the one-dimensional case. Fortunately, there is a much simpler way to calculate $\widetilde \btheta_t(\eta)$ in a recursive manner.

\begin{Lem}
    \label{lem:v}
    Let us fix any $\eta > 0$. For any $s, t \in \{1, \dots, T\}$, $s \leq t$, let $\widehat \btheta_{s:t}(\eta)$ and $Z_{s:t}(\eta)$ be as defined in~\eqref{eq:z_st}. Then $\widetilde \btheta_1(\eta) = \bzero$, $\widetilde \btheta_2(\eta) = (1 - \alpha) \widehat \btheta_{1:1}(\eta)$, and for any $3 \leq t \leq T$ the forecast $\widetilde \btheta_t(\eta)$ obeys the recurrent relation
    \begin{align}
        \label{eq:widetilde_theta}
        \widetilde \btheta_t(\eta)
        = \frac{1 - \alpha}{V_{t - 1}(\eta)}
        &\notag
        \Bigg( (1 - \alpha)^{t - 2} \, Z_{1 : t - 1}(\eta) \widehat\btheta_{1:t - 1}(\eta)
        \\&\quad
        + \alpha \sum\limits_{s = 0}^{t - 3} (1 - \alpha)^s \, V_{t - 2 - s}(\eta) \, Z_{t - 1 - s : t - 1}(\eta) \widehat\btheta_{t - 1 - s : t - 1}(\eta) \Bigg),
    \end{align}
    where $V_1(\eta) = Z_{1:1}(\eta)$ and 
    \begin{equation}
        \label{eq:v}
        V_t(\eta) = (1 - \alpha)^{t - 1} \, Z_{1 : t}(\eta) + \alpha \sum\limits_{s = 0}^{t - 2} (1 - \alpha)^s \, V_{t - 1 - s}(\eta) \, Z_{t - s : t}(\eta)
        \quad \text{for any $2 \leq t \leq T$.}
    \end{equation}
\end{Lem}

The proof of Lemma~\ref{lem:v} is moved to Section~\ref{sec:lem_v_proof}. The expressions~\eqref{eq:v} and~\eqref{eq:widetilde_theta} resemble of the result of \cite{gyorgy08} (Lemma 1, see also \citeauthor{cesa-bianchi06a}, \citeyear[Lemma 5.4]{cesa-bianchi06a}). Finally, for any $s, t \in \{1, \dots, T\}$, $s \leq t$, $Z_{s:t}(\eta)$ can be computed explicitly.

\begin{Lem}
    \label{lem:z}
    Let us fix any positive numbers $\eta$ and $\lambda$. For any $s, t \in \{1, \dots, T\}$, $s \leq t$, let $Z_{s:t}(\eta)$ be as defined in~\eqref{eq:z_st} with the prior $\pi$, given by~\eqref{eq:ew_prior}. Then it holds that
    \begin{equation}
        \label{eq:z_st_explicit}
        \hspace{-0.18cm}
        Z_{s : t}(\eta)
        = \left( \frac{\lambda}{\eta} \right)^{d/2} \det\left( \; \sum\limits_{j = s}^t A_j + \frac{\lambda}{\eta} I_d \right)^{-1/2} \exp\left\{ \frac{\eta}2 \left\| \left( \; \sum\limits_{j = s}^t A_j + \frac{\lambda}{\eta} I_d \right)^{-1/2} \sum\limits_{j = s}^t \bb_j \right\|^2 \right\}.
    \end{equation}
\end{Lem}
We provide the proof of Lemma~\ref{lem:z} in Section~\ref{sec:lem_z_proof}.

\section{Application to Change Point Detection}
\label{sec:application}

The exponentially weighted average and the fixed share forecasters can be applied to online change point detection.
Let us elaborate on the idea briefly described in the introduction. Let us recall that we observe a sequence of independent random vectors $\bX_1, \dots, \bX_T$ such that $\bX_1, \dots, \bX_{\tau^*} \sim \sfp$ and $\bX_{\tau^* + 1}, \dots, \bX_T \sim \sfq$, where $\sfp$ and $\sfq$ are unknown probability densities on a Euclidean space $\cX$. We approximate $\sfp$ and $\sfq$ using a parametric reference class of densities $\cP = \{ \sfp_\btheta : \btheta \in \R^d \}$ with $\sfp_\btheta$ of the form
\[
    \log \sfp_\btheta(\bx) = \bPsi(\bx)^\top \btheta - \Phi(\btheta),
    \quad \text{where} \quad
    \Phi(\btheta) = \log \int\limits_{\cX} e^{\btheta^\top \bPsi(\bx)} \, \dd \bx
\]
and $\bPsi : \bx \mapsto \big(\psi_1(\bx), \dots, \psi_d(\bx) \big)^\top$ is a fixed mapping. We consider the problem of change point detection through the lens of online learning. To avoid time-consuming computations of $\Phi(\btheta)$, we measure the closeness of $\sfp_\btheta$  to the target densities $\sfp$ and $\sfq$ with the Fisher divergence defined as 
\begin{align*}
    \F(\sfp, \sfp_\btheta)
    &
    = \frac12 \int\limits_{\cX} \left\| \bnabla \log \sfp(\bx) - \bnabla \log \sfp_\btheta(\bx) \right\|^2 \sfp(\bx) \, \dd \bx
    \\&
    = \frac12 \E_{\bX \sim \sfp} \left\| \bnabla \log \sfp(\bX) - \nabla \bPsi(\bX)^\top \btheta \right\|^2.
\end{align*}
Using Green's first identity, we can rewrite the right-hand side in the following form:
\begin{align*}
    &
    \frac12 \E_{\bX \sim \sfp} \left\| \bnabla \log \sfp(\bX) - \nabla \bPsi(\bX)^\top \btheta \right\|^2
    = \frac12  \E_{\bX \sim \sfp} \left( \left\| \nabla \bPsi(\bX)^\top \btheta \right\|^2 + 2 \Delta \bPsi(\bX)^\top \btheta + \left\| \bnabla \log \sfp(\bX) \right\|^2 \right).
\end{align*}
Similarly, it holds that 
\[
    \F(\sfq, \sfp_\btheta) - \frac12 \E_{\bX \sim \sfq} \left\| \bnabla \log \sfq(\bX) \right\|^2
    = \frac12  \E_{\bX \sim \sfq} \left( \left\| \nabla \bPsi(\bX)^\top \btheta \right\|^2 + 2 \Delta \bPsi(\bX)^\top \btheta \right).
\]
For any $t \in \{1, \dots, T\}$, let us consider
\begin{equation}
    \label{eq:ellt}
    \ell_t(\btheta) = \frac12 \btheta^\top A_t \btheta - \bb_t^\top \btheta,
    \quad \text{where} \quad
    A_t = \nabla \bPsi(\bX_t) \nabla \bPsi(\bX_t)^\top + \gamma I_d 
    \text{ and }
    \bb_t = -\Delta \bPsi(\bX_t).
\end{equation}
Here $\Delta$ stands for the Laplacian operator applied to the vector-valued function $\bPsi$ in a componentwise fashion. Regardless the distribution of $\bX_t$, the loss $\ell_t(\btheta)$ is an unbiased estimate of
\[
    \frac12  \E_{\bX_t} \left( \left\| \nabla \bPsi(\bX_t)^\top \btheta \right\|^2 + 2 \Delta \bPsi(\bX_t)^\top \btheta \right).
\]
We play the game of online prediction with expert advice with the loss functions~\eqref{eq:ellt} and compare the cumulative losses suffered by the exponentially weighted average and the fixed share forecasters. If $t \leq \tau^*$, then $\widehat L_{1:t}^{EW}$ and $\widehat L_{1:t}^{FS}$ will not differ too much. However, the situation dramatically changes when $t$ exceeds $\tau^*$. While the exponentially weighted average sticks to the expert $\btheta_{1:\tau^*}^\circ \in \argmin_{\btheta} L_{1:\tau^*}(\btheta)$ successfully performed during the first $\tau^*$ rounds, the fixed share algorithm rapidly switches to a new expert with a small post-change cumulative loss $L_{\tau^* + 1:t}(\btheta)$. For this reason, the difference between $\widehat L_{1:t}^{EW}$ and $\widehat L_{1:t}^{FS}$ starts growing right after $\tau^*$. When it exceeds a predefined threshold, we report a change point occurrence.
We provide a pseudocode of the described online change point detection algorithm below.

\medskip
\begin{tcolorbox}[enhanced] 
\begin{spacing}{0.3}
    \begin{Alg}[Score-based change point tracking]
    \hfill
    \label{alg:score-based}
    \begin{itemize}
        \item \textbf{Input:} a non-increasing sequence $\{\eta_t : 1 \leq t \leq T\}$ of positive numbers, a regularization parameter $\lambda > 0$,  a shifting parameter $\alpha \in [0, 1]$, and a threshold $\z > 0$.
        \item \textbf{Initialization:} $\widehat S_0 = 0$.
        \item \textbf{For} $t = 1, 2, \dots, T$ \textbf{do} the following.
        \begin{enumerate}
            \item Compute the prediction $\widehat \btheta{}_t^{EW} = \widehat \btheta_{1:t-1}(\eta_t)$ according to~\eqref{eq:ew_st}.
            \item Calculate the prediction $\widehat \btheta{}_t^{FS} = \widetilde \btheta_{t}(\eta_t)$ according to~\eqref{eq:widetilde_theta} and~\eqref{eq:v} (see Lemma~\ref{lem:v}).
            \item Suffer the losses $\ell_t(\widehat \btheta{}_t^{EW})$ and $\ell_t(\widehat \btheta{}_t^{FS})$ with $\ell_t$ defined in~\eqref{eq:ellt}.
            \item Compute the test statistic:
                \[
                    \widehat{S}_t = \widehat{L}_{1:t}^{EW} - \widehat{L}_{1:t}^{FS}
                    = \widehat{S}_{t - 1} + \left(\ell_t(\widehat \btheta{}_t^{EW}) - \ell_t(\widehat \btheta{}_t^{FS}) \right).
                \]
            \item If $\widehat{S}_t > \z$, terminate the procedure and report the change point occurrence.
        \end{enumerate}
        \item \textbf{Return.}
    \end{itemize}
    \end{Alg} 
    \end{spacing}
\end{tcolorbox}
\medskip

Let us elaborate on the complexity of Algorithm~\ref{alg:score-based}. In a general case $\eta_1 \geq \dots \geq \eta_T$, one needs $\cO(d^3 t)$ operations to compute $\widehat \btheta_{s:t - 1}(\eta_t)$ and  $Z_{s:t - 1}(\eta_t)$ for all $s \in \{1, \dots, t - 1\}$, where $\widehat \btheta_{1:t - 1}(\eta_t) = \widehat \btheta{}_t^{EW}$. Here we assume that one needs $\cO(d^2)$ operations to calculate $\sum_{j = s}^t A_j$ and $\sum_{j = s}^t \bb_j$ based on the sums $\sum_{j = s}^{t - 1} A_j$ and $\sum_{j = s}^{t - 1} \bb_j$ computed during the previous iterations. The most time-consuming part is to compute $V_{t - 1}(\eta_t)$ using the recurrence relation~\eqref{eq:v}. That requires recalculating  $Z_{s:k}(\eta_t)$ for all $1 \le s \le k \le t-1$, so the cost is  $\cO(d^3 t^2)$ operations. The fixed share prediction $\widehat \btheta{}_t^{FS}$ then requires additional $\cO(dt)$ operations (vector sums), which is dominated by the $\cO(d^3 t)$ cost of the earlier steps. Computation of the losses  $\ell_t(\widehat \btheta{}_t^{EW})$, $\ell_t(\widehat \btheta{}_t^{FS})$ requires \(\cO(d^2)\) operations and the test statistic $\widehat{S}_t$ costs $\cO(1)$, since the cumulative losses from previous rounds are stored. Hence, the total runtime of Algorithm~\ref{alg:score-based} is $\cO(d^3 T^3)$. The computational complexity is much lower when we use the constant learning rate $\eta_1 = \ldots = \eta_T = \eta$. In this case, we can compute $V_{t - 1}(\eta)$ in just $\cO(t)$ operations using the values $V_{t - 2}(\eta), \dots, V_{1}(\eta)$ obtained in the previous rounds. This leads to $\cO(d^3 t)$ operations per $t$-th round and the total runtime $\cO(d^3 T^2)$.

We proceed with theoretical properties of the test statistic $\widehat S_t = \widehat L_{1:t}^{EW} - \widehat L_{1:t}^{FS}$. We impose the following assumptions on the outcomes $(A_t, \bb_t)$, $1 \leq t \leq T$.
\begin{As}
    \label{as:subexp}
    Let \(\overline{A}_t = \E A_t\) be a symmetric positive definite matrix. For some \(B > 0\) for all \(t = 1, \dots, T\) we assume that
    \begin{align*}
        \sup_{\|\bu\| = 1} \left\| \bu^\top \big(\overline{A}_t\big)^{-1/2} A_t \big(\overline{A}_t\big)^{-1/2} \bu \right\|_{\psi_1} \le B
        \quad \text{and} \quad
        \sup_{\|\bu\| = 1} \left\| \bu^\top \big(\overline{A}_t\big)^{-1/2} \bb_t \right\|_{\psi_1} \le B.
    \end{align*}
\end{As}
In our experiments, we usually choose $\bPsi(\bx) = (x_1, \dots, x_d, x_1^2, \dots, x_d^2)^\top$ (see Section \ref{sec:numerical} and Appendix \ref{sec:numerical_appendix}). In this case, Assumption \ref{as:subexp} will be satisfied if $\bX_t$ is a sub-Gaussian random vector. This is a sufficiently mild assumption on the underlying densities $\sfp$ and $\sfq$.

In the pre-change regime (that is, when $\bX_1, \dots, \bX_t$ are i.i.d. random vectors), the cumulative losses $\widehat L_{1:t}^{EW}$ and $\widehat L_{1:t}^{FS}$ will be close to each other. This will result into moderate values of the test statistic $\widehat S_t$. To be more specific, the following high-probability bound holds.

\begin{Th}
    \label{th:main_rl}
    Grant Assumption \ref{as:subexp}. For any $\delta \in (0, 1/7)$, let us define
    \begin{equation}
        \label{eq:r_stationary}
        \ttR = \left\|\big(\overline A_1\big)^{-1/2} \overline \bb_1 \right\| \vee \left( \frac{4B \big\| \overline{A}_1 \big\|}{\gamma} \big( d \log 6 + 2 \log \tau^* + \log(3/\delta) \big) \right).
    \end{equation}
    Assume that the learning rates $\eta_1 \geq \dots \geq \eta_{\tau^*}$ are chosen in such a way that
    \begin{equation}
        \label{eq:eta_condition_stationary}
        4 \ttR B \big(1 \vee 4 \gamma^2 \ttR \big) \big( d \log 6 + \log \tau^* + \log(3/\delta) \big) \leq \frac1{\eta_{\tau^*}}.
    \end{equation}
    Then, with probability at least $(1 - 7\delta)$, the test statistic $\widehat{S}_t = \widehat{L}_{1:t}^{EW} - \widehat{L}_{1:t}^{FS}$ satisfies the inequality
    \begin{align}
        \label{eq:test_stat_stationary}
        \widehat S_t
        &\notag
        \leq \frac{\lambda \, \ttR^2}{2 \big\| \overline A_1 \big\| \eta_t}
        + \frac{d}{2\eta_t} \log\left(1 + \frac{\gamma B \, \ttR \, \eta_t \tau^*}{\lambda} \right)
        \\&\quad
        + 4B \left(3e \ttR^2 \log(8B) + 4B\right) \log(\tau^*/\delta)
        + 12B \, \ttR \ttZ (1 + \ttR) \log(\tau^*/\delta)
    \end{align}
    simultaneously for all $t \in \{1, \dots, \tau^*\}$, where
    \[
        \ttZ = 1 \vee \log \frac{e(1 + \ttR)\sqrt{2B}}{\sqrt{3e \ttR^2 \log(8B) + 4B}}.
    \]
\end{Th}

Theorem \ref{th:main_rl} claims that in the pre-change regime
\[
    \widehat S_t
    \lesssim \ttR^2 + \log(1 + \ttR \tau^*) + \ttR^2 \log(\tau^*/\delta)
    \lesssim \log^3(\tau^*/\delta)
    \quad \text{for all $1 \leq t \leq \tau^*$}
\]
with high probability. In other words, when $\bX_1, \dots, \bX_t$ are i.i.d. random vectors, the value of $\widehat S_t$ does not exceed $\cO(\log^3 \tau^*)$. According to our numerical experiments (see Appendix \ref{sec:numerical_appendix}), its actual behaviour is even more optimistic. For instance, in the experiment with synthetic Gaussian sequence, the value of the test statistic is very close to zero. However, the situation drastically changes when a change point occurs.

\begin{Th}
    \label{th:main_dd}
    Grant Assumption \ref{as:subexp} and set $\alpha = 1/T$ in the fixed share algorithm. For any $t \in \{1, \dots, T\}$ define $\overline{A}_t = \E A_t$, $\overline \bb_t = \E \bb_t$, $\overline{\ell}_t(\btheta) = \E_{\bX_t} \ell_t(\btheta)$, and 
    \[
        \btheta_t^\star
        = \argmin_{\btheta \in \R^d} \overline{\ell}_t(\btheta)
        = \argmin_{\btheta \in \R^d} \E_{\bX_t} \ell_t(\btheta)
        = \big( \overline{A}_t \big)^{-1} \overline{\bb}_t.
    \]
    For any $\delta \in (0, 1/8)$, let us introduce
    \begin{equation}
        \label{eq:r_change_point}
        \ttR = \left\|\big(\overline A_1\big)^{-1/2} \overline \bb_1 \right\| \vee \left( \frac{4B}{\gamma} \left( \big\| \overline{A}_1 \big\|^{1/2} \vee \big\| \overline{A}_{\tau^* + 1} \big\|^{1/2} \right) \big( d \log 6 + 2 \log \tau^* + \log(3/\delta) \big) \right).
    \end{equation}
    Assume that the learning rates $\eta_1 \geq \dots \geq \eta_{\tau^*}$ are chosen in such a way that
    \begin{equation}
        \label{eq:eta_condition_change_point}
        4 \ttR B \big(1 \vee 4 \gamma^2 \ttR \big) \big( d \log 6 + \log \tau^* + \log(3/\delta) \big) \leq \frac1{\eta_{\tau^*}}.
    \end{equation}
    Then, with probability at least $(1 - 8\delta)$, the test statistic $\widehat{S}_t = \widehat{L}_{1:t}^{EW} - \widehat{L}_{1:t}^{FS}$ satisfies the inequality
    \begin{align}
        \label{eq:test_stat_change_point}
        \widehat S_t
        &\notag
        \geq \sum\limits_{j = \tau^* + 1}^t \E_{\bX_j} \left\| \bnabla \log \sfp_{\widehat\btheta{}_j^{EW}}(\bX_j) - \bnabla \log \sfp_{\btheta_{\tau^* + 1}^\star}(\bX_j) \right\|^2
        \\&\quad\notag
        + \gamma \sum\limits_{j = \tau^* + 1}^t \left\| \widehat\btheta{}_j^{EW} - \btheta_{\tau^* + 1}^\star \right\|^2 - \frac{1 + \log T}{\eta_t} - \frac{\lambda \, \ttR^2}{2 \eta_t} \left( \frac1{\big\| \overline A_1 \big\|}
        + \frac1{\big\| \overline A_{\tau^* + 1} \big\|} \right)
        \\&\quad
        - \frac{d}{2\eta_t} \log\left(1 + \frac{\gamma B \, \ttR \, \eta_t \tau^*}{\lambda} \right)
        - \frac{d}{2\eta_t} \log\left(1 + \frac{\gamma B \, \ttR \, \eta_t (T - \tau^*)}{\lambda} \right)
        \\&\quad\notag
        - 4B \left(3e \ttR^2 \log(8B) + 4B\right) \log(T/\delta)
        - 12B \, \ttR \ttZ (1 + \ttR) \log(T/\delta)
    \end{align}
    simultaneously for all $t \in \{\tau^* + 1, \dots, T\}$, where
    \[
        \ttZ = 1 \vee \log \frac{e(1 + \ttR)\sqrt{2B}}{\sqrt{3e \ttR^2 \log(8B) + 4B}}.
    \]
\end{Th}

Let us elaborate on the expression in the right-hand side of \eqref{eq:test_stat_change_point}. If $\tau^*$ is sufficiently large, then the value of $\smash{\widehat \btheta{}_j^{EW}}$ will be close to $\btheta_1^\star$. Moreover, it will take significant time before the exponentially weighted average $\smash{\widehat \btheta{}_j^{EW}}$ forgets the pre-change history and moves towards  $\btheta_{\tau^* + 1}^\star$. Hence, the first two terms in the right-hand side of \eqref{eq:test_stat_change_point} will grow rapidly while the other ones are of just logarithmic order.

The proofs of Theorems \ref{th:main_rl} and \ref{th:main_dd} are moved to Appendix \ref{sec:th_proofs}. 
The section starts with general ideas and proof insights. Our approach is based on combination of regret bounds for the exponentially weighted average and fixed share forecasters with recent results on concentration of unbounded martingales.

\section{Numerical Experiments} \label{sec:numerical}

We conduct an empirical study to evaluate the performance of Algorithm~\ref{alg:score-based}. Computational details concerning the exponentially weighted average forecaster and the fixed share forecaster, as well as threshold tuning procedure and the experiments on artificially generated Gaussian sequences and two real-world time series are presented in Appendix~\ref{sec:numerical_appendix}. 
In this section we discuss the change point detection procedures chosen for comparison and demonstrate the results on a human activity detection data set. 

For Algorithm~\ref{alg:score-based} during all of the trials we use $\bPsi(\bx) = (x_1, \dots, x_d, x_1^2, \dots, x_d^2)^\top$. Concerning the parameter tuning, the regularization constants $\lambda > 0, \gamma > 0$, the shifting constant $\alpha \in [0, 1]$ and the learning rate \(\eta_t\) are chosen empirically. To evaluate the procedure as objectively as possible, we control the rate of false alarms, the number of undetected change points and track the average detection delay. 

We compare Algorithm~\ref{alg:score-based} with four non-parametric online change point detection procedures. The first one is based on the Follow-the-Leading-History (FLH) approach presented in \citet[Section 3]{hazan07b}. As discussed before, FLH aggregates a growing set of online prediction algorithms, seen as experts, and tracks the best one using a scheme adapted from \citet{herbster98}. In the experiments we consider Algorithm~\ref{alg:score-based}, where we calculate the FLH prediction instead of  \(\widehat \btheta{}_t^{FS}\) and denote this competitor as FLH. Hyperparameters \(\lambda, \gamma, \eta\) and \(\alpha\) throughout the experimental study were tuned independently for Algorithm~\ref{alg:score-based} and FLH. We would like to emphasize that \citeauthor{hazan07b} have developed an advanced implementation of FLH, which does not take into account the predictions of all experts, instead operating on subsets of diverse experts where members within each subset are similar. However, to ensure a fair comparison with our approach based on computational effort, we consider the basic method to be more appropriate.

This study also considers other several powerful methods. The Kullback–Leibler importance estimation procedure (KLIEP), proposed in \citet{sugiyama08}, focuses on direct importance estimation, that is, the ratio of the pre- and post-change density functions, without explicitly estimating neither of them. The method uses Gaussian kernels as basis functions for modeling the importance.  Another non-parametric approach, covered in \citet{li15}, 
uses a kernel-based estimate of the squared maximum mean discrepancy between the pre- and post-change distributions as the test statistic for online change point detection. Both KLIEP and the kernel change point detector with M-statistic require maintaining a sliding window to test each timestamp as a potential change point. Too large or too small window size may lead to a longer detection delay, as the performance of the algorithms strongly depends on the amount of data being processed. Both the kernel bandwidth and window size are adjustable hyperparameters. Additionally, we compare Algorithm~\ref{alg:score-based} to the fast change point detection procedure based on contrastive approach (FALCON), described in \citet{goldman23}. The main idea behind the method is to maximize the cross-entropy between the pre- and post-change distributions using a discriminator function that depends on the log ratio of the corresponding densities. As we mentioned earlier, FALCON uses ideas from online convex optimization to reduce computational complexity.
The code for the experiments is available on \href{https://github.com/lipperr/score_based_change_point_detection}{GitHub}\footnote{https://github.com/lipperr/score\_based\_change\_point\_detection}.

\subsection{Activity Detection}

This experiment aims to test the effectiveness of change point detection algorithms in recognizing changes in human activity using wearable sensors. The WISDM \citep{weiss19} data set consists of 3-dimensional data from a smartphone accelerometer sampled at 20 Hz. Throughout the time series, that was used for the trial, the subject's activity changed 16 times, leading to 16 potential change points in the data. 

To prepare the data set, we selected every 20th observation to reduce its length, while still maintaining more than 3,000 data points. Four stationary segments were isolated to set algorithm thresholds and define normalization factors. We divided the test of the data into training and testing sets with nine and seven change points respectively. Both training and test data were normalized per axis using the maxima from the stationary segments. The resulting observations are shown in Figure \ref{fig:wisdm_data}. The goal of this experiment was to detect these annotated change points using selected algorithms as quickly as possible.

\begin{table}[ht]
    \centering
    \begin{tabular}{lcccc}
         \textbf{Method} & $\z$ & \textbf{PARAMETER}& \textbf{FA} & \textbf{DD} \\
    \hline 
        Algorithm~\ref{alg:score-based} & $17.15$ & $\alpha=0.001$, $\lambda=0.1$  $\eta= 0.2$, $\gamma=0.3$ & $1$ & $\mathbf{15.43\pm10.39}$\\
    \hline 
        FLH & 517.78 & $\alpha=1 $, $\lambda=5$, $\eta=1.2$, $\gamma=0.001$ & 1 & $ 16.29\pm20.15 $\\
    \hline
        FALCON  &$18.5$ & $p = 1$, $\beta=0.1$ & $1$ & $\mathbf{18.43 \pm 7.94}$\\
    \hline
        KLIEP & $19.55$& $ws=50, b=0.9$& $1$ & $20.7 \pm 12.3$\\
    \hline
        M-statistic & $8.75$ & $ws=50, b=2.5$ & $2$ & $25.9 \pm 13.7$\\
    \hline
    \end{tabular}
    \caption{The thresholds $\z$, the values of hyperparameters, the average detection delays (DD), and the number of false alarms (FA) of Algorithm~\ref{alg:score-based}, FLH,  FALCON with Hermite polynomials, KLIEP and the kernel change point detector with M-statistic on the WISDM (human activity detection) data set. The two best results are boldfaced.}
    \label{tab:wisdm_params_fa_dd}
\end{table}

\begin{figure}[ht]
    \centering {{\includegraphics[width=0.8\textwidth]{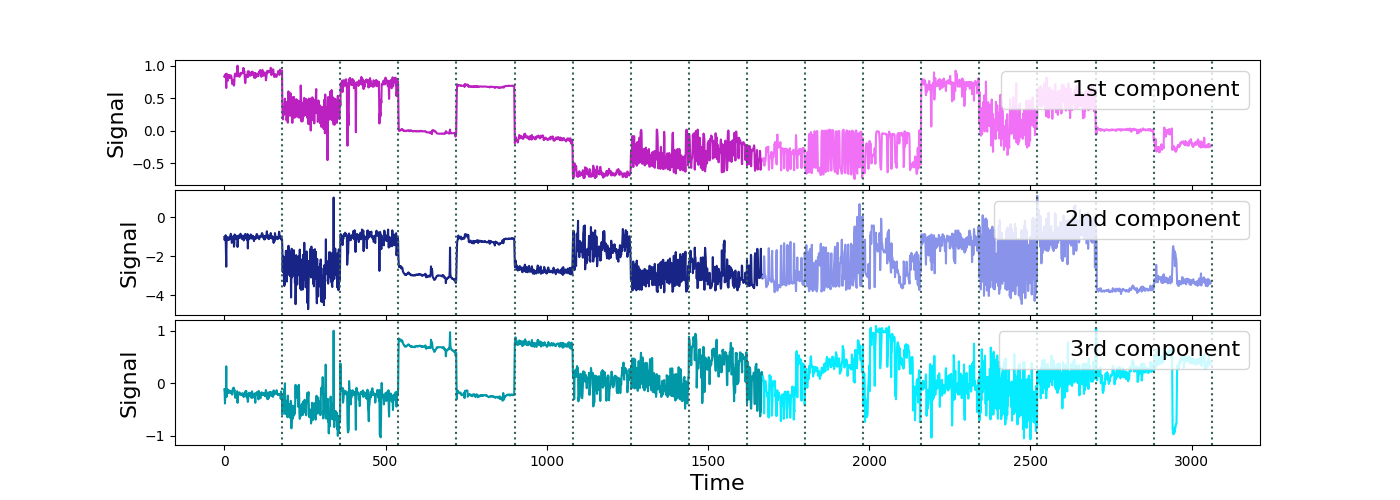} }}
    \caption{Activity detection data set (WISDM) after preprocessing. The dotted lines correspond to the annotated change points. Validation and test parts of the series are depicted in darker and lighter colors respectively.}
    \label{fig:wisdm_data}
\end{figure}

As mentioned before, we compared our approach to FLH, FALCON, KLIEP, and the kernel-based change point detector with M-statistic. Table~\ref{tab:wisdm_params_fa_dd} presents the thresholds, parameter values, the number of false alarms and the average detection delays for these algorithms. Based on the results, Algorithm~\ref{alg:score-based} outperformed the other methods in terms of shorter detection delays and fewer false positives.

\bibliographystyle{abbrvnat}
\bibliography{references}

\appendix

\newpage
\tableofcontents

\newpage

\section{Numerical Experiments}
\label{sec:numerical_appendix}

\subsection{Computational details}
In this section we introduce pseudocodes for the online prediction algorithms employed in Algorithm~\ref{alg:score-based} and briefly discuss the implementation details. Lemma~\ref{lem:ew} provides an explicit form of the exponentially weighted average. Thus, the computation of the EW forecast is straightforward. We outline it in Algorithm~\ref{alg:ew}.

\medskip
\begin{tcolorbox}[enhanced]
    \begin{Alg}[Exponentially weighted forecaster]
    \hfill
    \label{alg:ew}
    \begin{itemize}
        \item \textbf{Input:} a non-increasing sequence $\{\eta_t : 1 \leq t \leq T\}$ of positive numbers and regularization parameters $\lambda > 0, \gamma > 0$.
        \item \textbf{For} $t = 1, 2, \dots, T$ \textbf{do} the following.
        \begin{enumerate}
            \item Compute the prediction $\widehat \btheta{}_t^{EW} = \widehat \btheta_{1:t-1}(\eta_t)$ according to~\eqref{eq:ew_st}:
            \[
                \widehat \btheta{}_t^{EW}
                = \widehat \btheta_{1:t-1}(\eta_t)
                = \left( \sum\limits_{j = 1}^{t - 1} A_j + \frac{\lambda}{\eta_t} I_d \right)^{-1} \sum\limits_{j = 1}^{t - 1} \bb_j.
            \]\vspace{-0.5cm}
            \item Suffer the loss $\ell_t(\widehat \btheta{}_t^{EW})$.
        \end{enumerate}
        \item \textbf{Return.}
    \end{itemize}
    \end{Alg}
\end{tcolorbox}
\medskip

Regarding the fixed share algorithm, Lemmata~\ref{lem:v} and~\ref{lem:z} introduce closed-form expressions of the the recurrence relations for the auxiliary variables \(Z_{s:t}, V_t\) and prediction \(\widehat \btheta^{FS}_t\).
At each time step \(t\) we calculate \(\widehat \btheta_{s:t-1}(\eta_t)\) and \(Z_{s:t-1}(\eta_t)\) for all \(1 \le s \le t-1\). Recurrence relation for \(V_{t-1}(\eta_t)\) requires the values \(V_{s}(\eta_s)\) and \(Z_{k:s-1}(\eta_t)\) for all \(1 \le k \le s-1 < t-1\). In the case of a constant learning rate \(\eta_1= \ldots = \eta_t = \eta\), we store the previously computed variables for reuse in subsequent rounds. With a varying \(\eta_t\), they need to be recomputed on each round. This reduces the space complexity, but significantly increases the running time of the procedure. The complexity of the algorithm was specified at the end of Section~\ref{sec:application} for both scenarios. 
Ultimately, the learning rate is chosen to be constant in all of the experimental setups. We summarize the fixed share algorithm below. To the best of our knowledge, Algorithm~\ref{alg:fs} has not appeared in the literature.

\medskip
\begin{tcolorbox}[breakable, enhanced]
    \begin{Alg}[Fixed share forecaster]
    \hfill
    \label{alg:fs}
    \begin{itemize}
        \item \textbf{Input:} a non-increasing sequence $\{\eta_t : 1 \leq t \leq T\}$ of positive numbers, regularization parameters $\lambda > 0, \gamma > 0$, and a shifting parameter $\alpha \in [0, 1]$.
        \item Predict $\widehat \btheta{}_1^{FS} = \widetilde \btheta_1(\eta_1) = \bzero$ and suffer the loss $\ell_1(\bzero)$.
        \item Compute $\widehat\btheta_{1:1}(\eta_2)$ and $Z_{1:1}(\eta_2)$ according to~\eqref{eq:ew_st} and~\eqref{eq:z_st_explicit}, respectively.
        \item Predict $\widehat \btheta{}_2^{FS} = \widetilde \btheta_2(\eta_2) = (1 - \alpha) \widehat\btheta_{1:1}(\eta_2)$ and suffer the loss $\ell_1(\widehat \btheta{}_1^{FS})$.
        \item \textbf{For} $t = 3, 4, \dots, T$ \textbf{do} the following.
        \begin{enumerate}
            \item Compute $Z_{s:t - 1}(\eta_t)$ for all $s \in \{1, \dots, t - 1\}$ according to~\eqref{eq:z_st_explicit}.
            \item Find $V_{t - 1}(\eta_t)$ using the recurrence relation~\eqref{eq:v}.
            \item Compute $\widehat \btheta_{s:t - 1}(\eta_t)$ for all $s \in \{1, \dots, t - 1\}$ according to~\eqref{eq:ew_st}.
            \item Compute the prediction $\widehat \btheta{}_t^{FS} = \widetilde \btheta_{t}(\eta_t)$ according to~\eqref{eq:widetilde_theta}.
            \item Suffer the loss $\ell_t(\widehat \btheta{}_t^{FS})$.
        \end{enumerate}
        \item \textbf{Return.}
    \end{itemize}
    \end{Alg}
\end{tcolorbox}
\medskip

\subsection{Synthetic Data Sets}
\label{sec:synthetic}

We move to the empirical study of the algorithm presented in this paper.
The experiments conducted on Gaussian sequences were designed to assess the effectiveness of the procedure in identifying alterations in the mean and variance of the data distribution. To evaluate the performance of the method thoroughly, we implemented and tested several distinct experimental setups. They enabled us to validate the procedure and fine-tune its parameters, thereby enhancing our understanding of its capabilities and limitations in detecting changes in statistical properties.
Settings of the experiments are listed below.

\medskip\noindent
\textbf{Example 1: mean shift detection in a univariate sequence.}\quad
We generated a series of $T = 300$ i.i.d. observations of Gaussian random variable. First $\tau = 150$ samples were drawn from $\cN(0, \sigma^2)$ with $\sigma=0.2$, the rest were derived from the same distribution, shifted in the expectation by 2 standard deviations: $\cN(\mu, \sigma^2), \; \mu=2\sigma$.

\medskip\noindent
\textbf{Example 2: variance shift detection in a univariate sequence.} The sequence of i.i.d. observations of length $T = 300$ came from Gaussian distribution. The first $150$ samples were from $\cN(0, \sigma_1)$ with $\sigma_1=0.1$ and the last $150$ from the scaled distribution $\cN(0, \sigma_2), \; \sigma_2=0.3$.

\medskip\noindent
\textbf{Example 3: mean shift detection in a multivariate sequence.} With data dimension set to $d=3$, we took $T = 300$ i.i.d. observations of Gaussian vector with non-correlated components. Half of the series was taken from $\cN(\bzero, \Sigma)$ with $\Sigma=\text{diag}(\sigma_1, \sigma_2, \sigma_3)$, $\sigma_1 = 0.1$, $\sigma_2 =0.2$, $\sigma_3 = 0.3$, the rest were drawn from the shifted in mean distribution $\cN(\bmu, \Sigma)$, $\bmu = 3 (\sigma_1, \sigma_2, \sigma_3)^\top$.

\medskip\noindent
\textbf{Example 4: variance shift detection in a multivariate sequence.} As before, the data consists of $300$ i.i.d. observations of 3d Gaussian vector with non-correlated components. First $\tau=150$ values were drawn from $\cN(\bzero, \Sigma)$ with $\Sigma=\text{diag}(\sigma_1, \sigma_2, \sigma_3)$, $\sigma_1 = 0.1$, $\sigma_2 =0.2$, $\sigma_3 = 0.3$, the rest were taken
from the same distribution only with standard deviation scaled by 3 component-wise: $\cN(\bzero, 9 \Sigma)$.

For our artificial experiments we used a threshold tuning strategy as described in \citep{puchkin23}. We generated $T=150$ i.i.d. samples from the corresponding initial distribution J times and ran the algorithm on each sequence to
obtain the threshold value $\z = \underset{1\le j\le J}{\text{max}}  \underset{1\le t \le T}{\text{max}} \widehat{S}_t^{(j)}$. This approach ensures that the running length of Algorithm \ref{alg:score-based} does not exceed \(T=150\). If we run the procedure in a stationary regime and obtain the corresponding test statistics \(\{\widehat{S}_t\}^T_{t=1}\), the choice of \(\z\) guarantees
\[
    \p \left( \underset{1\le t \le T}{\text{max}} \widehat{S}_t > \z \right) = \frac1{J + 1},
\]
provided that there are no change points. 

For each example we present thresholds and the values of hyperparameters for the competing algorithms in Table~\ref{tab:gaussian_params}.
The hyperparameters were tuned to keep the false alarm rate at zero while still detecting all of the change points as quickly as possible.

\begin{table}[ht]
\centering
\small{\begin{tabular}{lllllllll}
    \textbf{Method}& \multicolumn{2}{c}{\textbf{ Example 1}} & \multicolumn{2}{c}{\textbf{Example 2} } & \multicolumn{2}{c}{\textbf{ Example 3}} & \multicolumn{2}{c}{\textbf{ Example 4}} \\
    \hline
    &  parameters & $\z$ & parameters & $\z$ & parameters & $\z$ &parameters  & $\z$  \\
    \hline
    \makecell[tl]{Algorithm~\ref{alg:score-based}} & 
    \makecell[tl]{$\alpha=10^{-3}$ \\ $\lambda=0.05$ \\ $\eta=0.1$ \\ $\gamma=0.1$} & 0.74 & 
    \makecell[tl]{$\alpha=10^{-4}$ \\ $\lambda=0.1$ \\ $\eta=0.2$ \\ $\gamma=0.1$} &  0.0 & 
    \makecell[tl]{$\alpha=10^{-6}$ \\ $\lambda=1$ \\ $\eta=0.2$ \\ $\gamma=0.1$} & 0.0  & 
    \makecell[tl]{$\alpha=10^{-4}$ \\ $\lambda=0.1$ \\ $\eta=0.2$ \\ $\gamma=0.1$} & 0.0  \\
    \hline
    \makecell[tl]{FLH}  &
    \makecell[tl]{$\alpha=0.05$ \\ $\lambda=0.5$ \\ $\eta=0.2$\\ $\gamma=10^{-6}$} & 0.0 &
    \makecell[tl]{$\alpha=0.2$ \\ $\lambda=1.5$  \\ $\eta=0.2$\\ $\gamma=10^{-6}$} & 0.05 & 
    \makecell[tl]{$\alpha=0.2$ \\ $\lambda=1.5$  \\ $\eta=0.2$\\ $\gamma=10^{-6}$} & 0.0  & 
    \makecell[tl]{$\alpha=0.5$ \\ $\lambda=1.5$  \\ $\eta=0.2$\\ $\gamma=10^{-6}$} & 0.0 \\
    \hline
    \makecell[tl]{FALCON } & 
    \makecell[tl]{$p=2$ \\ $\beta=0.3$} & 2.26 &
    \makecell[tl]{$p=3$ \\ $\beta=0.3$} & 2.87 &
    \makecell[tl]{$p=1$ \\ $\beta=0.9$} & 5.06 &
    \makecell[tl]{$p=1$ \\ $\beta=0.1$} & 6.46 \\

    \hline
    \makecell[tl]{KLIEP} & 
    \makecell[tl]{$ws=20$ \\ $b=0.3$} &  6.39 & 
    \makecell[tl]{$ws=20$ \\$b=0.1$} & 11.31 & 
    \makecell[tl]{$ws=20$ \\$b=0.4$} & 13.45 & 
    \makecell[tl]{$ws=20$ \\$b=0.2$} & 28.23 \\
    \hline
    \makecell[tl]{M-statistic}& 
    \makecell[tl]{$ws=20$ \\$b=2$} & 2.04 &
    \makecell[tl]{$ws=20$ \\$b=0.4$} & 11.27 &
    \makecell[tl]{$ws=20$ \\$b=2$} & 5.23 & 
    \makecell[tl]{$ws=20$ \\$b=2$} & 5.23\\

\end{tabular}}
\caption{ The thresholds $\z$ and the values of hyperparameters of the competing algorithms in the experiments on synthetic data sets. }
\label{tab:gaussian_params}
\end{table}

The average detection delays are shown in Table~\ref{tab:gaussian_fa_dd}. 
The two best results for each example are boldfaced. In Examples 1, the variation in the metrics was not significant, but Algorithm~\ref{alg:score-based} still managed to detect the shifts in the distribution after seeing roughly 3 observations in its best configuration. Changes in the signal variance were more difficult for FALCON and the M-statistic method to detect. 

\begin{table}[ht]
\centering
\begin{tabular}{lcccccccc}
    \textbf{Method}& \multicolumn{2}{c}{\textbf{ Example 1}} & \multicolumn{2}{c}{\textbf{Example 2} } & \multicolumn{2}{c}{\textbf{ Example 3}} & \multicolumn{2}{c}{\textbf{ Example 4}} \\
    \hline
    &  FA & DD & FA & DD & FA & DD & FA  & DD \\
    \hline
    \makecell[tl]{Algorithm~\ref{alg:score-based}}& 
    0 & $\mathbf{3.5\pm2.1}$ & 0 &  $\mathbf{3.4\pm1.9}$ & 0 & $\mathbf{1.5\pm0.7}$  & 0 & $\mathbf{1.6\pm0.5}$\\
    \hline
    \makecell[tl]{FLH}       
    & 0 & $\mathbf{4.7 \pm 1.0}$ & 0 & $\mathbf{4.0\pm2.6}$ & 0 & $\mathbf{1.2\pm0.4}$ & 0 & $\mathbf{2.0\pm0.9}$\\
    \hline
    \makecell[tl]{FALCON}     
    & 0 & $5.5 \pm 1.6$ & 0 & $7.7\pm4.6$ & 0 & $5.2\pm 1.2$  & 0 & $10.5\pm9.8$ \\
    \hline     
    KLIEP 
    & 0 & $8.0 \pm 3.2$ & 0 & $5.1 \pm 3.1$ & 0 & $4.5\pm1.7$  & 0 & $6.0\pm2.0$\\
    \hline
    M-statistic 
    & 0 & $8.5\pm3.3$ & 0 & $12.9\pm6.6$ & 0 & $5.5 \pm 0.9$ & 0 & $12.3 \pm 4.1$\\
\end{tabular}
\caption{False alarms (FA) and average detection delays (DD) of Algorithm~\ref{alg:score-based}, FLH,  FALCON, KLIEP, and the kernel change point detector with M-statistic on synthetic data sets. The two best results for each example are boldfaced.}
\label{tab:gaussian_fa_dd}
\end{table}

For multivariate data, Algorithm \ref{alg:score-based} and FLH  performed almost equally well, while other nonparametric approaches showed significantly longer detection delays. 
The behavior of the test statistics for the considered algorithms in Example 4 is depicted in Figure~\ref{fig:gaussian-plots}. 
Overall, Algorithm~\ref{alg:score-based} outshines its competitors in each example. 

\begin{figure}[ht]
    \centering
    \includegraphics[width=0.8\textwidth]{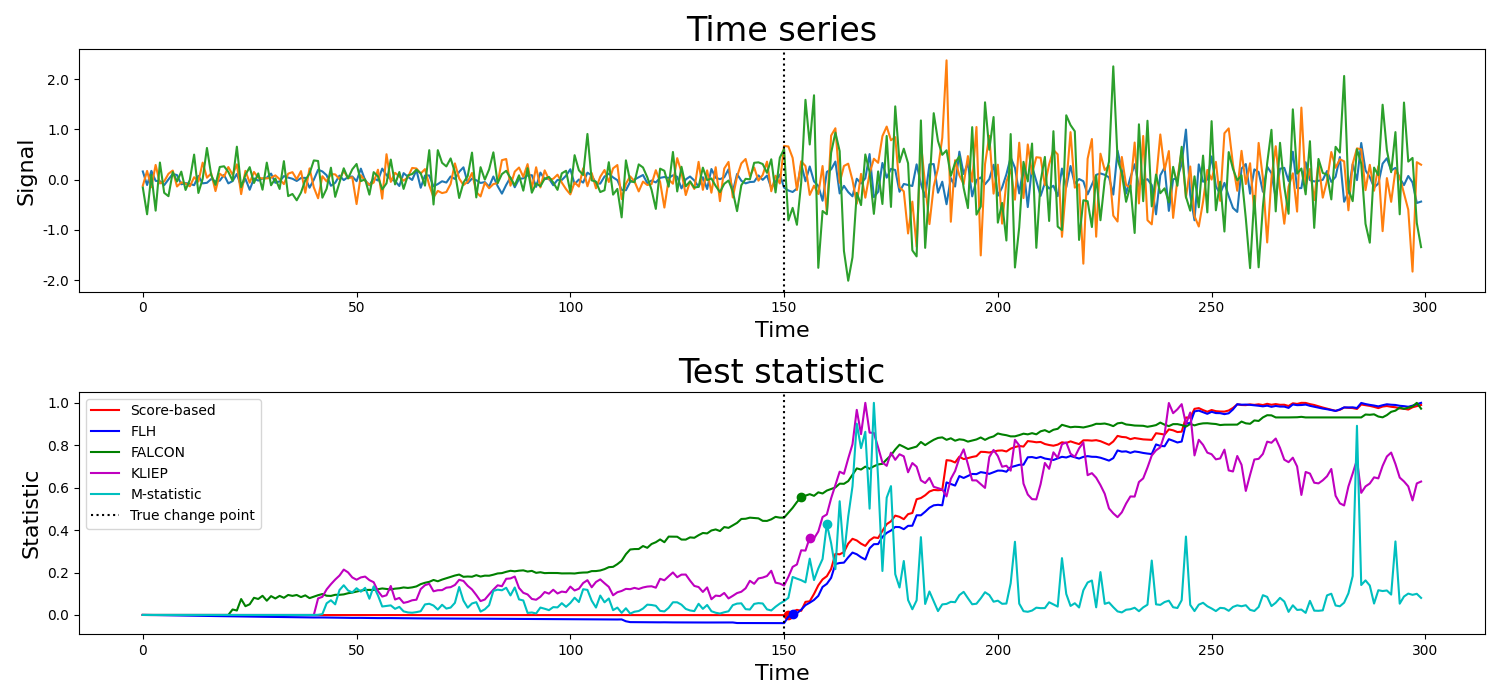}
    \caption{Example 4: variance shift detection in a multivariate Gaussian sequence. The top plot shows the generated time series with a single change point $\tau^*$ highlighted with a black dotted line. The coordinates of a multivariate sequence are defined by different colors. The bottom plot demonstrates the behavior of the test statistic for the competing methods: Algorithm~\ref{alg:score-based} (red), FLH (blue), FALCON (green), KLIEP (magenta), the kernel change point with M-statistic (cyan). Detected change points $\tau$ are marked with bold dots. Values of the test statistics were scaled for a better visualization.}
    \label{fig:gaussian-plots}
\end{figure}

\subsection{Speech Detection}

We proceed with the real-world data experiments using an audio recording data set \href{http://research.nii.ac.jp/src/en/CENSREC-1-C.html}{CENSREC-1-C}\footnote{http://research.nii.ac.jp/src/en/CENSREC-1-C.html},
which was taken from the Speech Resource Consortium (SRC), provided by the National Institute of Informatics (NII). This data set includes a clean speech recording, denoted as MAH clean, as well as versions of the same recording with added noise at different signal-to-noise ratio levels (MAH N1 SNR 20 and MAH N1 SNR 15).

To convert the recording into a more suitable format, we normalized the data. After that, we identified 10 sections that contained a single transition from silence or noise to speech. Then, we extracted every 10th observation. The first four sections were used to fine-tune the hyperparameters and thresholds, while the remaining six were used as the testing set. We determined the accurate change point values for the clean MAN data set and incorporated them into noisy iterations of recordings. Data set CENSREC-1-C SNR 20 is depicted in Figure~\ref{fig:CENSREC-plots}.

\begin{figure}[ht]
    \centering
    \includegraphics[width=0.8\textwidth]{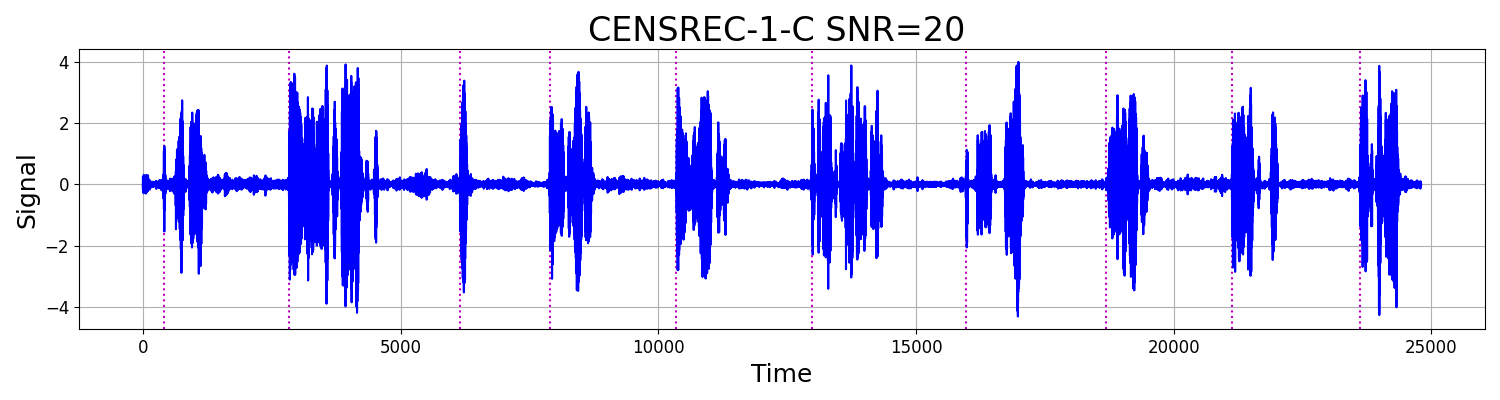}
    \caption{Data points of the CENSREC-1-C audio recording with SNR = 20. Dotted pink lines mark the annotated change points.}
    \label{fig:CENSREC-plots}
\end{figure}

\begin{table}[ht]
    \centering
    \small{\begin{tabular}{lllllll}
        \textbf{Method} &  \multicolumn{2}{c}{\textbf{CLEAN RECORD}} &  \multicolumn{2}{c}{\textbf{SNR 20}} &  \multicolumn{2}{c}{\textbf{SNR 15}} \\
        \hline
         &  parameters & $\z$ & parameters & $\z$ & parameters & $\z$ \\
        \hline
        Algorithm~\ref{alg:score-based} &  
        \makecell[tl]{$\alpha=10^{-5}$, $\lambda=0.3$\\ $\eta=0.1$, $\gamma=0.01$} & 0.0  &
        \makecell[tl]{$\alpha=10^{-5}$, $\lambda=0.7$\\ $\eta=0.5$, $\gamma=0.01$} & 418.7  &
        \makecell[tl]{$\alpha=10^{-5}$, $\lambda=1$\\ $\eta=0.1$, $\gamma=0.01$} & 0.0  \\ 
        \hline
        FLH  &  
        \makecell[tl]{$\alpha=0.1$, $\lambda=0.3$\\ $\eta=0.1$, $\gamma=0.001$ } & 0.0  &
        \makecell[tl]{$\alpha=0.6$, $\lambda=1$\\ $\eta=0.1$, $\gamma=0.001$} & 214.1  &
        \makecell[tl]{$\alpha=0.1$, $\lambda=0.3$\\ $\eta=0.2$, $\gamma=0.1$ } &  10.0 \\
        \hline
        FALCON  &
        \makecell[tl]{$p = 3$, $\beta = 0.5$} & 0.66 &
        \makecell[tl]{$p = 3$, $\beta = 0.5$} & 4.39 &
        \makecell[tl]{$p = 3$, $\beta = 1$} & 0.71 \\
        \hline 
        KLIEP & 
        \makecell[tl]{$ws = 50$, $b=0.9$} & $10^{-5}$ & 
        \makecell[tl]{$ws = 50$, $b=0.01$} & $80.59$ & 
        \makecell[tl]{$ws = 50$, $b=0.6$} & $0.21$ \\
        \hline
        M-statistic & 
        \makecell[tl]{$ws = 50$,  $b=0.25$}  & $0.11$ &
        \makecell[tl]{$ws = 50$,  $b=5$} & $0.0015$ &
        \makecell[tl]{$ws = 50$,  $b=1.2$} & $0.11$ \\
        \hline
    \end{tabular}}
    \caption{The thresholds $\z$ and the values of hyperparameters of the competing algorithms in the experiments on the CENSREC-1-C (speech recognition) data set. }
    \label{tab:CENSREC_params}
\end{table}

The values of the hyperparameters were tuned using four extracted segments, and the results are provided in Table~\ref{tab:CENSREC_params}. The algorithms were tested on the other six segments of the recording. Our procedure significantly outperforms other methods. On the clean record, M-statistic method shows a slightly faster detection, while reporting 2 false alarms. According to the results, only with KLIEP we observe false alarms occurring just a few observations prior to the actual change point in the clean data. 

\begin{table}[ht]
    \centering
    \begin{tabular}{lcccccc}
        \textbf{Method} &  \multicolumn{2}{c}{\textbf{CLEAN RECORD}} &  \multicolumn{2}{c}{\textbf{SNR 20}} &  \multicolumn{2}{c}{\textbf{SNR 15}} \\
    \hline
         &  FA & DD & FA & DD & FA & DD \\
    \hline
        Algorithm~\ref{alg:score-based}& 
         0 & $\mathbf{2.8 \pm 3.7}$ &
         0 & $\mathbf{12.0\pm18.9}$ &
         0 & $\mathbf{10.5\pm12.9}$ \\
    \hline
        FLH & 
         0 & $9.3 \pm 19.1$ &
         0 & $\mathbf{15.8 \pm 17.2}$ & 
         0 & $15.3 \pm 19.4$\\
    \hline
        FALCON  & 
         0 & $7.3 \pm 13.3$ & 
         0 & $20.8 \pm 19.0$ &
         0 & $\mathbf{12.3 \pm 11.4}$ \\
    \hline
        KLIEP &
        $1$ & $\mathbf{6.2 \pm 4.9}$  & 
        $0$ & $20.3 \pm 14.2$ & 
        $0$ & $17.0 \pm 21.3$\\
    \hline
        M-statistic & 
        $2$ & $2.5 \pm 1.5$  & 
        $0$ & $18.3 \pm 20.1$ & 
        $0$ & $16.7 \pm 16.8$\\
    \hline
    \end{tabular}
    \caption{The average detection delays (DD) and the number of false alarms (FA) of Algorithm~\ref{alg:score-based}, FLH, FALCON with Hermite polynomials, KLIEP and the kernel change point detector with M-statistic on the CENSREC-1-C recording with different noise levels. The two best results for each example are boldfaced.}
    \label{tab:new_CENSREC_dd}
\end{table}

\subsection{Room Occupancy Detection}

We applied Algorithm~\ref{alg:score-based} to detect changes in the room occupancy based on temperature, humidity, light, and the $CO_2$ level. A four-dimensional time series was obtained from the UCI repository \citep{Bache13}. The data was preprocessed in three sequential steps. First, we selected every 16th observation to reduce the length of the time series. Next, to convert a non-stationary series into a stationary one, we took the first-order difference and normalized the series by the last of the adjacent values. Additionally, we log-transformed the sequence and scaled the coordinates using the stationary parts of the time series to ensure a similar range of the data. Ultimately, the time series comprised approximately 500 data points, nine of which were labeled as change points. The change point annotation description can be found in \citep[Section Annotation Collection]{burg22}. The time series is displayed in Figure~\ref{fig:occ_data}.

\begin{figure}[ht]
    \centering {{\includegraphics[width=\textwidth]{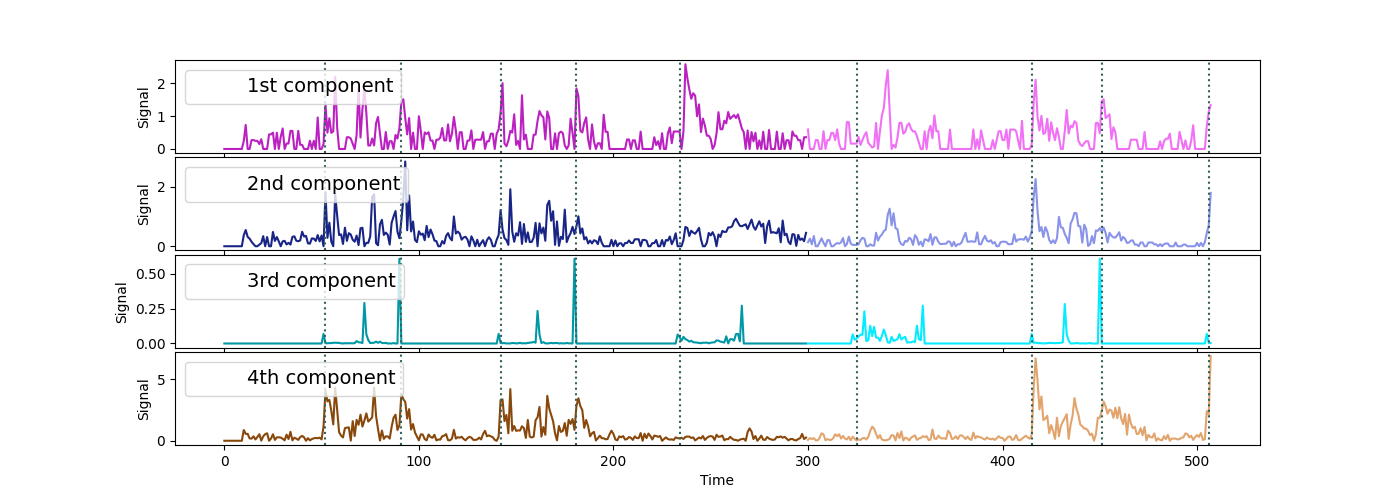} }}
    \caption{Room occupancy data set after preprocessing. The dotted lines correspond to the annotated change points. Validation and test parts of the series are depicted in darker and lighter colors respectively.}
    \label{fig:occ_data}
\end{figure}

The algorithm application pipeline differed from the preceding experiments.
Since there were several manual annotations of the change points, it is possible that some of them might not have been detected accurately, leading to incorrect change point identification in the methods' performance. To avoid false alarms caused by inaccurate annotations, we have introduced the parameter $\text{min\_diff} = 10$. This parameter controls the distance between a detected change point and an annotated one. A false alarm is not triggered if a change point is detected before the annotated one but the distance is less than min\_diff. In this case, we assume that the annotated change point was detected with zero delay.

\begin{table}[ht]
    \centering
    \begin{tabular}{lcccc}
         \textbf{Method} & $\z$ & \textbf{PARAMETER} & \textbf{FA} & \textbf{DD} \\
    \hline
        Algorithm~\ref{alg:score-based} & $2.36$ & $\alpha=10^{-5}$, $\lambda=1$, $\eta=0.5$, $\gamma=0.4$& $1$ & $\mathbf{4.0 \pm 6.9}$\\
    \hline
        FLH & $3.0$ & $\alpha=0.1$, $\lambda=0.3$, $\eta=0.8$, $\gamma=0.5$ & $2$ & $8.25 \pm 8.25$\\
    \hline
        FALCON  &$2.05$ & $p = 1$, $\beta=8$ & $1$ & $\mathbf{3.5 \pm 6.1}$\\
    \hline 
        KLIEP & 2.0 & $ws=20, b=2$ & 0 & $ 10.33\pm7.4 $\\
    \hline 
        M-statistic & 2.0 & $ws=20, b=2.5$ & 1 & $9.67\pm6.94 $\\
    \hline
    \end{tabular}
    \caption{ The thresholds $\z$, the values of hyperparameters, the average detection delays (DD), and the number of false alarms (FA)
    of Algorithm~\ref{alg:score-based}, FLH, FALCON with Hermite polynomials, KLIEP and the kernel
    change point detector with M-statistic on the room occupancy data set. The two best results are boldfaced.}
    \label{tab:occupancy_params_fa_dd}
\end{table}

We applied the algorithms to the validation part of the data set in order to fine-tune the hyperparameters. During training, we took into account several factors. First, we attempted to keep the number of undetected change points to zero. Second, we selected hyperparameters that produced the minimum average detection delay while allowing for a few false positives.
The thresholds, parameter values, false alarm rates, and average detection delays for the experiment using the Occupancy data set are presented in Table \ref{tab:occupancy_params_fa_dd}. Based on the results, FALCON detects change points slightly faster than Algorithm \ref{alg:score-based}, although both of these methods report one false positive throughout the testing phase. FLH takes almost twice as many observations on average, and has the highest number of false alarms of all the competitors. KLIEP and the kernel change point detector with M-statistic perform almost perfectly on the validation set, but in the test phase both of them miss the last change point.

\section{Proofs of the results from Section \ref{sec:online_learning}}

\subsection{Proof of Lemma~\ref{lem:ew}}
\label{sec:lem_ew_proof}

It holds that
\begin{align*}
    e^{-\eta L_{s:t}(\btheta)} \pi(\btheta)
    &
    \; \propto \; \exp\left\{-\frac{\eta}2 \sum\limits_{j = s}^t (\btheta^\top A_j \btheta - \bb_j^\top \btheta) - \frac{\lambda \|\btheta\|^2}2 \right\}
    \\&
    = \exp\left\{-\frac{\eta}2 \; \btheta^\top \left( \sum\limits_{j = s}^t A_j + \frac{\lambda}\eta I_d \right) \btheta + \btheta^\top \left(\sum\limits_{j = s}^t \bb_j\right) \right\}.
\end{align*}
Using the representation
\begin{align*}
    &
    \btheta^\top \left( \sum\limits_{j = s}^t A_j + \frac{\lambda}\eta I_d \right) \btheta - 2 \btheta^\top \left(\sum\limits_{j = s}^t \bb_j\right)
    \\&
    = \left\| \left( \sum\limits_{j = s}^t A_j + \frac{\lambda}\eta I_d \right)^{1/2} \btheta - \left( \sum\limits_{j = s}^t A_j + \frac{\lambda}\eta I_d \right)^{-1/2} \sum\limits_{j = s}^t \bb_j \right\|^2 \\&\quad
    - \left\| \left( \sum\limits_{j = s}^t A_j + \frac{\lambda}\eta I_d \right)^{-1/2} \sum\limits_{j = s}^t \bb_j \right\|^2,
\end{align*}
we obtain that
\begin{align*}
    e^{-\eta L_{s:t}(\btheta)} \pi(\btheta)
    &
    \propto \exp\left\{-\frac{\eta}2 \; \left\| \left( \sum\limits_{j = s}^t A_j + \frac{\lambda}\eta I_d \right)^{1/2} \btheta - \left( \sum\limits_{j = s}^t A_j + \frac{\lambda}\eta I_d \right)^{-1/2} \sum\limits_{j = s}^t \bb_j \right\|^2 \right\}
    \\&
    \propto \exp\left\{-\frac{\eta}2 \; \left\| \left( \sum\limits_{j = s}^t A_j + \frac{\lambda}\eta I_d \right)^{1/2} \left[ \btheta - \left( \sum\limits_{j = s}^t A_j + \frac{\lambda}\eta I_d \right)^{-1} \sum\limits_{j = s}^t \bb_j \right] \right\|^2 \right\}.
\end{align*}
Hence, the posterior measure
\[
    \frac1{Z_{s:t}} e^{-\eta L_{s:t}(\btheta)} \pi(\btheta)
\]
coincides with
\[
    \cN\left( \left( \sum\limits_{j = s}^t A_j + \frac{\lambda}\eta I_d \right)^{-1} \sum\limits_{j = s}^t \bb_j, \frac1{\eta} \left( \sum\limits_{j = s}^t A_j + \frac{\lambda}\eta I_d \right)^{-1} \right),
\]
and the prediction of the exponentially weighted forecaster is equal to
\[
    \widehat \btheta_t = \left( \sum\limits_{j = s}^t A_j + \frac{\lambda}\eta I_d \right)^{-1} \sum\limits_{j = s}^t \bb_j.
\]
\myendproof

\subsection{Proof of Lemma~\ref{lem:v}}
\label{sec:lem_v_proof}

\noindent
\textbf{Step 1: definition of $V_t(\eta)$ and preliminaries.}
\quad
Before we move to the proof of \eqref{eq:widetilde_theta} and \eqref{eq:v}, let us elaborate on the sequence $V_1(\eta), \dots, V_T(\eta)$.
For any $t \in \N$, we define $V_t(\eta)$ through a weight function $w_{t - 1}(\btheta, \eta)$:
\begin{equation}
    \label{eq:vt_recursion}
    V_t(\eta) = \int\limits_{\R^d} v_t(\btheta, \eta) \, \dd \btheta,
    \quad \text{where} \quad
    v_t(\btheta, \eta) = w_{t - 1}(\btheta, \eta) e^{-\eta \ell_t(\btheta)}.
\end{equation}
In its turn, the weight function is given by
\begin{equation}
    \label{eq:wt}
    w_0(\btheta, \eta) = \pi(\btheta),
    \quad w_t(\btheta, \eta) = (1 - \alpha) v_t(\btheta, \eta) + \alpha \pi(\btheta) V_t(\eta),
    \quad t \in \{1, \dots, T\}.
\end{equation}
The function $w_{t - 1}(\btheta, \eta)$ plays a key role in the proof of Lemma \ref{lem:v}, because of the following identity.

\begin{Prop}
    \label{prop:representation}
    For any $t \in \{1, \dots, T\}$ and any $\btheta \in \R^d$, it holds that
    \begin{align*}
        &
        \int \exp\big\{-\eta \cL_{t - 1}(\btheta_1, \dots, \btheta_{t - 1}, \btheta, \btheta_{t + 1}, \dots, \btheta_T) \big\}
        \\&\quad
        \cdot \rho(\btheta_1, \dots, \btheta_{t - 1}, \btheta, \btheta_{t + 1}, \dots, \btheta_T) \, \dd \btheta_1 \dots \dd \btheta_{t - 1} \dd \btheta_{t + 1} \dots \dd \btheta_T
        = w_{t - 1}(\btheta, \eta).
    \end{align*}
\end{Prop}
The proof of Proposition~\ref{prop:representation} is deferred to Section~\ref{sec:prop_representation_proof}. It is similar to \citet[Theorem 5.1]{cesa-bianchi06a} and allows us to simplify the formula~\eqref{eq:exponential_weights_compound} significantly. Indeed, the proposition yields that
\begin{equation}
    \label{eq:widetilde_theta_simplified}
    \widetilde \btheta_t(\eta) = \frac{1}{W_{t - 1}(\eta)} \int\limits_{\R^d} \btheta_t w_{t - 1}(\btheta_t, \eta) \dd \btheta_t,
    \quad \text{where} \quad
    W_{t - 1}(\eta) = \int\limits_{\R^d} w_{t - 1}(\btheta_t, \eta) \dd \btheta_t.
\end{equation}
The last expression can be simplified even further. Since $w_t(\btheta, \eta) = (1 - \alpha) v_t(\btheta, \eta) + \alpha \pi(\btheta) V_t(\eta)$, we have
\begin{align}
    \label{eq:w_v_equality}
    W_{t - 1}(\eta)
    = \int\limits_{\R^d} w_{t - 1}(\btheta, \eta) \,\dd \btheta
    &\notag
    = (1 - \alpha) \int\limits_{\R^d} v_{t - 1}(\btheta, \eta) \, \dd \btheta + \alpha V_{t - 1}(\eta) \int\limits_{\R^d} \pi(\btheta) \, \dd \btheta
    \\&
    = (1 - \alpha) V_{t - 1}(\eta) + \alpha V_{t - 1}(\eta)
    = V_{t - 1}(\eta)
\end{align}
and
\begin{align*}
    \int\limits_{\R^d} \btheta w_{t - 1}(\btheta, \eta) \,\dd \btheta
    &
    = (1 - \alpha) \int\limits_{\R^d} \btheta v_{t - 1}(\btheta, \eta) \, \dd \btheta + \alpha V_{t - 1}(\eta) \int\limits_{\R^d} \btheta \pi(\btheta) \, \dd \btheta
    \\&
    = (1 - \alpha) \int\limits_{\R^d} \btheta v_{t - 1}(\btheta, \eta) \, \dd \btheta.
\end{align*}
Hence, it holds that
\begin{equation}
    \label{eq:fs_estimate}
    \widetilde \btheta_t(\eta) = \frac{1 - \alpha}{V_{t - 1}(\eta)} \int\limits_{\R^d} \btheta v_{t - 1}(\btheta, \eta) \, \dd \btheta
    \quad \text{for any $t \geq 2$.}
\end{equation}

\medskip

\noindent
\textbf{Step 2: recurrence relation for $V_t(\eta)$.}\quad
The proof relies on~\eqref{eq:wt} and~\eqref{eq:vt_recursion}, yielding the recurrence relation
\begin{equation}
    \label{eq:vt_vt-1_recurrence}
    v_t(\btheta, \eta) = (1 - \alpha) v_{t - 1}(\btheta, \eta) + \alpha V_{t - 1}(\eta) \pi(\btheta).
\end{equation}
Applying~\eqref{eq:vt_vt-1_recurrence}, we obtain that
\begin{align*}
    V_t(\eta)
    = \int\limits_{\R^d} v_t(\btheta, \eta) \, \dd \btheta
    &
    = \int\limits_{\R^d} \big[ (1 - \alpha) v_{t - 1}(\btheta, \eta) + \alpha V_{t - 1}(\eta) \pi(\btheta) \big] e^{-\eta \ell_t(\btheta)} \, \dd \btheta
    \\&
    = \alpha V_{t - 1}(\eta) \int\limits_{\R^d} e^{-\eta \ell_t(\btheta)} \pi(\btheta) \, \dd \btheta
    + (1 - \alpha) \int\limits_{\R^d} v_{t - 1}(\btheta, \eta) e^{-\eta \ell_t(\btheta)} \pi(\btheta) \, \dd \btheta
    \\&
    = \alpha V_{t - 1}(\eta) Z_{t : t}(\eta) + (1 - \alpha) \int\limits_{\R^d} v_{t - 1}(\btheta, \eta) e^{-\eta \ell_t(\btheta)} \pi(\btheta) \, \dd \btheta,
\end{align*}
where the last line follows from the definition of $Z_{t:t}(\eta)$ (see eq.~\ref{eq:z_st}).
Similarly, it holds that
\begin{align*}
    &
    \int\limits_{\R^d} v_{t - 1}(\btheta, \eta) e^{-\eta \ell_t(\btheta)} \pi(\btheta) \, \dd \btheta
    \\&
    = \int\limits_{\R^d} \big[ (1 - \alpha) v_{t - 2}(\btheta, \eta) + \alpha V_{t - 2}(\eta) \pi(\btheta) \big] e^{-\eta \ell_t(\btheta) - \eta \ell_{t - 1}(\btheta)} \, \dd \btheta
    \\&
    = \alpha V_{t - 2}(\eta) Z_{t - 1 : t}(\eta) + (1 - \alpha) \int\limits_{\R^d} v_{t - 2}(\btheta, \eta) e^{-\eta \ell_t(\btheta) - \eta \ell_{t - 1}(\btheta)} \pi(\btheta) \, \dd \btheta,
\end{align*}
and, hence,
\begin{align*}
    V_t(\eta)
    &
    = \alpha V_{t - 1}(\eta) Z_{t : t}(\eta)
    + \alpha (1 - \alpha) V_{t - 2}(\eta) Z_{t - 1 : t}(\eta)
    \\&\quad
    + (1 - \alpha)^2 \int\limits_{\R^d} v_{t - 2}(\btheta, \eta) e^{-\eta \ell_t(\btheta) - \eta \ell_{t - 1}(\btheta)} \pi(\btheta) \, \dd \btheta.
\end{align*}
Repeating the same argument $(t - 2)$ times, we get that
\[
    V_t(\eta) = (1 - \alpha)^{t - 1} \, Z_{1 : t}(\eta) + \alpha \sum\limits_{s = 0}^{t - 2} (1 - \alpha)^s \, V_{t - 1 - s}(\eta) \, Z_{t - s : t}(\eta).
\]
Thus, it only remains to prove~\eqref{eq:widetilde_theta}.

\medskip

\noindent
\textbf{Step 2: recurrence relation for $\widetilde \btheta_t(\eta)$.}\quad
Let us introduce
\begin{equation}
    \label{eq:y}
    \bY_{s : t}(\eta) = \int\limits_{\R^d} \btheta e^{-\eta L_{s:t}(\btheta)} \pi(\btheta) \, \dd \btheta,
    \quad 1 \leq s \leq t \leq T.
\end{equation}
Note that, due to~\eqref{eq:z_st}, $\widehat \btheta_{s:t}(\eta) = \bY_{s : t}(\eta) / Z_{s:t}(\eta)$ for any $1 \leq s \leq t \leq T$.
The equalities $\widetilde \btheta_1(\eta) = \bzero$ and 
\[
    \widetilde \btheta_2(\eta)
    = (1 - \alpha) \bY_{1 : 1}(\eta) / V_1(\eta)
    = (1 - \alpha) \bY_{1 : 1}(\eta) / Z_{1:1}(\eta)
    = (1 - \alpha) \widehat \btheta_{1:1}(\eta)
\]
follow directly from~\eqref{eq:widetilde_theta_simplified}, \eqref{eq:fs_estimate}, and~\eqref{eq:v}. Hence, to finish the proof, it is enough to show that, for any $t \geq 3$,
\[
    \widetilde \btheta_t(\eta) = \frac{1 - \alpha}{V_{t - 1}(\eta)} \left( (1 - \alpha)^{t - 2} \, \bY_{1 : t - 1}(\eta) + \alpha \sum\limits_{s = 0}^{t - 3} (1 - \alpha)^s \, V_{t - 2 - s}(\eta) \, \bY_{t - 1 - s : t - 1}(\eta)\right).
\]
Using the recurrence relation~\eqref{eq:vt_vt-1_recurrence} and invoking the definition of $\bY_{s:t}$ (see eq.~\ref{eq:y}), we obtain that
\begin{align}
    \label{eq:y_t_recurrence}
    \int\limits_{\R^d} \btheta v_t(\btheta, \eta) \, \dd \btheta
    &\notag
    = \int\limits_{\R^d} \btheta \big[ (1 - \alpha) v_{t - 1}(\btheta, \eta) + \alpha V_{t - 1}(\eta) \pi(\btheta) \big] e^{-\eta \ell_t(\btheta)} \, \dd \btheta
    \\&
    = \alpha V_{t - 1}(\eta) \int\limits_{\R^d} \btheta e^{-\eta \ell_t (\btheta)} \pi(\btheta) \, \dd \btheta
    + (1 - \alpha) \int\limits_{\R^d} \btheta v_{t - 1}(\btheta, \eta) e^{-\eta \ell_t(\btheta)} \, \dd \btheta
    \\&\notag
    = \alpha V_{t - 1}(\eta) \, \bY_{t : t}(\eta) + (1 - \alpha) \int\limits_{\R^d} \btheta v_{t - 1}(\btheta, \eta) e^{-\eta \ell_t(\btheta)} \, \dd \btheta.
\end{align}
The latter term in the right-hand side can be examined in a similar manner:
\begin{align}
    \label{eq:y_t-1_recurrence}
    \int\limits_{\R^d} \btheta v_{t - 1}(\btheta, \eta) e^{-\eta \ell_t(\btheta)} \, \dd \btheta
    &\notag
    = \int\limits_{\R^d} \btheta \big[ (1 - \alpha) v_{t - 2}(\btheta, \eta) + \alpha V_{t - 2}(\eta) \pi(\btheta) \big] e^{-\eta \ell_t(\btheta) - \eta \ell_{t - 1}(\btheta)} \, \dd \btheta
    \\&\notag
    = \alpha V_{t - 2}(\eta) \int\limits_{\R^d} \btheta e^{-\eta \ell_t(\btheta) - \eta \ell_{t - 1}(\btheta)} \pi(\btheta) \, \dd \btheta
    \\&\quad
    + (1 - \alpha) \int\limits_{\R^d} \btheta v_{t - 2}(\btheta, \eta) e^{-\eta \ell_t(\btheta) - \eta \ell_{t - 1}(\btheta)} \, \dd \btheta
    \\&\notag
    = \alpha V_{t - 2}(\eta) \, \bY_{(t - 1) : t}(\eta) + (1 - \alpha) \int\limits_{\R^d} \btheta v_{t - 2}(\btheta, \eta) e^{-\eta \ell_t(\btheta) - \eta \ell_{t - 1}(\btheta)} \, \dd \btheta.
\end{align}
Summing up~\eqref{eq:y_t_recurrence} and~\eqref{eq:y_t-1_recurrence}, we get that
\begin{align*}
    \int\limits_{\R^d} \btheta v_t(\btheta) \, \dd \btheta &
    = \alpha V_{t - 1}(\eta) \, \bY_{t : t}(\eta) + \alpha (1 - \alpha) V_{t - 2}(\eta) \, \bY_{t - 1 : t}(\eta) \\&\quad + (1 - \alpha)^2 \int\limits_{\R^d} \btheta v_{t - 2}(\btheta, \eta) e^{-\eta \ell_t(\btheta) - \eta \ell_{t - 1}(\btheta)} \, \dd \btheta.
\end{align*}
Repeating the same argument and using the identity
\[
    \int\limits_{\R^d} \btheta v_1(\btheta, \eta) e^{-\eta \sum\limits_{j = 2}^t \ell_j(\btheta)} \, \dd \btheta
    = \int\limits_{\R^d} \btheta e^{-\eta \sum\limits_{j = 1}^t \ell_j(\btheta)} \pi(\btheta) \, \dd \btheta
    = \bY_{1 : t}(\eta),
\]
we finally obtain that
\[
    \int\limits_{\R^d} \btheta v_t(\btheta, \eta) \, \dd \btheta
    = (1 - \alpha)^{t - 1} \, \bY_{1 : t}(\eta) + \alpha \sum\limits_{s = 0}^{t - 2} (1 - \alpha)^s \, V_{t - 1 - s}(\eta) \, \bY_{t - s : t}(\eta)
    \quad \text{for all $t \geq 2$.}
\]
Hence, for any $t \geq 3$, it holds that
\begin{align*}
    \widetilde \btheta_t(\eta)
    &
    = \frac{1 - \alpha}{V_{t - 1}(\eta)} \int\limits_{\R^d} \btheta v_{t - 1}(\btheta, \eta) \, \dd \btheta
    \\&
    = \frac{1 - \alpha}{V_{t - 1}(\eta)} \left( (1 - \alpha)^{t - 2} \, \bY_{1 : t - 1}(\eta) + \alpha \sum\limits_{s = 0}^{t - 3} (1 - \alpha)^s \, V_{t - 2 - s}(\eta) \, \bY_{t - 1 - s : t - 1}(\eta)\right)
    \\&
    = \frac{1 - \alpha}{V_{t - 1}(\eta)}
    \Bigg( (1 - \alpha)^{t - 2} \, Z_{1 : t - 1}(\eta) \widehat\btheta_{1:t - 1}(\eta)
    \\&\qquad\qquad\qquad
    + \alpha \sum\limits_{s = 0}^{t - 3} (1 - \alpha)^s \, V_{t - 2 - s} \, Z_{t - 1 - s : t - 1}(\eta) \widehat\btheta_{t - 1 - s : t - 1}(\eta) \Bigg).
\end{align*}
\myendproof

\subsection{Proof of Lemma~\ref{lem:z}}
\label{sec:lem_z_proof}

Let us represent
\[
    \lambda \|\btheta\|^2 + \eta \sum\limits_{j = s}^t \left( \btheta^\top A_j \btheta - 2 \bb_j^\top \btheta \right)
\]
in the following form:
\begin{align*}
    &
    \lambda \|\btheta\|^2 + \eta \sum\limits_{j = s}^t \left( \btheta^\top A_j \btheta - 2 \bb_j^\top \btheta \right)
    \\&
    = \eta \, \btheta^\top \left( \sum\limits_{j = s}^t A_j + \frac{\lambda}{\eta} I_d \right) \btheta - 2 \eta \sum\limits_{j = s}^t \bb_j^\top \btheta
    \\&
    = \eta \left\| \left( \sum\limits_{j = s}^t A_j + \frac{\lambda}{\eta} I_d \right)^{1/2} \btheta - \left( \sum\limits_{j = s}^t A_j + \frac{\lambda}{\eta} I_d \right)^{-1/2} \sum\limits_{j = s}^t \bb_j \right\|^2
    \\&\quad
    - \frac{\eta}2 \left\| \left( \sum\limits_{j = s}^t A_j + \frac{\lambda}{\eta} I_d \right)^{-1/2} \sum\limits_{j = s}^t \bb_j \right\|^2.
\end{align*}
Then it holds that
\begin{align*}
    &
    \pi(\btheta) \exp\left\{-\eta \sum\limits_{j = s}^t \ell_j(\btheta) \right\}
    \\&
    = \left( \frac{\lambda}{2 \pi} \right)^{d/2} \exp\left\{ -\frac{\lambda \|\btheta\|^2}2 - \frac{\eta}2 \sum\limits_{j = s}^t \left( \btheta^\top A_j \btheta - 2 \bb_j^\top \btheta \right) \right\}
    \\&
    = \left( \frac{\lambda}{2 \pi} \right)^{d/2} \exp\left\{ \frac{\eta}2 \left\| \left( \sum\limits_{j = s}^t A_j + \frac{\lambda}{\eta} I_d \right)^{-1/2} \sum\limits_{j = s}^t \bb_j \right\|^2 \right\}
    \\&\quad
    \cdot \exp\left\{-\frac{\eta}2 \left\| \left( \sum\limits_{j = s}^t A_j + \frac{\lambda}{\eta} I_d \right)^{1/2} \btheta - \left( \sum\limits_{j = s}^t A_j + \frac{\lambda}{\eta} I_d \right)^{-1/2} \sum\limits_{j = s}^t \bb_j \right\|^2 \right\}.
\end{align*}
The integral of the expression in the right-hand side admits a closed-form representation. Indeed, it holds that
\begin{align*}
    Z_{s : t}(\eta)
    &
    = \left( \frac{\lambda}{2 \pi} \right)^{d/2} \exp\left\{ \frac{\eta}2 \left\| \left( \sum\limits_{j = s}^t A_j + \frac{\lambda}{\eta} I_d \right)^{-1/2} \sum\limits_{j = s}^t \bb_j \right\|^2 \right\}
    \\&\quad
    \cdot \int\limits_{\R^d} \exp\left\{-\frac{\eta}2 \left\| \left( \sum\limits_{j = s}^t A_j + \frac{\lambda}{\eta} I_d \right)^{1/2} \btheta - \left( \sum\limits_{j = s}^t A_j + \frac{\lambda}{\eta} I_d \right)^{-1/2} \sum\limits_{j = s}^t \bb_j \right\|^2 \right\} \, \dd \btheta
    \\&
    = \lambda^{d/2} \det\left( \eta \sum\limits_{j = s}^t A_j + \lambda I_d\right)^{-1/2} \exp\left\{ \frac{\eta}2 \left\| \left( \sum\limits_{j = s}^t A_j + \frac{\lambda}{\eta} I_d \right)^{-1/2} \sum\limits_{j = s}^t \bb_j \right\|^2 \right\}
    \\&
    = \left( \frac{\lambda}{\eta} \right)^{d/2} \det\left( \sum\limits_{j = s}^t A_j + \frac{\lambda}\eta I_d\right)^{-1/2} \exp\left\{ \frac{\eta}2 \left\| \left( \sum\limits_{j = s}^t A_j + \frac{\lambda}{\eta} I_d \right)^{-1/2} \sum\limits_{j = s}^t \bb_j \right\|^2 \right\}.
\end{align*}
\myendproof

\subsection{Proof of Proposition~\ref{prop:representation}}
\label{sec:prop_representation_proof}

We conduct the proof by the induction in $t$. For convenience, we split the proof into two steps.

\medskip

\noindent\textbf{Step 1: base case.}\quad
We start with the case $t = 1$. Let us fix an arbitrary $\btheta \in \R^d$ and show that
\[
    \int \rho(\btheta, \btheta_2, \dots, \btheta_T) \, \dd \btheta_2 \dots \dd \btheta_T 
    = \pi(\btheta)
    = w_0(\btheta, \eta).
\]
According to the definition of the prior distribution $\rho$ (see eq.~\ref{eq:fs_prior}), it holds that
\[
    \int \rho(\btheta, \btheta_2, \dots, \btheta_T) \, \dd \btheta_2 \dots \dd \btheta_T
    = \pi(\btheta) \int \left( \prod\limits_{t = 3}^T \f(\btheta_t \,\vert\, \btheta_{t - 1}) \right) \f(\btheta_2 \,\vert\, \btheta) \, \dd \btheta_2 \dots \dd \btheta_T.
\]
Note that
\[
    \int\limits_{\R^d} \f(\btheta_T \,\vert\, \btheta_{T - 1}) \, \dd \btheta_T
    = (1 - \alpha) \int\limits_{\R^d} \delta(\btheta_T - \btheta_{T - 1}) \, \dd \btheta_T
    + \alpha \int\limits_{\R^d} \pi(\btheta_T) \, \dd \btheta_T
    = 1 - \alpha + \alpha = 1.
\]
Then the integration with respect to $\btheta_T$ leads to
\begin{align*}
    \int \rho(\btheta, \btheta_2, \dots, \btheta_T) \, \dd \btheta_2 \dots \dd \btheta_T
    &
    = \pi(\btheta) \int \left( \prod\limits_{t = 3}^T \f(\btheta_t \,\vert\, \btheta_{t - 1}) \right) \f(\btheta_2 \,\vert\, \btheta) \, \dd \btheta_2 \dots \dd \btheta_T
    \\&
    = \pi(\btheta) \int \left( \prod\limits_{t = 3}^{T - 1} \f(\btheta_t \,\vert\, \btheta_{t - 1}) \right) \f(\btheta_2 \,\vert\, \btheta) \, \dd \btheta_2 \dots \dd \btheta_T.
\end{align*}
Repeating the same argument $(T - 3)$ more times, we obtain that
\begin{align*}
    \int \rho(\btheta, \btheta_2, \dots, \btheta_T) \, \dd \btheta_2 \dots \dd \btheta_T 
    &
    = \pi(\btheta) \int\limits_{\R^d} \f(\btheta_2 \,\vert\, \btheta) \dd \btheta_2
    \\&
    = \pi(\btheta) \left( (1 - \alpha) \int\limits_{\R^d} \delta(\btheta_2 - \btheta) \, \dd \btheta_2
    + \alpha \int\limits_{\R^d} \pi(\btheta_2) \, \dd \btheta_2 \right)
    \\&
    = \pi(\btheta) \left( 1 - \alpha + \alpha \right)
    = w_0(\btheta, \eta).
\end{align*}

\medskip

\noindent\textbf{Step 2: induction step.}\quad
Assume that, for any $\btheta \in \R^d$, the integral
\[
    \int e^{-\eta \cL_{t - 1}(\btheta_1, \dots, \btheta_{t - 1}, \btheta, \btheta_{t + 1}, \dots, \btheta_T)} \rho(\btheta_1, \dots, \btheta_{t - 1}, \btheta, \btheta_{t + 1}, \dots, \btheta_T) \, \dd \btheta_1 \dots \dd \btheta_{t - 1} \dd \btheta_{t + 1} \dots \dd \btheta_T 
\]
is equal to $w_{t - 1}(\btheta, \eta)$.
We are going to show that
\[
    \int e^{-\eta \cL_{t}(\btheta_1, \dots, \btheta_t, \btheta, \btheta_{t + 2}, \dots, \btheta_T)} \rho(\btheta_1, \dots, \btheta_t, \btheta, \btheta_{t + 2}, \dots, \btheta_T) \, \dd \btheta_1 \dots \dd \btheta_t \dd \btheta_{t + 2} \dots \dd \btheta_T
    = w_t(\btheta, \eta).
\]
Let us introduce the marginal probability density of $\btheta_1, \dots, \btheta_t$ and the conditional density of $\btheta_{t + 2}, \dots, \btheta_T$ given $\btheta$ as follows:
\[
    \rho(\btheta_1, \dots, \btheta_t) = \pi(\btheta_1) \prod\limits_{s = 2}^t \f(\btheta_s \,\vert\, \btheta_{s - 1})
\]
and
\[
    \rho(\btheta_{t + 2}, \dots, \btheta_T \,\vert\, \btheta) = \f(\btheta_{t + 2} \,\vert\, \btheta) \prod\limits_{s = t + 3}^T \f(\btheta_s \,\vert\, \btheta_{s - 1}),
\]
where, as before, $\f(\btheta_s \,\vert\, \btheta_{s - 1}) = (1 - \alpha) \delta(\btheta_s - \btheta_{s - 1}) + \alpha \pi(\btheta_s)$. It is straightforward to check that $\rho(\btheta_1, \dots, \btheta_t)$ and $\rho(\btheta_{t + 2}, \dots, \btheta_T \,\vert\, \btheta)$ are probability densities, that is,
\begin{equation}
    \label{eq:density_integrals}
    \int \rho(\btheta_1, \dots, \btheta_t) \, \dd \btheta_1 \dots \dd \btheta_t = 1
    \quad \text{and} \quad
    \int \rho(\btheta_{t + 2}, \dots, \btheta_T \,\vert\, \btheta) \, \dd \btheta_{t + 2} \dots \dd \btheta_T = 1.
\end{equation}
On the other hand, $\rho(\btheta_1, \dots, \btheta_t, \btheta, \btheta_{t + 2}, \dots, \btheta_T)$ can be represented as a product of three terms:
\begin{align*}
    \rho(\btheta_1, \dots, \btheta_t, \btheta, \btheta_{t + 2}, \dots, \btheta_T)
    &
    = \rho(\btheta_{t + 2}, \dots, \btheta_T \,\vert\, \btheta) \, \f(\btheta \,\vert\, \btheta_t) \, \rho(\btheta_1, \dots, \btheta_t)
    \\&
    = \rho(\btheta_{t + 2}, \dots, \btheta_T \,\vert\, \btheta) \, \rho(\btheta_1, \dots, \btheta_t) \big( (1 - \alpha) \delta(\btheta - \btheta_t) + \alpha \pi(\btheta) \big). 
\end{align*}
Taking this expression into account and using the equality
\[
    \cL_t(\btheta_1, \dots, \btheta_t, \btheta, \btheta_{t + 2}, \dots, \btheta_T)
    = \sum\limits_{s = 1}^t \ell_s(\btheta_s),
\]
we obtain that
\begin{align*}
    &
    \int e^{-\eta \cL_{t}(\btheta_1, \dots, \btheta_t, \btheta, \btheta_{t + 2}, \dots, \btheta_T)} \rho(\btheta_1, \dots, \btheta_t, \btheta, \btheta_{t + 2}, \dots, \btheta_T) \, \dd \btheta_1 \dots \dd \btheta_t \dd \btheta_{t + 2} \dots \dd \btheta_T
    \\&
    = \int \rho(\btheta_{t + 2}, \dots, \btheta_T \,\vert\, \btheta) \, \dd \btheta_{t + 2} \dots \dd \btheta_T
    \\&\quad
    \cdot \int \big( (1 - \alpha) \delta(\btheta - \btheta_t) + \alpha \pi(\btheta) \big) e^{-\eta \ell_t(\btheta_t)} \, \rho(\btheta_1, \dots, \btheta_t) e^{-\eta \sum\limits_{s = 1}^{t - 1} \ell_s(\btheta_s)} \, \dd \btheta_1 \dots \dd \btheta_t.
\end{align*}
In view of~\eqref{eq:density_integrals}, the first term in the right-hand side is equal to $1$. Thus, it holds that
\begin{align}
    \label{eq:repeated_integral}
    &\notag
    \int e^{-\eta \cL_{t}(\btheta_1, \dots, \btheta_t, \btheta, \btheta_{t + 2}, \dots, \btheta_T)} \rho(\btheta_1, \dots, \btheta_t, \btheta, \btheta_{t + 2}, \dots, \btheta_T) \, \dd \btheta_1 \dots \dd \btheta_t \dd \btheta_{t + 2} \dots \dd \btheta_T
    \\&
    = \int \big( (1 - \alpha) \delta(\btheta - \btheta_t) + \alpha \pi(\btheta) \big) e^{-\eta \ell_t(\btheta_t)} \, \rho(\btheta_1, \dots, \btheta_t) e^{-\eta \sum\limits_{s = 1}^{t - 1} \ell_s(\btheta_s)} \, \dd \btheta_1 \dots \dd \btheta_t
    \\&\notag
    = \int\limits_{\R^d} \big( (1 - \alpha) \delta(\btheta - \btheta_t) + \alpha \pi(\btheta) \big) e^{-\eta \ell_t(\btheta_t)} \left( \int \rho(\btheta_1, \dots, \btheta_t) e^{-\eta \sum\limits_{s = 1}^{t - 1} \ell_s(\btheta_s)} \, \dd \btheta_1 \dots \dd \btheta_{t - 1} \right) \, \dd \btheta_t.
\end{align}
The induction hypothesis yields that
\begin{align*}
    &
    \int \rho(\btheta_1, \dots, \btheta_t) e^{-\eta \sum\limits_{s = 1}^{t - 1} \ell_s(\btheta_s)} \, \dd \btheta_1 \dots \dd \btheta_{t - 1}
    \\&
    = \int \rho(\btheta_1, \dots, \btheta_t) e^{-\eta \sum\limits_{s = 1}^{t - 1} \ell_s(\btheta_s)} \cdot \rho(\btheta_{t + 1}, \dots, \btheta_T \,\vert\, \btheta_t) \, \dd \btheta_1 \dots \dd \btheta_{t - 1} \dd \btheta_{t + 1} \dots \dd \btheta_T
    \\&
    = \int e^{-\eta \cL_{t - 1} (\btheta_1, \dots, \btheta_T)} \rho(\btheta_1, \dots, \btheta_T) \, \dd \btheta_1 \dots \dd \btheta_{t - 1} \dd \btheta_{t + 1} \dots \dd \btheta_T
    \\&
    = w_{t - 1}(\btheta_t, \eta).
\end{align*}
Then the integral in the left-hand side of~\eqref{eq:repeated_integral} simplifies to
\begin{align*}
    &
    \int e^{-\eta \cL_{t}(\btheta_1, \dots, \btheta_t, \btheta, \btheta_{t + 2}, \dots, \btheta_T)} \rho(\btheta_1, \dots, \btheta_t, \btheta, \btheta_{t + 2}, \dots, \btheta_T) \, \dd \btheta_1 \dots \dd \btheta_t \dd \btheta_{t + 2} \dots \dd \btheta_T
    \\&
    = \int\limits_{\R^d} \big( (1 - \alpha) \delta(\btheta - \btheta_t) + \alpha \pi(\btheta) \big) w_{t - 1}(\btheta_t, \eta) e^{-\eta \ell_t(\btheta_t)} \, \dd \btheta_t
\end{align*}
Let us recall that $w_{t - 1}(\btheta, \eta) e^{-\eta \ell_t(\btheta)} = v_t(\btheta, \eta)$ by the definition of $v_t(\btheta, \eta)$ (see eq.~\ref{eq:vt_recursion}).
Hence, due to the definition of $w_t(\btheta, \eta)$ (see eq.~\ref{eq:wt}), it holds that
\begin{align*}
    &
    \int e^{-\eta \cL_{t}(\btheta_1, \dots, \btheta_t, \btheta, \btheta_{t + 2}, \dots, \btheta_T)} \rho(\btheta_1, \dots, \btheta_t, \btheta, \btheta_{t + 2}, \dots, \btheta_T) \, \dd \btheta_1 \dots \dd \btheta_t \dd \btheta_{t + 2} \dots \dd \btheta_T
    \\&
    = \int\limits_{\R^d} \big( (1 - \alpha) \delta(\btheta - \btheta_t) + \alpha \pi(\btheta) \big) v_t(\btheta_t, \eta) \, \dd \btheta_t
    \\&
    = (1 - \alpha) v_t(\btheta, \eta) + \alpha \pi(\btheta) \int\limits_{\R^d} v_t(\btheta_t, \eta) \, \dd \btheta_t
    \\&
    = (1 - \alpha) v_t(\btheta, \eta) + \alpha \pi(\btheta) V_t(\eta)
    = w_t(\btheta, \eta).
\end{align*}
The proof is finished.

\myendproof

\section{Proofs of the Results from Section \ref{sec:application}}
\label{sec:th_proofs}

\subsection{Proof insights and preliminaries}

The proofs of Theorems \ref{th:main_rl} and \ref{th:main_dd} are quite similar to each other and rely on upper and lower bounds on the cumulative losses $\widehat L_{1:t}^{EW}$ and $\widehat L_{1:t}^{FS}$. The upper bounds on $\widehat L_{1:t}^{EW}$ and $\widehat L_{1:t}^{FS}$ follow from Theorems \ref{th:ew_regret} and \ref{th:fs_regret} studying regret of the exponentially weighted average and fixed share forecasters. These theorems rely on the notion of mixability, which helps us to deal with quadratic losses on $\R^d$. We would like to remind a reader that $\ell_t(\btheta)$ is not exp-concave on $\R^d$. For this reason, we cannot rely on the arguments used by \cite{hazan07, mourtada17} in the exp-concave scenario.

The lower bounds on $\widehat L_{1:t}^{EW}$ and $\widehat L_{1:t}^{FS}$ are more intricate. Their proof utilizes the stochastic structure of the outcomes $(A_t, \bb_t)$, $1 \leq t \leq T$ and a recent result of \cite{kroshnin25} on concentration of unbounded martingales. In particular, we establish the following inequality holding with high probability and suitable for analysis of both exponential weighting and fixed share. 

\begin{Th}
    \label{th:lower_bound}
    Grant Assumption \ref{as:subexp}. Let  \( \{ \widecheck{\btheta}_t : 1 \leq t \leq T\}\) be a non-anticipating sequence adapted to a natural filtration \(\{\cF_t : 0 \leq t \leq T\}\)\footnote{Here $\cF_0$ is a trivial sigma-algebra.} induced by the sequence of observations \(\bX_1, \dots, \bX_T\). For any $t \in \{1, \dots, T\}$ define $\overline{A}_t = \E A_t$, $\overline \bb_t = \E \bb_t$, $\overline{\ell}_t(\btheta) = \E_{\bX_t} \ell_t(\btheta)$, and 
    \begin{equation}
        \label{eq:theta_star}
        \btheta_t^\star
        = \argmin_{\btheta \in \R^d} \overline{\ell}_t(\btheta)
        = \argmin_{\btheta \in \R^d} \E_{\bX_t} \ell_t(\btheta)
        = \big( \overline{A}_t \big)^{-1} \overline{\bb}_t.
    \end{equation}
    Take an arbitrary $R > 0$ satisfying the inequalities
    \[
        \left\| \overline{A}_t^{1/2} \btheta_t^\star \right\| \le R \quad \text{for all $1 \leq t \leq T$}  
    \]
    and define an event
    \[
        \cE = \left\{ \left\|\overline{A}_t^{1/2}\widecheck{\btheta}_t\right\| \le R \quad \text{for all $1 \leq t \leq T$} \right\}.
    \]
    Then, for any $s \in \{1, \dots, T\}$ and any \(\delta \in (0,1)\), with probability at least \(\p(\cE) - \delta\) it holds that
    \begin{align*}
        \sum\limits_{t=1}^s \big(\ell_t(\widecheck{\btheta}_t) - \ell_t(\btheta_t^\star)\big)
        &
        \geq \frac12 \sum\limits_{t = 1}^s \big( \overline{\ell}_t(\widecheck{\btheta}_t) - \overline{\ell}_t(\btheta_t^\star) \big) - 4B \left(3eR^2\log(8B) + 4B\right) \log(1/\delta)
        \\&\quad
        - 12BRz (1 + R) \log(1/\delta),
    \end{align*}
    where
    \[
        z = 1 \vee \log \frac{e(1 + R)\sqrt{2B}}{\sqrt{3eR^2\log(8B) + 4B}}.
    \]
\end{Th}

\subsection{Proof of Theorem \ref{th:main_rl}}

Let us note that, if $t \leq \tau^*$, then $\bX_1, \dots, \bX_t$ are i.i.d. random elements (as well as the pairs $(A_1, \bb_1), \dots, (A_t, \bb_t)$). This yields that
\[
    \btheta_1^\star = \ldots = \btheta_t^\star = \argmin\limits_{\btheta \in \R^d} \overline{\ell}_1(\btheta) = \big(\overline{A}_1 \big)^{-1} \overline{\bb}_1.
\]
Let us represent the test statistic $\widehat S_t = \widehat L_{1:t}^{EW} - \widehat L_{1:t}^{FS}$ in the form
\[
    \widehat L_{1:t}^{EW} - \widehat L_{1:t}^{FS}
    = \left( \widehat L_{1:t}^{EW} - L_{1:t}(\btheta_1^\star) \right) - \left( \widehat L_{1:t}^{FS} - L_{1:t}(\btheta_1^\star) \right)
\]
and consider the terms in the right-hand side one-by-one.

\medskip

\noindent
\textbf{Step 1: upper bound on $\widehat L_{1:t}^{EW} - L_{1:t}(\btheta_1^\star)$.}
\quad
The upper bound on $\widehat L_{1:t}^{EW} - L_{1:t}(\btheta_1^\star)$ easily follows from Theorem \ref{th:ew_regret} studying regret of the exponential weighting forecaster. Let us note that, due to Lemma \ref{lem:ew_prediction_high-probability_bound} and the union bound, with probability at least $(1 - 2\delta)$, we have
\[
    \left\|A_t^{1/2} \widehat{\btheta}_{1:t-1}(\eta_t)\right\|
    \le 8B^{3/2} \|\overline{A}_1\| \big( d\log6 + \log(3\tau^*/\delta) \big)^{3/2}
    \leq 2 \gamma \ttR B^{1/2} \big( d\log6 + \log(3\tau^*/\delta) \big)^{1/2}
\]
simultaneously for all $1 \leq t \leq \tau^*$. Similarly, thanks to Lemma \ref{lem:scaled_bt_bound} and the union bound, it holds that
\[
    \left\| A_t^{-1/2} \bb_t \right\|
    \le 2B\sqrt{\frac{\|\overline{A}_t\|}{\gamma}} \big( d\log6 + \log(2\tau^*/\delta) \big)
    \leq \ttR^{1/2} B^{1/2} \big( d\log6 + \log(3\tau^*/\delta) \big)^{1/2}
\]
with probability at least $(1 - \delta)$ simultaneously for all $t \in \{1, \dots, \tau^*\}$. This and \eqref{eq:eta_condition_stationary} yield that there is an event $\mathcal E_1$ of probability at least $(1 - 3\delta)$, such that
\[
    \left\|A_t^{1/2} \widehat{\btheta}_{1:t-1}(\eta_t)\right\|^2 \vee \left\| A_t^{-1/2} \bb_t \right\|^2
    \leq \ttR B \left(1 \vee 4\gamma^2 \ttR \right) \big( d\log6 + \log(3\tau^*/\delta) \big)
    \leq \frac1{4 \eta_{t}}
\]
for all $t \in \{1, \dots, \tau^*\}$. In other words, the conditions of Theorem \ref{th:ew_regret} are satisfied on $\cE_1$. Hence, on this event, we have
\begin{equation}
    \label{eq:ew_loss_upper_bound}
    \widehat L_{1:t}^{EW} - L_{1:t}(\btheta_1^\star) 
    \leq R_{1:t}^{EW} 
    \leq \frac{\lambda \|\btheta_{1:t}^\circ\|^2}{2 \eta_t} + \frac1{2\eta_t} \log \det\left( I_d + \frac{\eta_t}{\lambda} \sum\limits_{j = 1}^t A_j \right),
\end{equation}
where
\[
    \btheta_{1:t}^\circ
    = \argmin\limits_{\btheta \in \R^d} L_{1:t}(\btheta)
    = \left( \sum\limits_{j = 1}^t A_j \right)^{-1} \sum\limits_{j = 1}^t \bb_j.
\]
The right-hand side of the inequality \eqref{eq:ew_loss_upper_bound} is nondeterministic but it admits a high-probability upper bound, following directly from Lemma \ref{lem:a_sum_norm} and \ref{lem:b_partial_sum_norm}. Due to Lemma \ref{lem:b_partial_sum_norm} and the union bound, with probability at least $(1 - \delta)$, it holds that
\[
    \left\|\btheta_{1:t}^\circ\right\|
    \leq \frac{1}{t \gamma} \left\| \sum\limits_{j = 1}^t \bb_j \right\|
    \leq \frac{4B \big\| \overline A_1 \big\|^{1/2}}{\gamma} \left( \frac{\log 2}{2} + \frac{d \log 6}t + \frac{\log(2 \tau^* / \delta)}t \right)
    \leq \frac{\ttR}{\big\| \overline A_1 \big\|^{1/2}}
\]
simultaneously for all $1 \leq t \leq \tau^*$.
On the other hand, Lemma \ref{lem:a_sum_norm} yields that
\[
    \left\| \sum\limits_{j = 1}^{\tau^*} \big(\overline{A}_1\big)^{-1/2} A_j \big(\overline{A}_1\big)^{-1/2} \right\|
    \le 4B \big( \tau^* \log 2 + d \log6 + \log(1/\delta) \big)
    \leq \frac{\gamma \ttR \tau^*}{\big\| \overline A_1 \big\|}
\]
with probability at least $(1 - \delta)$. Then, on this event, it holds that
\begin{align*}
    \log \det\left( I_d + \frac{\eta_t}{\lambda} \sum\limits_{j = 1}^t A_j \right)
    &
    \leq \log \det\left( I_d + \frac{\eta_t}{\lambda} \sum\limits_{j = 1}^{\tau^*} A_j \right)
    \\&
    \leq d \log\left(1 + \frac{\eta_t \|\overline{A}_1\|}{\lambda} \left\| \sum\limits_{j = 1}^{\tau^*} \big(\overline{A}_1\big)^{-1/2} A_j \big(\overline{A}_1\big)^{-1/2} \right\| \right)
    \\&
    \leq d \log\left(1 + \frac{\gamma B \, \ttR \, \eta_t \tau^*}{\lambda} \right)
    \quad \text{for all $1 \leq t \leq \tau^*$.}
\end{align*}
Thus, with probability at least $(\p(\cE_1) - 2\delta) \geq (1 - 5 \delta)$, it holds that
\begin{align*}
    \widehat L_{1:t}^{EW} - L_{1:t}(\btheta_1^\star) 
    &
    \leq \frac{\lambda \, \ttR^2}{2 \big\| \overline A_1 \big\| \eta_t}
    + \frac{d}{2\eta_t} \log\left(1 + \frac{\gamma B \, \ttR \, \eta_t \tau^*}{\lambda} \right)
\end{align*}
simultaneously for all $t \in \{1, \dots, \tau^*\}$.

\medskip

\noindent
\textbf{Step 2: lower bound on $\widehat L_{1:t}^{FS} - L_{1:t}(\btheta_1^\star)$.}
\quad
Thanks to
Lemma \ref{lem:b_partial_sum_norm} and the union bound, simultaneously for all \(1 \le s < t \le \tau^*\) and \(\delta \in (0, 1)\) with probability at least \( (1 - \delta) \) it holds that 
\begin{align*}
    \left\|\overline A_t^{1/2} \widehat{\btheta}_{s:t-1}(\eta_t)\right\|
    &
    \le \big\| \overline A_t \big\|^{1/2} \left\| \left( \sum\limits_{j = s}^{t - 1} A_j + \frac{\lambda}\eta I_d \right)^{-1} \sum\limits_{j = s}^{t - 1} \bb_j \right\|
    \le \frac{\big\| \overline A_t \big\|^{1/2}}{(t - s) \gamma} \left\| \sum\limits_{j = s}^{t - 1} \bb_j \right\|
    \\&
    \le \frac{4B \big\| \overline A_t \big\|}{(t - s) \gamma} \left(d\log6 + (t - s)\frac{\log2}{2} + 2\log \tau^* + \log\frac{2}{\delta}\right)
    \leq \ttR.
\end{align*}
Since $\widehat \btheta{}_t^{FS}$ is a convex combination of $\widehat \btheta_{1 : t - 1}(\eta_t), \dots$, $\widehat \btheta_{t - 1 : t - 1}(\eta_t)$, and $\bzero$ (see \eqref{eq:widetilde_theta} and \eqref{eq:v}), this implies that 
\[
    \left\|\overline A_t^{1/2} \widehat \btheta{}_t^{FS} \right\| \leq \ttR
    \quad \text{for all $1 \leq t \leq \tau^*$}
\]
with probability at least $(1 - \delta)$.
In view of \eqref{eq:r_stationary}, we also have
\[
    \left\|\overline A_t^{1/2} \btheta_t^\star \right\|
    = \left\| \big( \overline A_t \big)^{-1/2} \overline \bb_t \right\|
    \leq \ttR
    \quad \text{for any $1 \leq t \leq \tau^*$.}
\]
Hence, we can apply Theorem \ref{th:lower_bound}, claiming that 
\begin{align*}
    \widehat L_{1:t}^{FS} - L_{1:t}(\btheta_1^\star)
    \geq - 4B \left(3e \ttR^2 \log(8B) + 4B\right) \log(\tau^*/\delta)
    - 12B \, \ttR \ttZ (1 + \ttR) \log(\tau^*/\delta)
\end{align*}
with probability at least $(1 - 2\delta)$ simultaneously for all $1 \leq t \leq \tau^*$.
Hence, applying the union bound, we obtain that
\begin{align*}
    \widehat S_t
    = \widehat L_{1:t}^{EW} - \widehat L_{1:t}^{FS}
    &
    \leq \frac{\lambda \, \ttR^2}{2 \big\| \overline A_1 \big\| \eta_t}
    + \frac{d}{2\eta_t} \log\left(1 + \frac{\gamma B \, \ttR \, \eta_t \tau^*}{\lambda} \right)
    \\&\quad
    + 4B \left(3e \ttR^2 \log(8B) + 4B\right) \log(\tau^*/\delta)
    + 12B \, \ttR \ttZ (1 + \ttR) \log(\tau^*/\delta)
\end{align*}
for all $t \in \{1, \dots, \tau^*\}$ on an even of probability at least $(1 - 7\delta)$.

\myendproof

\subsection{Proof of Theorem \ref{th:main_dd}}

Similarly to the proof of Theorem \ref{th:main_rl}, we represent the test statistic $\widehat S_t$ in the following form:
\[
    \widehat S_t
    = \left( \widehat L_{1:t}^{EW} - L_{1:\tau^*}(\btheta_1^\star) - L_{\tau^* + 1:t}(\btheta_{\tau^* + 1}^\star) \right) - \left( \widehat L_{1:t}^{FS} - L_{1:\tau^*}(\btheta_1^\star) - L_{\tau^* + 1:t}(\btheta_{\tau^* + 1}^\star) \right).
\]
The difference is that we have to bound the former term in the right-hand side from below and the latter one from above.

\medskip

\noindent
\textbf{Step 1: upper bound on $\widehat L_{1:t}^{FS} - L_{1:\tau^*}(\btheta_1^\star) - L_{\tau^* + 1:t}(\btheta_{\tau^* + 1}^\star)$.}
\quad
Due to Lemma \ref{lem:ew_prediction_high-probability_bound} and the union bound, with probability at least $(1 - 2\delta)$, we have
\begin{align*}
    \left\|A_t^{1/2} \widehat{\btheta}_{s:t-1}(\eta_t)\right\|
    &
    \leq 8B^{3/2} \|\overline{A}_t\|^{1/2} \sqrt{\|\overline{A}_t\| \vee \|\overline A_1\|} \big( d\log6 + 2 \log T + \log(3/\delta) \big)^{3/2}
    \\&
    = 2\gamma \ttR B^{1/2} \big( d\log6 + 2 \log T + \log(3/\delta) \big)^{1/2}
\end{align*}
simultaneously for all $1 \leq s < t \leq T$. On the other hand, according to Lemma \ref{lem:scaled_bt_bound} and the union bound, it holds that
\[
    \left\| A_t^{-1/2} \bb_t \right\|
    \le 2B\sqrt{\frac{\|\overline{A}_t\|}{\gamma}} \big( d\log6 + \log(2T/\delta) \big)
    \leq \ttR^{1/2} B^{1/2} \big( d\log6 + 2 \log T + \log(3/\delta) \big)^{1/2}
\]
with probability at least $(1 - \delta)$ simultaneously for all $t \in \{1, \dots, \tau^*\}$. This and \eqref{eq:eta_condition_change_point} yield that there is an event $\mathcal E_2$ of probability at least $(1 - 3\delta)$, such that
\[
    \left\|A_t^{1/2} \widehat{\btheta}_{1:t-1}(\eta_t)\right\|^2 \vee \left\| A_t^{-1/2} \bb_t \right\|^2
    \leq \ttR B \left(1 \vee 4\gamma^2 \ttR \right) \big( d\log6 + 2\log T + \log(3/\delta) \big)
    \leq \frac1{4 \eta_{t}}
\]
for all $t \in \{1, \dots, T\}$. Thus, the conditions of Theorem \ref{th:fs_regret} are satisfied on $\cE_2$, and we have
\begin{align}
    \label{eq:fs_loss_upper_bound}
    \widehat L_{1:t}^{FS} - L_{1:\tau^*}(\btheta_1^\star) - L_{\tau^* + 1:t}(\btheta_{\tau^* + 1}^\star)
    &\notag
    \leq \frac{\log(1 / \alpha)}{\eta_t} + \frac{(t - 2) \log\big(1 / (1 - \alpha) \big)}{\eta_t}
    \\&
    + \left[ \frac{\lambda \|\btheta_{1 : \tau^*}^\circ\|^2}{2 \eta_t} + \frac1{2 \eta_t} \log \det\left( I_d + \frac{\eta_t}\lambda \sum\limits_{j = 1}^{\tau^*} A_j \right) \right]
    \\&\notag
    + \left[ \frac{\lambda \|\btheta_{\tau^* + 1 : t}^\circ\|^2}{2 \eta_t} + \frac1{2 \eta_t} \log \det\left( I_d + \frac{\eta_t}\lambda \sum\limits_{j = \tau^* + 1}^{t} A_j \right) \right],
\end{align}
where
\[
    \btheta_{1:\tau^*}^\circ
    = \left( \sum\limits_{j = 1}^{\tau^*} A_j \right)^{-1} \sum\limits_{j = 1}^{\tau^*} \bb_j
    \quad \text{and} \quad
    \btheta_{\tau^* + 1:t}^\circ
    = \left( \sum\limits_{j = \tau^* + 1}^t A_j \right)^{-1} \sum\limits_{j = \tau^* + 1}^t \bb_j.
\]
Let us note that
\[
    \frac{\log(1 / \alpha)}{\eta_t} + \frac{(t - 2) \log\big(1 / (1 - \alpha) \big)}{\eta_t}
    = \frac{\log T}{\eta_t} + \frac{(t - 2) \log\big(1 + 1 / (T - 1)) \big)}{\eta_t}
    \leq \frac{1 + \log T}{\eta_t}.
\]
We use Lemmata \ref{lem:a_sum_norm} and \ref{lem:b_partial_sum_norm} to bound the expression in the right-hand side of \eqref{eq:fs_loss_upper_bound}. First, due to Lemma \ref{lem:b_partial_sum_norm} and the union bound, with probability at least $(1 - \delta)$, it holds that
\[
    \left\|\btheta_{1:\tau^*}^\circ\right\|
    \leq \frac{1}{\tau^* \gamma} \left\| \sum\limits_{j = 1}^{\tau^*} \bb_j \right\|
    \leq \frac{4B \big\| \overline A_1 \big\|^{1/2}}{\gamma} \left( \frac{\log 2}{2} + \frac{d \log 6}{\tau^*} + \frac{\log(2 / \delta)}{\tau^*} \right)
    \leq \frac{\ttR}{\big\| \overline A_1 \big\|^{1/2}}
\]
and
\begin{align*}
    \left\|\btheta_{\tau^* + 1 : t}^\circ\right\|
    &
    \leq \frac{1}{(t - \tau^*) \gamma} \left\| \sum\limits_{j = \tau^* + 1}^t \bb_j \right\|
    \\&
    \leq 4B \left(\|\overline{A}_t\|^{1/2} \vee \|\overline{A}_1\|^{1/2} \right) \left(d\log6 + (t-\tau^*)\frac{\log2}{2} + \log\frac{2(T - \tau^*)}{\delta}\right)
    \\&
    \leq \frac{\ttR}{\big\| \overline A_{\tau^* + 1} \big\|^{1/2}}
\end{align*}
simultaneously for all $\tau^* < t \leq T$.
Second, Lemma \ref{lem:a_sum_norm} yields that
\[
    \left\| \sum\limits_{j = 1}^{\tau^*} \big(\overline{A}_1\big)^{-1/2} A_j \big(\overline{A}_1\big)^{-1/2} \right\|
    \le 4B \big( \tau^* \log 2 + d \log6 + \log(1/\delta) \big)
    \leq \frac{\gamma \ttR \tau^*}{\big\| \overline A_1 \big\|}
\]
with probability at least $(1 - \delta)$, and, similarly,
\[
    \left\| \sum\limits_{j = \tau^* + 1}^{T} \big(\overline{A}_{\tau^* + 1} \big)^{-1/2} A_j \big(\overline{A}_{\tau^* + 1} \big)^{-1/2} \right\|
    \le 4B \big( (T - \tau^*) \log 2 + d \log6 + \log(1/\delta) \big)
    \leq \frac{\gamma \ttR (T - \tau^*)}{\big\| \overline A_{\tau^* + 1} \big\|}
\]
with the same probability.
Then, with probability at least $(1 - 2\delta)$, it holds that
\begin{align*}
    \log \det\left( I_d + \frac{\eta_t}{\lambda} \sum\limits_{j = 1}^{\tau^*} A_j \right)
    &
    \leq d \log\left(1 + \frac{\eta_t \|\overline{A}_1\|}{\lambda} \left\| \sum\limits_{j = 1}^{\tau^*} \big(\overline{A}_1\big)^{-1/2} A_j \big(\overline{A}_1\big)^{-1/2} \right\| \right)
    \\&
    \leq d \log\left(1 + \frac{\gamma B \, \ttR \, \eta_t \tau^*}{\lambda} \right)
\end{align*}
and
\begin{align*}
    \log \det\left( I_d + \frac{\eta_t}{\lambda} \sum\limits_{j = \tau^* + 1}^{t} A_j \right)
    &
    \leq \log \det\left( I_d + \frac{\eta_t}{\lambda} \sum\limits_{j = \tau^* + 1}^{T} A_j \right)
    \\&
    \leq d \log\left(1 + \frac{\eta_t \|\overline{A}_{\tau^* + 1}\|}{\lambda} \left\| \sum\limits_{j = \tau^* + 1}^{T} \big(\overline{A}_{\tau^* + 1} \big)^{-1/2} A_j \big(\overline{A}_{\tau^* + 1} \big)^{-1/2} \right\| \right)
    \\&
    \leq d \log\left(1 + \frac{\gamma B \, \ttR \, \eta_t (T - \tau^*)}{\lambda} \right)
\end{align*}
Thus, with probability at least $(\p(\cE_2) - 3\delta) \geq (1 - 6 \delta)$, it holds that
\begin{align*}
    \widehat L_{1:t}^{FS} - L_{1:\tau^*}(\btheta_1^\star) - L_{\tau^* + 1:t}(\btheta_{\tau^* + 1}^\star)
    &
    \leq \frac{\lambda \, \ttR^2}{2 \eta_t} \left( \frac1{\big\| \overline A_1 \big\|} + \frac1{\big\| \overline A_{\tau^* + 1} \big\|} \right)
    + \frac{d}{2\eta_t} \log\left(1 + \frac{\gamma B \, \ttR \, \eta_t \tau^*}{\lambda} \right)
    \\&\quad
    + \frac{d}{2\eta_t} \log\left(1 + \frac{\gamma B \, \ttR \, \eta_t (T - \tau^*)}{\lambda} \right) + \frac{1 + \log T}{\eta_t}
\end{align*}
simultaneously for all $t \in \{\tau^* + 1, \dots, T\}$.

\medskip

\noindent
\textbf{Step 2: lower bound on $\widehat L_{1:t}^{EW} - L_{1:\tau^*}(\btheta_1^\star) - L_{\tau^* + 1:t}(\btheta_{\tau^* + 1}^\star)$.}
\quad
It remains to provide a lower bound on the cumulative loss $\widehat L_{1:t}^{EW}$. According to
Lemma \ref{lem:b_partial_sum_norm} and the union bound, simultaneously for all \(1 \le t \le T\) and \(\delta \in (0, 1)\) with probability at least \( (1 - \delta) \) it holds that 
\begin{align*}
    \left\|\overline A_t^{1/2} \widehat{\btheta}_{1:t-1}(\eta_t)\right\|
    &
    \le \big\| \overline A_t \big\|^{1/2} \left\| \left( \sum\limits_{j = s}^{t - 1} A_j + \frac{\lambda}\eta I_d \right)^{-1} \sum\limits_{j = s}^{t - 1} \bb_j \right\|
    \le \frac{\big\| \overline A_t \big\|^{1/2}}{(t - s) \gamma} \left\| \sum\limits_{j = s}^{t - 1} \bb_j \right\|
    \\&
    \le \frac{4B \big\| \overline A_t \big\|^{1/2}}{(t - s) \gamma} \left( \big\| \overline A_t \big\|^{1/2} \vee \big\| \overline A_1 \big\|^{1/2} \right) \left(d\log6 + (t - 1)\frac{\log2}{2} + \log\frac{2T}{\delta}\right)
    \leq \ttR.
\end{align*}
Since, by the condition of the theorem (see \eqref{eq:r_change_point}), we also have 
\[
    \left\|\overline A_t^{1/2} \btheta_t^\star \right\|
    = \left\| \big( \overline A_t \big)^{-1/2} \overline \bb_t \right\|
    \leq \ttR
    \quad \text{for any $1 \leq t \leq T$,}
\]
we can apply Theorem \ref{th:lower_bound}. Then, with probability at least $(1 - 2\delta)$, it holds that
\begin{align*}
    \widehat L_{1:t}^{EW} - L_{1:\tau^*}(\btheta_1^\star) - L_{\tau^* + 1:t}(\btheta_{\tau^* + 1}^\star)
    &
    \geq \sum\limits_{j = \tau^* + 1}^t \left( \overline\ell_j \big( \widehat\btheta{}_j^{EW} \big) - \overline\ell_j \big( \btheta_{\tau^* + 1}^\star \big) \right)
    \\&
    - 4B \left(3e \ttR^2 \log(8B) + 4B\right) \log(T/\delta)
    \\&
    - 12B \, \ttR \ttZ (1 + \ttR) \log(T/\delta)
\end{align*}
simultaneously for all $1 \leq t \leq T$.
Hence, applying the union bound, we obtain that
\begin{align*}
    \widehat S_t
    &
    \geq \sum\limits_{j = \tau^* + 1}^t \left( \overline\ell_j \big( \widehat\btheta{}_j^{EW} \big) - \overline\ell_j \big( \btheta_{\tau^* + 1}^\star \big) \right) - \frac{1 + \log T}{\eta_t} - \frac{\lambda \, \ttR^2}{2 \eta_t} \left( \frac1{\big\| \overline A_1 \big\|}
    + \frac1{\big\| \overline A_{\tau^* + 1} \big\|} \right)
    \\&\quad
    - \frac{d}{2\eta_t} \log\left(1 + \frac{\gamma B \, \ttR \, \eta_t \tau^*}{\lambda} \right)
    - \frac{d}{2\eta_t} \log\left(1 + \frac{\gamma B \, \ttR \, \eta_t (T - \tau^*)}{\lambda} \right)
    \\&\quad
    - 4B \left(3e \ttR^2 \log(8B) + 4B\right) \log(T/\delta)
    - 12B \, \ttR \ttZ (1 + \ttR) \log(T/\delta)
\end{align*}
for all $t \in \{\tau^* + 1, \dots, T\}$ on an even of probability at least $(1 - 8\delta)$.
Finally, since $\btheta_{\tau^* + 1}^\star$ minimizes the quadratic loss $\overline{\ell}_j$ for all $j \in \{\tau^* + 1, \dots, T\}$, it holds that
\[
    \overline\ell_j \big( \widehat\btheta{}_j^{EW} \big) - \overline\ell_j \big( \btheta_{\tau^* + 1}^\star \big)
    = \frac12 \left\| \big(\overline{A}_j\big)^{1/2} \big( \widehat\btheta{}_j^{EW} - \btheta_{\tau^* + 1}^\star \big) \right\|^2.
\]
Due to the definition of $\overline{A}_j$, we have
\begin{align*}
    \overline\ell_j \big( \widehat\btheta{}_j^{EW} \big) - \overline\ell_j \big( \btheta_{\tau^* + 1}^\star \big)
    &
    = \E_{\bX_j} \left\| \nabla \bPsi(\bX_j)^\top \big( \widehat\btheta{}_j^{EW} - \btheta_{\tau^* + 1}^\star \big) \right\|^2 + \gamma \left\| \widehat\btheta{}_j^{EW} - \btheta_{\tau^* + 1}^\star \right\|^2
    \\&
    = \E_{\bX_j} \left\| \bnabla \log \sfp_{\widehat\btheta{}_j^{EW}}(\bX_j) - \bnabla \log \sfp_{\btheta_{\tau^* + 1}^\star}(\bX_j) \right\|^2 + \gamma \left\| \widehat\btheta{}_j^{EW} - \btheta_{\tau^* + 1}^\star \right\|^2.
\end{align*}

\myendproof

\subsection{Proof of Theorem \ref{th:lower_bound}}

\begin{proof}  
    Since \( \{ \widecheck{\btheta}_t : 1 \leq t \leq T\}\) is a non-anticipating sequence, we can define a martingale difference sequence
    \begin{align*}
        \left\{ \xi_t = \overline{\ell}_t(\widecheck{\btheta}_t) - \overline{\ell}_t(\btheta^\star_t) - \ell_t(\widecheck{\btheta}_t) + \ell_t(\btheta_t^\star) : 1 \leq t \leq T \right\}
    \end{align*}
    adapted to the filtration \(\{\cF_t : 0 \leq t \leq T\}\).
    Let us fix arbitrary $\delta \in (0, 1)$ and $s \in \{1, \dots, T\}$ and show that
    \begin{align*}
        \sum\limits_{t=1}^s \big(\ell_t(\widecheck{\btheta}_t) - \ell_t(\btheta_t^\star)\big)
        &
        \geq \frac12 \sum\limits_{t = 1}^s \big( \overline{\ell}_t(\widecheck{\btheta}_t) - \overline{\ell}_t(\btheta_t^\star) \big) - 4B \left(3eR^2\log(8B) + 4B\right) \log(1/\delta)
        \\&\quad
        - 12BRz (1 + R) \log(1/\delta)
    \end{align*}
    with probability at least \(\p(\cE) - \delta\).
    According to Lemma \ref{lem:loss_diff_var_bound}, we have 
    \begin{align*}
        \E \left[ (\ell_t(\widecheck{\btheta}_t) - \ell_t(\btheta_t^\star))^2 \,\big|\, \cF_{t-1} \right] 
        &
        \le B\left\|\overline{A}_t^{1/2}(\widecheck{\btheta}_t-\btheta_t^\star)\right\|^2\left(\frac{3e}{2}\left\|\overline{A}_t^{1/2}(\widecheck{\btheta}_t+\btheta_t^\star)\right\|^2 \log(8B) + 8B\right)
        \\&
        \le B \left(6eR^2\log(8B) + 8B\right) \left\|\overline{A}_t^{1/2} (\widecheck{\btheta}_t - \btheta_t^\star)\right\|^2.
    \end{align*}
    This yields that
    \[
        \sum\limits_{t = 1}^s \E[\xi^2_t \mid \cF_{t-1}]
        \le 2B \left(3eR^2\log(8B) + 4B\right) \sum\limits_{t = 1}^s \left\|\overline{A}_t^{1/2} (\widecheck{\btheta}_t - \btheta_t^\star)\right\|^2.
    \]
    On the other hand, Lemma \ref{lem:loss_diff_psi1_bound} implies that
    \begin{align*}
         \|\ell_t(\widecheck{\btheta}_t) - \ell_t(\btheta_t^\star)\mid \cF_{t-1} \|_{\psi_1} &
         \le B \left\|\overline{A}_t^{1/2}(\widecheck{\btheta}_t - \btheta_t^\star)\right\| \left(1 + \frac12 \left\|\overline{A}_t^{1/2} (\widecheck{\btheta}_t + \btheta_t^\star) \right\| \right)
         \\&
         \le B \left(1 + R\right) \left\| \overline{A}_t^{1/2} (\widecheck{\btheta}_t - \btheta_t^\star) \right\|. 
    \end{align*}
    Since any sub-exponential random variable \(\eta\) satisfies
    \[
        \|\eta - \E\eta\|_{\psi_1}
        \le \|\eta\|_{\psi_1} + \left\| \E \eta \right\|_{\psi_1}
        = \|\eta\|_{\psi_1} + \frac{\left| \E \eta \right|}{\log 2}
        \leq \left( 1 + \frac{2}{\log 2} \right) \|\eta\|_{\psi_1}
        \leq 4 \|\eta\|_{\psi_1},
    \]
    we obtain that
    \begin{align*}
        \sum\limits_{t = 1}^s \left\| \xi_t \mid \cF_{t - 1} \right\|_{\psi_1}^2
        \leq 16 B^2 (1 + R)^2 \sum\limits_{t = 1}^s \left\| \overline{A}_t^{1/2} (\widecheck{\btheta}_t - \btheta_t^\star) \right\|^2
    \end{align*}
    and 
    \begin{align*}
        \max_{1 \le t \le s} \left\| \xi_t \mid \cF_{t - 1} \right\|_{\psi_1}
        \le 4BR (1 + R).
    \end{align*}
    Hence, we are in position to apply Theorem 2.1 from \citet{kroshnin25} claiming that with probability at least \(\p(\cE) - \delta\) 
    \[
        \sum\limits_{t = 1}^s \xi_t
        \le \upvarsigma \sqrt{2\log(1/\delta)} + 12BRz (1 + R) \log(1/\delta),
    \]
    where
    \[
        \upvarsigma^2 = 2B \left(3eR^2\log(8B) + 4B\right) \sum\limits_{t = 1}^s \left\|\overline{A}_t^{1/2} (\widecheck{\btheta}_t - \btheta_t^\star)\right\|^2
        \quad \text{and} \quad
        z = 1 \vee \log \frac{e(1 + R)\sqrt{2B}}{\sqrt{3eR^2\log(8B) + 4B}}.
    \]
    Note that, since \(\btheta_t^\star\) minimizes the expected loss \(\overline{\ell}_t(\btheta) = \btheta^\top \overline{A}_t \btheta / 2 - \bb_t^\top \btheta\), we have
    \[
        \overline{\ell}_t(\widecheck{\btheta}_t) - \overline{\ell}_t(\btheta^\star_t)
        = \frac12 \left\|\overline{A}_t^{1/2} (\widecheck{\btheta}_t - \btheta_t^\star) \right\|^2
        = \frac{\upvarsigma^2}{4B} \left(3eR^2\log(8B) + 4B\right)^{-1}.
    \]
    Using the Cauchy-Schwarz inequality, we obtain that
    \begin{align*}
        \sum\limits_{t = 1}^s \xi_t
        &
        \le \frac{\upvarsigma^2}{8B} \left(3eR^2\log(8B) + 4B\right)^{-1} + 4B \left(3eR^2\log(8B) + 4B\right) \log(1/\delta)
        \\&\quad
        + 12BRz (1 + R) \log(1/\delta)
        \\&
        = \frac12\sum\limits_{t=1}^s\big(\overline{\ell}_t(\widecheck{\btheta}_t) - \overline{\ell}_t(\btheta_t^\star)\big) + 4B \left(3eR^2\log(8B) + 4B\right) \log(1/\delta)
        \\&\quad
        + 12BRz (1 + R) \log(1/\delta)
    \end{align*}
    with probability at least \(\p(\cE) - \delta\).
    Recalling the definition of $\xi_t$ and rearranging the terms, we conclude that    
    \begin{align*}
        \sum\limits_{t=1}^s \big(\ell_t(\widecheck{\btheta}_t) - \ell_t(\btheta_t^\star)\big)
        &
        \geq \frac12 \sum\limits_{t = 1}^s \big( \overline{\ell}_t(\widecheck{\btheta}_t) - \overline{\ell}_t(\btheta_t^\star) \big) - 4B \left(3eR^2\log(8B) + 4B\right) \log(1/\delta)
        \\&\quad
        - 12BRz (1 + R) \log(1/\delta)
    \end{align*}
    on the same event.
    
\end{proof}

\begin{Lem}
    \label{lem:loss_diff_var_bound}
    Under Assumption \ref{as:subexp}, for any \(\btheta \in \R^d\) it holds that
    \begin{align*}
        \E\left(\ell_t(\btheta) - \ell_t(\btheta_t^\star)\right)^2 \le B\left\|\overline{A}_t^{1/2}(\btheta-\btheta_t^\star)\right\|^2 \left(3e \left\|\overline{A}_t^{1/2} (\btheta + \btheta_t^\star) \right\|^2 \log(8B) + 8B\right).
    \end{align*}
\end{Lem}

\begin{proof}
    According to the definition of $\ell_t$ and the Cauchy-Schwarz inequality, it holds that
    \begin{align*}
        \E\left(\ell_t(\btheta) - \ell_t(\btheta_t^\star)\right)^2 \le \E(\btheta^\top A_t \btheta - \btheta_t^{\star\top} A_t \btheta_t^\star)^2 + 2\E\left(\bb_t^\top(\btheta-\btheta_t^\star) \right)^2.
    \end{align*}
    In the rest of the proof, we are going show that
    \begin{equation}
        \label{eq:sq_loss_diff_bound}
        \E(\btheta^\top A_t \btheta - \btheta_t^{\star\top} A_t \btheta_t^\star)^2
        \le 3eB  \log(8B)\left\|\overline{A}_t^{1/2}(\btheta+\btheta_t^\star)\right\|^2 \left\|\overline{A}_t^{1/2}(\btheta-\btheta_t^\star)\right\|^2
    \end{equation}
    and
    \begin{equation}
        \label{eq:lin_loss_diff_bound}
        \E\big(\bb_t^\top(\btheta-\btheta_t^\star)\big)^2 
        \le 
        4B^2 \left\|\overline{A}_t^{1/2}(\btheta-\btheta_t^\star)\right\|^2.
    \end{equation}

    \medskip

    \noindent
    \textbf{Step 1: bound on $\E(\btheta^\top A_t \btheta - \btheta_t^{\star\top} A_t \btheta_t^\star)^2$.}
    \quad
    We start with the proof of \eqref{eq:sq_loss_diff_bound}.
    Let us fix an arbitrary $\btheta \in \R^d$ and introduce unit vectors
    \begin{align}
        \label{eq:uv_def}
        \bu = \frac{\overline{A}_t^{1/2}(\btheta + \btheta_t^\star)}{\left\|\overline{A}_t^{1/2}(\btheta + \btheta_t^\star)\right\|}
        \quad \text{and} \quad
        \bv = \frac{\overline{A}_t^{1/2}(\btheta - \btheta_t^\star)}{\left\|\overline{A}_t^{1/2}(\btheta - \btheta_t^\star)\right\|}.
    \end{align}
    Using the identity
    \[
        \left(\btheta^\top A_t \btheta - \btheta_t^{\star\top} A_t \btheta_t^\star\right)^2
        = \left(\left(A_t^{1/2}(\btheta+\btheta_t^\star)\right)^\top\left(A_t^{1/2}(\btheta-\btheta_t^\star)\right)\right)^2
    \]
    and the Cauchy-Schwarz inequality, we note that
    \begin{align}
        \label{eq:squared_excess_loss_cauchy_schwarz_bound}
        &\notag
        \left(\btheta^\top A_t \btheta - \btheta_t^{\star\top} A_t \btheta_t^\star\right)^2
        \le \left\|A_t^{1/2}(\btheta+\btheta_t^\star)\right\|^2  \left\|A_t^{1/2}(\btheta-\btheta_t^\star)\right\|^2
        \\&
        =  \left\|\overline{A}_t^{1/2} (\btheta + \btheta_t^\star)\right\|^2  \left\|\overline{A}_t^{1/2}(\btheta-\btheta_t^\star)\right\|^2  \left\|A_t^{1/2}\big(\overline{A}_t\big)^{-1/2} \bu \right\|^2 \left\| A_t^{1/2} \big(\overline{A}_t\big)^{-1/2} \bv \right\|^2.
    \end{align}
    Let us focus on the third and the fourth terms in the right-hand side.
    Applying Lemma \ref{lem:expectation_prod_bound}, we obtain that
    \begin{align*}
        &
        \E\left\| A_t^{1/2} \big(\overline{A}_t\big)^{-1/2} \bu \right\|^2 \left\|A_t^{1/2} \big(\overline{A}_t\big)^{-1/2} \bv \right\|^2
        \\
        &
        \le 3e  \E \left\|A_t^{1/2} \big(\overline{A}_t\big)^{-1/2 } \bu \right\|^2 \;
        \left\| \left\| A_t^{1/2} \big(\overline{A}_t\big)^{-1/2} \bv \right\|^2 \right\|_{\psi_1} 
        \log\frac{8 \left\| \left\|A_t^{1/2} \big(\overline{A}_t\big)^{-1/2} \bu \right\|^2 \right\|_{\psi_1}}{\E\left\|A_t^{1/2}\big(\overline{A}_t\big)^{-1/2} \bu \right\|^2}.
    \end{align*}
    Due to Assumption \ref{as:subexp}, the Orlicz norms of $\left\|A_t^{1/2}\big(\overline{A}_t\big)^{-1/2} \bv\right\|^2$ and $\left\|A_t^{1/2} \big(\overline{A}_t\big)^{-1/2} \bu \right\|^2$ do not exceed $B$. On the other hand, it is straightforward to observe that
    \begin{align*}
        \E\left\| A_t^{1/2} \big(\overline{A}_t\big)^{-1/2} \bu \right\|^2 
        = \bu^\top \big(\overline{A}_t\big)^{-1/2} \E A_t \big(\overline{A}_t\big)^{-1/2} \bu
        = \bu^\top \bu
        = 1.
    \end{align*}
    Thus, it holds that
    \begin{align*}
        &
        \E \left\| A_t^{1/2} \big(\overline{A}_t\big)^{-1/2} \bu \right\|^2  
        \left\| A_t^{1/2} \big(\overline{A}_t\big)^{-1/2} \bv \right\|^2 
        \\&
        \le 3e \left\| \left\| A_t^{1/2} \big(\overline{A}_t\big)^{-1/2} \bv \right\|^2 \right\|_{\psi_1} \log \left( 8\left\|\left\|A_t^{1/2}\big(\overline{A}_t\big)^{-1/2} \bu\right\|^2\right\|_{\psi_1} \right)
        \\&
        \le 3eB\log(8B).
    \end{align*}
    Summing up this bound with \eqref{eq:squared_excess_loss_cauchy_schwarz_bound}, we deduce that
    \begin{align*}
        \E(\btheta^\top A_t \btheta - \btheta_t^{\star\top} A_t \btheta_t^\star)^2 \le 3eB \log(8B)\left\|\overline{A}_t^{1/2}(\btheta+\btheta_t^\star)\right\|^2 \left\|\overline{A}_t^{1/2}(\btheta-\btheta_t^\star)\right\|^2,
    \end{align*}
    and \eqref{eq:sq_loss_diff_bound} follows.

    \medskip

    \noindent
    \textbf{Step 2: bound on $\E\big(\bb_t^\top(\btheta-\btheta_t^\star)\big)^2$.}
    \quad
    It remains to check \eqref{eq:lin_loss_diff_bound}. Its proof is straightforward. Recalling the definition of $v$ (see eq.~\ref{eq:uv_def}) and applying Lemma \ref{lem:moment_subexp_norm_bound}, we obtain that
    \[
        \E \big(\bb_t^\top(\btheta - \btheta_t^\star)\big)^2 
        = \left\| \overline{A}_t^{1/2} (\btheta-\btheta_t^\star) \right\|^2 \E\left( \bb_t^\top \big(\overline{A}_t\big)^{-1/2} \bv \right)^2
        \le 
        4 \left\| \overline{A}_t^{1/2} (\btheta-\btheta_t^\star) \right\|^2 \left\| \bb_t^\top \big(\overline{A}_t\big)^{-1/2} \bv \right\|^2_{\psi_1}.
    \]
    Due to Assumption \ref{as:subexp}, the expression in the right-hand side does not exceed $\smash{4 B^2 \left\|\overline{A}_t^{1/2}(\btheta-\btheta_t^\star) \right\|^2}$. Hence, \eqref{eq:lin_loss_diff_bound} holds. Taking together \eqref{eq:sq_loss_diff_bound} and \eqref{eq:lin_loss_diff_bound}, we obtain the desired bound.
    
\end{proof}

\begin{Lem}\label{lem:loss_diff_psi1_bound}
    Under Assumption \ref{as:subexp}, for any \(\btheta \in \R^d\) it holds that
    \[
        \big\|\ell_t(\btheta) - \ell_t(\btheta_t^\star) \big\|_{\psi_1}
        \le B \left\|\overline{A}_t^{1/2}(\btheta - \btheta_t^\star)\right\|\left(1+\frac12 \left\|\overline{A}_t^{1/2}(\btheta + \btheta_t^\star)\right\|\right).
    \]
\end{Lem}

\begin{proof}
    Due to the triangle inequality, it holds that
    \begin{align*}
         \|\ell_t(\btheta) - \ell_t(\btheta_t^\star)\|_{\psi_1}
         \le \frac12 \left\|\btheta^\top A_t \btheta - \btheta_t^{\star\top} A_t \btheta_t^\star \right\|_{\psi_1} + \left\|\bb_t^\top(\btheta-\btheta_t^\star) \right\|_{\psi_1}.
    \end{align*}
    Similarly to the proof of Lemma \ref{lem:loss_diff_var_bound} (see eq.~\ref{eq:uv_def}), let us introduce
    \[
        \bu = \frac{\overline{A}_t^{1/2}(\btheta + \btheta_t^\star)}{\left\|\overline{A}_t^{1/2}(\btheta + \btheta_t^\star)\right\|}
        \quad \text{and} \quad
        \bv = \frac{\overline{A}_t^{1/2}(\btheta - \btheta_t^\star)}{\left\|\overline{A}_t^{1/2}(\btheta - \btheta_t^\star)\right\|}.
    \]
    Then we deduce that
    \begin{align}
        \label{eq:loss_diff_psi1}
        \|\ell_t(\btheta) - \ell_t(\btheta_t^\star)\|_{\psi_1}
        & \notag
        \leq \frac12\left\|A_t^{1/2}(\btheta + \btheta_t^\star)\right\| \, \left\|A_t^{1/2}(\btheta - \btheta_t^\star)\right\| \,
        \left\| \bu^\top\big(\overline{A}_t\big)^{-1/2} A_t \big(\overline{A}_t\big)^{-1/2} \bv \right\|_{\psi_1} \\&\quad
        + \left\| \overline{A}_t^{1/2}(\btheta - \btheta_t^\star) \right\| \, \left\|\bb_t^\top \big(\overline{A}_t\big)^{-1/2} \bv \right\|_{\psi_1}.
    \end{align}
    Since any two random variables \(\xi\) and \(\eta\) with finite \(\psi_2\)-norms satisfy
    \begin{align}\label{eq:psi_1_prod}
        \|\xi\eta\|_{\psi_1} \le \|\xi\|_{\psi_2} \|\eta\|_{\psi_2} = \sqrt{\|\xi\|^2_{\psi_1}}\sqrt{\|\eta\|^2_{\psi_1}},
    \end{align}
    the right-hand side of \eqref{eq:loss_diff_psi1} does not exceed
    \begin{align*}
        &
        \frac12 \left\|A_t^{1/2}(\btheta + \btheta_t^\star) \right\| \,
        \left\| A_t^{1/2} (\btheta - \btheta_t^\star) \right\|
        \sqrt{\left\| \big\|A_t^{1/2} \big(\overline{A}_t\big)^{-1/2} \bu \big\|^2 \right\|_{\psi_1}} \cdot \sqrt{\left\| \big\| A_t^{1/2} \big(\overline{A}_t\big)^{-1/2} \bv \big\|^2 \right\|_{\psi_1}}
        \\&\quad
        + \left\|\overline{A}_t^{1/2} (\btheta - \btheta_t^\star) \right\| \left\|\bb_t^\top \big(\overline{A}_t\big)^{-1/2} \bv \right\|_{\psi_1}.
    \end{align*}
    In view of Assumption \ref{as:subexp}, we obtain that
    \[
        \max\left\{ \left\| \big\|A_t^{1/2} \big(\overline{A}_t\big)^{-1/2} \bu \big\|^2 \right\|_{\psi_1}, \left\| \big\| A_t^{1/2} \big(\overline{A}_t\big)^{-1/2} \bv \big\|^2 \right\|_{\psi_1}, \left\|\bb_t^\top \big(\overline{A}_t\big)^{-1/2} \bv \right\|_{\psi_1} \right\}
        \leq B,
    \]
    and then
    \[
         \big\|\ell_t(\btheta) - \ell_t(\btheta_t^\star) \big\|_{\psi_1}
         \le \frac{B}2\left\|A_t^{1/2}(\btheta + \btheta_t^\star)\right\| \left\|A_t^{1/2}(\btheta - \btheta_t^\star)\right\| + B \left\|\overline{A}_t^{1/2}(\btheta - \btheta_t^\star)\right\|.
    \]
\end{proof}

\begin{Lem}
    \label{lem:scaled_bt_bound}
    Under Assumption~\ref{as:subexp}, for any \(1 \le t \le T\) and for any \(\delta \in (0,1)\) it holds that
    \[
        \left\| A_t^{-1/2} \bb_t \right\|
        \le 2B\sqrt{\frac{\|\overline{A}_t\|}{\gamma}} \big( d\log6 + \log(2/\delta) \big)
    \]
    with probability at least \(1 - \delta\).
\end{Lem}

\begin{proof}
    Let us fix an arbitrary \(\delta \in (0, 1)\). Since, by the definition,
    \[
        A_t = \nabla \bPsi(\bX_t) \nabla \bPsi(\bX_t)^\top + \gamma I_d \succeq \gamma I_d
        \quad \text{almost surely,}
    \]
    it holds that
    \[
         \left\|A_t^{-1/2} \bb_t \right\|
         \le \left\| A_t^{-1/2} \big(\overline{A}_t\big)^{1/2} \right\| \, \left\| \big(\overline{A}_t\big)^{-1/2} \bb_t \right\|
         \leq \left\| \big(\overline{A}_t\big)^{-1/2} \bb_t \right\| \sqrt{\frac{\|\overline{A}_t\|}{\gamma}}.
    \]
    In the rest of the proof, we establish a large deviation bound for the norm of $\big(\overline{A}_t\big)^{-1/2} \bb_t$. Applying the $\eps$-net argument (see \citeauthor{rigollet23}, \citeyear[the proof of Theorem 1.19]{rigollet23}), we obtain that for any $\z > 0$
    \begin{align*}
        \p\left( \big\| \big(\overline{A}_t\big)^{-1/2} \bb_t \big\| > \z \right)
        = \p\left( \sup_{\|\bu\| = 1} \bu^\top \big(\overline{A}_t\big)^{-1/2} \bb_t > \z \right) 
        \le 6^d \sup_{\|\bu\| = 1} \p\left( \bu^\top \big(\overline{A}_t\big)^{-1/2} \bb_t > \z/2 \right).
    \end{align*}
    Due to Assumption \ref{as:subexp}, for any unit vector $\bu$ we have
    \begin{align*}
        \p\left( \bu^\top \big(\overline{A}_t\big)^{-1/2} \bb_t > \z/2 \right)
        &
        = \p\left( \exp\left\{ \bu^\top \big(\overline{A}_t\big)^{-1/2} \bb_t / B \right\} > \exp\left\{ \frac{\z}{2B} \right\} \right)
        \\&
        \le \exp\left\{-\frac{\z}{2B}\right\} \, \E \exp\left\{ |\bu^\top \big(\overline{A}_t\big)^{-1/2} \bb_t| / B \right\}
        \\&
        \le 2\exp\left\{-\frac{\z}{2B}\right\}.
    \end{align*}
    This yields that
    \[
        \p\left( \big\| \big(\overline{A}_t\big)^{-1/2} \bb_t \big\| > \z \right)
        \leq 6^d \sup_{\|\bu\| = 1} \p\left( \bu^\top \big(\overline{A}_t\big)^{-1/2} \bb_t > \z/2 \right)
        \le 2 \cdot 6^d \cdot \exp\left\{-\frac{\z}{2B}\right\}.
    \]
    Hence, with probability at least \(1-\delta\), it holds that
    \[
        \left\| \big(\overline{A}_t\big)^{-1/2} \bb_t \right\|
        \le 2B \big( d\log6 + \log(2/\delta) \big),
    \]
    and, consequently,
    \begin{align*}
         \left\|A_t^{-1/2}\bb_t \right\|
         \le
         \left\| \big(\overline{A}_t\big)^{-1/2} \bb_t \right\| \sqrt{\frac{\|\overline{A}_t\|}{\gamma}} \le 2B \sqrt{\frac{\|\overline{A}_t\|}{\gamma}} \big( d\log6 + \log(2/\delta) \big).
    \end{align*}
\end{proof}

\begin{Lem}
    \label{lem:a_sum_norm}
    For any $1 \leq s \leq t \leq T$ and any $\delta \in (0, 1)$, with probability at least $(1 - \delta)$ it holds that
    \[
        \left\| \sum\limits_{j = s}^t \big(\overline{A}_j\big)^{-1/2} A_j \big(\overline{A}_j\big)^{-1/2} \right\|
        \le 4B \big( (t - s + 1) \log 2 + d \log6 + \log(1/\delta) \big).
    \]
\end{Lem}

\begin{proof}
    Let us fix an arbitrary \(\z > 0\) and consider the probability 
    \[
        \p\left( \left\| \sum\limits_{j = s}^t \big(\overline{A}_j\big)^{-1/2} A_j \big(\overline{A}_j\big)^{-1/2} \right\| > \z \right).
    \]
    By the definition of the matrix spectral norm, 
    \begin{align*}
        \left\| \sum\limits_{j = s}^t \big(\overline{A}_j\big)^{-1/2} A_j \big(\overline{A}_j\big)^{-1/2} \right\|
        &
        = \sup_{\|\bu\|=1} \left| \bu^\top \sum\limits_{j = s}^t \big(\overline{A}_j\big)^{-1/2} A_j \big(\overline{A}_j\big)^{-1/2} \bu \right|
        \\&
        = \sup_{\|\bu\|=1} \left\| \left( \sum\limits_{j = s}^t \big(\overline{A}_j\big)^{-1/2} A_j \big(\overline{A}_j\big)^{-1/2} \right)^{1/2} \bu \right\|^2.
    \end{align*}
    Applying the standard \(\eps\)-net argument (see \citeauthor{rigollet23}, \citeyear[proof of Theorem 1.19]{rigollet23}), 
    \begin{align*}
        \p\left( \left\| \sum\limits_{j = s}^t \big(\overline{A}_j\big)^{-1/2} A_j \big(\overline{A}_j\big)^{-1/2} \right\| > \z \right)
        = \p\left( \sup_{\|\bu\|=1} \left\| \left( \sum\limits_{j = s}^t \big(\overline{A}_j\big)^{-1/2} A_j \big(\overline{A}_j\big)^{-1/2} \right)^{1/2} \bu \right\| > \sqrt{\z} \right)
        &\\
        \le 6^d \sup_{\|\bu\|=1} \p\left( \left\| \left( \sum\limits_{j = s}^t \big(\overline{A}_j\big)^{-1/2} A_j \big(\overline{A}_j\big)^{-1/2} \right)^{1/2} \bu \right\| > \frac{\sqrt{\z}}{2} \right)
        &.
    \end{align*}
    Moreover, in view of Assumption \ref{as:subexp}, for any unit vector $\bu$ it holds that
    \begin{align*}
        &
        \p\left( \left\| \left( \sum\limits_{j = s}^t \big(\overline{A}_j\big)^{-1/2} A_j \big(\overline{A}_j\big)^{-1/2} \right)^{1/2} \bu \right\| > \frac{\sqrt{\z}}{2} \right)
        \\&
        = \p\left( \exp\left\{ \frac1B \left( \bu^\top \sum\limits_{j = s}^t \big(\overline{A}_j\big)^{-1/2} A_j \big(\overline{A}_j\big)^{-1/2} \bu \right) \right\} > \exp\left\{\frac{\z}{4B} \right\} \right) 
        \\&
        \le e^{-\z / (4B)} \, \E \exp\left\{ \frac1B \left( \bu^\top \sum\limits_{j = s}^t \big(\overline{A}_j\big)^{-1/2} A_j \big(\overline{A}_j\big)^{-1/2} \bu \right) \right\}
        \\&
        \le e^{-\z / (4B)} \, \prod\limits_{j = s}^t \E \exp\left\{ \frac{\bu^\top \big(\overline{A}_j\big)^{-1/2} A_j \big(\overline{A}_j\big)^{-1/2} \bu}B \right\}
        \\&
        \leq 2^{t - s + 1} \, e^{-\z / (4B)}.
    \end{align*}
    This yields that
    \[
        \p\left( \left\| \sum\limits_{j = s}^t \big(\overline{A}_j\big)^{-1/2} A_j \big(\overline{A}_j\big)^{-1/2} \right\| > \z \right)
        \leq 2^{t - s + 1} \cdot 6^d \cdot e^{-\z / (4B)}.
    \]
    Therefore, for any \(\delta \in (0,1)\) with probability at least \(1-\delta\) it holds that
    \[
        \left\| \sum\limits_{j = s}^t \big(\overline{A}_j\big)^{-1/2} A_j \big(\overline{A}_j\big)^{-1/2} \right\|
        \le 4B \big( (t - s + 1) \log 2 + d \log6 + \log(1/\delta) \big).
    \]
    
\end{proof}

\begin{Lem}
    \label{lem:b_partial_sum_norm}
    Under Assumption~\ref{as:subexp}, for any \(1 \le s < t \le T\) and \(\delta \in (0,1)\), with probability at least \((1 - \delta)\) it holds that
    \[
        \left\| \sum\limits_{j = s}^{t - 1} \bb_j \right\|
        \le 4B \left(\|\overline{A}_t\|^{1/2} \vee \|\overline{A}_1\|^{1/2} \right) \left(d\log6 + (t-s)\frac{\log2}{2} + \log\frac{2}{\delta}\right)
    \]
\end{Lem}

\begin{proof}
    We split the sum of interest into pre-change and post-change segments
    \[
        \big(\overline{A}_t\big)^{-1/2} \sum\limits_{j = s}^{\tau} \bb_j
        \quad \text{and} \quad
        \big(\overline{A}_t\big)^{-1/2} \sum\limits_{j = \tau + 1}^{t - 1} \bb_j,
        \quad \text{respectively,}
    \]
    and study these two terms separately.
    Let us note that
    \[
        \overline{A}_k
        = \begin{cases}
            \overline{A}_1, \quad \text{if \(k \le \tau\),}\\
            \overline{A}_t, \quad \text{otherwise.}
        \end{cases}
    \]
    Then, due to the triangle inequality, it holds that
    \begin{align*}
        \left\| \sum\limits_{j = s}^{t - 1} \bb_j \right\|
        &
        \le \left\| \overline{A}_1 \right\|^{1/2}
        \cdot 
        \left\| \big(\overline{A}_1\big)^{-1/2} \sum\limits_{j = s}^{\tau} \bb_j \right\|
        + \left\| \overline{A}_t \right\|^{1/2}
        \cdot \left\| \big(\overline{A}_t\big)^{-1/2} \sum\limits_{j = \tau + 1}^{t - 1} \bb_j \right\|
        \\&
        = \left\| \overline{A}_1 \right\|^{1/2}
        \cdot \left\| \sum\limits_{j = s}^{\tau} \big(\overline{A}_j\big)^{-1/2}  \bb_j \right\|
        + \left\| \overline{A}_t \right\|^{1/2}
        \cdot \left\| \sum\limits_{j = \tau + 1}^{t - 1} \big(\overline{A}_j\big)^{-1/2} \bb_j \right\|.
    \end{align*}
    Let us elaborate on the norm of 
    \[
        \sum\limits_{j = s}^{\tau} \big(\overline{A}_j\big)^{-1/2} \bb_j.
    \]
    Applying the standard \(\eps\)-net argument (see \citeauthor{rigollet23}, \citeyear[proof of Theorem 1.19]{rigollet23}), we obtain that
    \[
        \p\left( \left\| \big(\overline{A}_1\big)^{-1/2} \sum\limits_{j = s}^{\tau} \bb_j \right\| > \z \right)
        \le 6^d \sup_{\|\bu\| = 1} \p\left( \bu^\top \big(\overline{A}_1\big)^{-1/2} \sum\limits_{j = s}^{\tau} \bb_j  > \frac{\z}{2} \right).
    \]
    Due to Assumption~\ref{as:subexp}, for any unit vector $u$ it holds that
    \begin{align*}
        \p\left( \bu^\top \big(\overline{A}_1\big)^{-1/2} \sum\limits_{j = s}^{\tau} \bb_j  > \frac{\z}{2} \right) 
        \le e^{-\z / (2B)} \prod_{j = s}^\tau \E \exp\left\{ \frac{\left|\bu^\top \left(\overline{A}_1\right)^{-1/2} \bb_j\right|}{B} \right\} 
        \le 2^{\tau - s + 1} \cdot e^{-\z / (2B)}.
    \end{align*}
    Therefore, for any \(\delta \in (0,1)\) with probability at least \((1 - \delta/2)\)
    \begin{align*}
        \left\| \big(\overline{A}_1\big)^{-1/2} \sum\limits_{j = s}^{\tau} \bb_j \right\|
        \le 2B \left( d\log6 + (\tau - s + 1) \log2 + \log\frac2\delta \right).
    \end{align*}
    Similarly, with probability at least \((1 - \delta/2)\)
    \begin{align*}
        \left\| \big(\overline{A}_t\big)^{-1/2} \sum\limits_{j = \tau + 1}^{t - 1} \bb_j \right\|  \le 2B \left(d\log6 + (t - \tau - 1)\log2 + \log\frac2\delta \right).
    \end{align*}
    Hence, due to the union bound, we obtain that
    \begin{align*}
        \left\| \sum\limits_{j = s}^{t - 1} \bb_j \right\|
        \le 4B \left(\|\overline{A}_t\|^{1/2} \vee \|\overline{A}_1\|^{1/2} \right) \left( d\log6 + (t - s) \frac{\log2}{2} + \log\frac{2}{\delta}\right)
    \end{align*}
    with probability at least $(1 - \delta)$.
    
\end{proof}

\begin{Lem}
    \label{lem:ew_prediction_high-probability_bound}
    Under Assumption~\ref{as:subexp}, for any \(1 \le s < t \le T\) and \(\delta \in (0,1/2)\) with probability at least \(1-2\delta\) it holds that 
    \begin{align*}
        \left\|A_t^{1/2} \widehat{\btheta}_{s:t-1}(\eta)\right\|
        \le 8B^{3/2} \big( d\log6 + \log(2/\delta) \big)^{1/2}
        & \cdot \|\overline{A}_t\|^{1/2} \sqrt{\|\overline{A}_t\| \vee \|\overline A_1\|}
        \\&
        \cdot \left(\frac{\log2}{2\gamma} \vee \frac{d\log6 + \log(2\sqrt{2}/\delta)}{\gamma + \lambda/\eta}\right).
    \end{align*}
\end{Lem}

\begin{proof}
    We start with the inequality
    \begin{align*}
        \left\| A_t^{1/2} \widehat{\btheta}_{s:t-1}(\eta) \right\|^2 
        \le \left\| \big(\overline{A}_t\big)^{-1/2} A_t \big(\overline{A}_t\big)^{-1/2} \right\|
        \cdot \left\| \overline{A}_t^{1/2} \widehat{\btheta}_{s:t-1}(\eta) \right\|^2.
    \end{align*}
    According to Lemma \ref{lem:a_sum_norm}, 
    for any \(\delta \in (0,1)\), with probability at least \(1-\delta\) the first factor in the right-hand side does not exceed
    \begin{equation}
        \label{eq:At_meanA_norm}
        \left\| \big(\overline{A}_t\big)^{-1/2} A_t \big(\overline{A}_t\big)^{-1/2} \right\|
        \le 4B \big( d\log6 + \log(2/\delta) \big).
    \end{equation}
    The rest of the proof is devoted to analysis of $\overline{A}_t^{1/2} \widehat{\btheta}_{s:t-1}(\eta)$.
    Recalling the definition of the exponentially weighed average forecaster (see eq.~\ref{eq:ew_st}) and using the fact that \(A_j \succeq \gamma I_d\) for all $j \in \{1, \dots, T\}$ (see eq.~\eqref{eq:ellt}), we obtain that
    \[
        \left\|\overline{A}_t^{1/2}\widehat{\btheta}_{s:t-1}(\eta)\right\|
        \le \left\| \overline{A}_t^{1/2} \left(\sum\limits_{j = s}^{t - 1} A_j + \frac{\lambda}{\eta} I_d \right)^{-1} \right\| 
        \, \left\| \sum\limits_{j = s}^{t - 1} \bb_j \right\|
        \le \frac{\|\overline{A}_t\|^{1/2}}{(t - s) \gamma + \lambda / \eta} \left\| \sum\limits_{j = s}^{t - 1} \bb_j \right\|.
    \]
    According to Lemma~\ref{lem:b_partial_sum_norm}, with probability at least $(1 - \delta)$ it holds that
    \begin{align}
        \label{eq:meanA_ew_norm}
        \left\|\overline{A}_t^{1/2}\widehat{\btheta}_{s:t-1}(\eta)\right\|
        &\notag
        \le \frac{4B\|\overline{A}_t\|}{\gamma(t-s) + \lambda/\eta}\left(1 \vee \sqrt{\frac{\|\overline{A}_1\|}{\|\overline{A}_t\|}}\;\right) \cdot \left(d\log6 + (t-s)\frac{\log2}{2} + \log\frac{2}{\delta}\right)
        \\&
        \le 4B \|\overline{A}_t\|^{1/2} \sqrt{\|\overline{A}_t\| \vee \|\overline{A}_1\|}  \cdot \left( \frac{\log2}{2\gamma} \vee \frac{d\log6 + \log(2\sqrt{2}/\delta)}{\gamma + \lambda/\eta} \right).
    \end{align}
    Combining the inequalities \eqref{eq:At_meanA_norm} and \eqref{eq:meanA_ew_norm} and using the union bound, we conclude that with probability at least \( (1 - 2\delta) \) 
    \begin{align*}
        \left\|A_t^{1/2} \widehat{\btheta}_{s:t-1}(\eta)\right\| \le 2\sqrt{B} \big(d\log6 + \log(2/\delta)\big)^{1/2}
        &
        \cdot 4B\|\overline{A}_t\|^{1/2} \sqrt{\|\overline{A}_t\| \vee \|A_1\|}
        \\&
        \cdot \left(\frac{\log2}{2\gamma} \vee \frac{d\log6 + \log(2\sqrt{2} / \delta)}{\gamma + \lambda / \eta}\right).
    \end{align*}
\end{proof}

\section{Regret bounds}

In this section, we present regret bounds for the exponentially weighted average and the fixed share forecasters. The results are valid for arbitrary $A_t \succeq O_d$ and $\bb_t \in \R^d$ (not necessarily related to $\nabla \bPsi(\bX_t)$ and $\Delta \bPsi(\bX_t)$).

\subsection{Regret Bounds for Exponentially Weighted Average Forecaster}
\label{sec:ew_appendix}

We start with auxiliary results on the regret of the exponentially weighted average forecaster with Gaussian prior. They will be helpful in the analysis of the fixed share algorithm. Our approach relies on the mixability argument of \cite{vovk01} and has an important advantage over the online convex optimization techniques \citep{hazan07, hoeven18}. In \citealp{hazan07, hoeven18}, the authors require the parameter space to be bounded in order to exploit exp-concavity of the quadratic loss on a compact set. In addition, \citet*{hazan07} compare the cumulative loss of the exponential weighting with the best expert over a bounded region. In contrast, we are interested in the case of unbounded domain.  

We proceed with regret analysis of the exponentially weighted forecaster. It is easy to observe that
\[
    \ell_t(\btheta)
    = \frac12 \btheta^\top A_t \btheta - \bb_t^\top \btheta
    = \frac12 \left\|A_t^{1/2} \left(\btheta - A_t^\dag \bb_t \right) \right\|^2 - \btheta^\top \bb_t^\perp,
\]
where $\bb_t^\perp$ is the projection of $\bb_t$ onto $\Im(A_t)^\perp$.
In other words, $\ell_t$ consists of mixable and linear parts. It is known that worst-case regret bounds for mixable and linear losses are different. If $\bb_1^\perp, \dots, \bb_T^\perp$ are negligible, we can expect the regret
\[
    R_{1:T}^{EW} = \sum\limits_{t = 1}^T \ell_t(\widehat \btheta_t^{EW}) - \inf\limits_{\btheta \in \R^d} \sum\limits_{t = 1}^T \ell_t(\btheta)
\]
to be of order $O(\log T)$. Otherwise, $R_{1:T}^{EW}$ will be as large as $O(\sqrt{T})$ provided that the parameters are tuned properly. Our analysis based on the following technical lemma captures both favourable and worst-case scenarios.

\begin{Lem}
    \label{lem:ew_mixability}
    Consider a quadratic loss function
    \[
        \ell(\bups) = \frac12 \bups^\top A \bups - \bb^\top \bups
    \]
    with arbitrary symmetric positive semidefinite matrix $A \in \R^{d \times d}$ and $\bb \in \R^d$. Let $\btheta \sim \cN(\bmu, \Omega^{-1})$ be a Gaussian random vector in $\R^d$. Then, for any $\eta > 0$ satisfying the inequality
    \begin{equation}
        \label{eq:ew_eta_condition_simplified}
        \left\| A^{1/2} \left(\bmu - A^\dag \bb \right) \right\|^2 \leq \frac1{2\eta},
    \end{equation}
    it holds that
    \[
        \exp\left\{-\eta \ell(\bmu) + \eta^2 \|\Omega^{-1/2} \, \bb^\perp\|^2 \right\}
        \geq \E \exp\big\{ -\eta \ell (\btheta) \big\},
    \]
    where $\bb^\perp$ is the projection of $\bb$ onto the orthogonal complement of $\Im(A)$.
\end{Lem}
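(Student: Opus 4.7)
\quad The plan is to split $\ell$ into a mixable quadratic piece and a linear piece supported on $\ker(A)$, then control the two resulting exponential moments separately. Decompose $\bb = (\bb - \bb^\perp) + \bb^\perp$ and set
\[
\tilde\ell(\btheta) = \tfrac12 \btheta^\top A \btheta - \btheta^\top (\bb - \bb^\perp) = \tfrac12 \|A^{1/2}(\btheta - A^\dag \bb)\|^2 - \tfrac12 \bb^\top A^\dag \bb,
\]
so that $\ell(\btheta) = \tilde\ell(\btheta) - \btheta^\top \bb^\perp$. By Cauchy--Schwarz,
\[
\E e^{-\eta \ell(\btheta)} = \E \bigl[ e^{-\eta \tilde\ell(\btheta)} \, e^{\eta \btheta^\top \bb^\perp} \bigr] \leq \sqrt{\E e^{-2\eta \tilde\ell(\btheta)}} \cdot \sqrt{\E e^{2\eta \btheta^\top \bb^\perp}}.
\]

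The second factor is immediate: since $\btheta^\top \bb^\perp$ is Gaussian with mean $\bmu^\top \bb^\perp$ and variance $\|\Omega^{-1/2} \bb^\perp\|^2$, its moment generating function gives $\sqrt{\E e^{2\eta \btheta^\top \bb^\perp}} = e^{\eta \bmu^\top \bb^\perp + \eta^2 \|\Omega^{-1/2}\bb^\perp\|^2}$. Since $-\eta \tilde\ell(\bmu) + \eta \bmu^\top \bb^\perp = -\eta \ell(\bmu)$, proving the lemma reduces to the mixability-type inequality $\E e^{-2\eta \tilde\ell(\btheta)} \leq e^{-2\eta \tilde\ell(\bmu)}$.

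The main obstacle is to establish this last inequality. The strategy is to centre $\bxi = \btheta - \bmu \sim \cN(\bzero, \Omega^{-1})$, use the Taylor expansion $\tilde\ell(\bmu + \bxi) = \tilde\ell(\bmu) + (A\bmu - \bb + \bb^\perp)^\top \bxi + \tfrac12 \bxi^\top A \bxi$, and evaluate the resulting Gaussian integral in closed form by completing the square. After taking logarithms, the target inequality reduces to
\[
2\eta^2 \bu^\top A (\Omega + 2\eta A)^{-1} A \bu \leq \tfrac12 \log \det(I_d + 2\eta \Omega^{-1} A), \qquad \bu := \bmu - A^\dag \bb.
\]
To put both sides on equal footing I would simultaneously diagonalize $A$ and $\Omega$: choose an invertible $V$ with $V^\top \Omega V = I_d$ and $V^\top A V = \diag(d_1, \dots, d_d)$. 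Setting $\bw = V^{-1} \bu$, $x_i = 2\eta d_i$ and $y_i = 2\eta d_i w_i^2$, the hypothesis \eqref{eq:ew_eta_condition_simplified} rewrites as $\sum_i y_i \leq 1$, and the displayed inequality becomes the elementary
\[
\sum_i \frac{x_i y_i}{1 + x_i} \leq \sum_i \log(1 + x_i).
\]
This follows from the scalar estimate $x/(1+x) \leq \log(1+x)$ for $x \geq 0$ via the chain $\sum_i \frac{x_i y_i}{1+x_i} \leq \max_j \frac{x_j}{1+x_j} \leq \max_j \log(1+x_j) \leq \sum_j \log(1+x_j)$, where the first step uses $\sum_i y_i \leq 1$ and the non-negativity of the $y_i$. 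I expect the main technical difficulty to lie in this reduction and in verifying that the threshold $1/(2\eta)$ in the hypothesis is exactly what makes the constraint $\sum_i y_i \leq 1$ (and hence the scalar bound on $\max_j x_j/(1+x_j)$) suffice.
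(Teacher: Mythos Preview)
Your proposal is correct and follows essentially the same route as the paper. The paper also splits $\bb=\bb^{\parallel}+\bb^{\perp}$, applies Cauchy--Schwarz, evaluates the Gaussian MGF on the linear piece, and reduces to the mixability inequality $\E e^{-2\eta\tilde\ell(\btheta)}\le e^{-2\eta\tilde\ell(\bmu)}$; it packages the latter as an auxiliary lemma (proved by a direct Gaussian integral plus the matrix identity $\|(\Omega+\eta A)^{-1/2}(\eta\bb+\Omega\bmu)\|^2-\bmu^\top\Omega\bmu+2\eta\ell(\bmu)=\eta^2\|(\Omega+\eta A)^{-1/2}(A\bmu-\bb)\|^2$ and the bound $\|B^{1/2}(\Omega+B)^{-1}B^{1/2}\|\le\log\det(I_d+\Omega^{-1/2}B\Omega^{-1/2})$), whereas you centre at $\bmu$ first and simultaneously diagonalize---your scalar chain $\sum_i\frac{x_iy_i}{1+x_i}\le\max_j\frac{x_j}{1+x_j}\le\max_j\log(1+x_j)\le\sum_j\log(1+x_j)$ is exactly the diagonalized form of those two matrix lemmas, so the arguments coincide at the core.
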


We postpone the proof of Lemma~\ref{lem:ew_mixability} to Section~\ref{sec:lem_ew_mixability_proof}. 
Let us recall that the exponentially weighted average $\widehat\btheta{}_t^{EW}$ is the mean of the posterior measure~\eqref{eq:posterior}, which is Gaussian in our case. Using Lemma~\ref{lem:ew_mixability} and the argument of \citet{vovk01}, we derive the following bound on $R_{1:T}^{EW}$.

\begin{Th}
    \label{th:ew_regret}
    Assume that the parameters $\lambda > 0$ and $\eta_1 \geq \dots \geq \eta_T > 0$ are chosen in such a way that
    \begin{equation}
        \label{eq:ew_condition}
        \left\| A_t^{1/2} \left(\widehat\btheta_{1:t - 1}(\eta_t) - A_t^\dag \bb_t \right) \right\|^2 \leq \frac1{2\eta_t}
        \quad \text{for all $t \in \{1, \dots, T\}$,}
    \end{equation}
    where $\widehat \btheta_{1:t-1}$ is defined in~\eqref{eq:ew_st}. For any $t \in \{1, \dots, T\}$, let $\bb_t^\perp$ stand for the projection of $\bb_t$ onto $\Im(A_t)^\perp$ and denote
    \[
        \Omega_t = \lambda I_d + \eta_t \sum\limits_{j = 1}^{t - 1} A_j.
    \]
    Then the regret $R_{1:T}^{EW}$ of the exponentially weighted average forecaster satisfies the inequality
    \begin{align*}
        R_{1:T}^{EW} 
        &
        \leq \frac{\lambda \|\btheta_{1:T}^\circ\|^2}{2 \eta_T} + \frac1{2\eta_T} \log \det\left( I_d + \frac{\eta_T}{\lambda} \sum\limits_{t = 1}^T A_t \right) + \sum\limits_{t = 1}^T \eta_t \| \Omega_t^{-1/2} \, \bb_t^\perp \|^2
        \\&
        \leq \frac{\lambda \|\btheta_{1:T}^\circ\|^2}{2 \eta_T} + \frac1{2\eta_T} \log \det\left( I_d + \frac{\eta_T}{\lambda} \sum\limits_{t = 1}^T A_t \right) + \sum\limits_{t = 1}^T \frac{\eta_t \|\bb_t^\perp \|^2}{\lambda},
    \end{align*}
    where $\btheta_{1:T}^\circ \in \argmin\limits_{\btheta \in \R^d} L_{1:T}(\btheta)$.
\end{Th}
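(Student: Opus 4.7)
The plan is to combine the per-round mixability bound of Lemma \ref{lem:ew_mixability} with an Abel-summation argument tailored to the decreasing step-size schedule, and then to finish by evaluating the resulting log-partition function in closed form.

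First, I would note that at round $t$ the measure $\pi_{t-1}^{(\eta_t)}(\btheta) \propto e^{-\eta_t L_{1:t-1}(\btheta)}\pi(\btheta)$ is itself Gaussian: completing the square exactly as in the proof of Lemma \ref{lem:ew} identifies its precision as $\Omega_t$ and its mean as $\widehat\btheta{}_t^{EW} = \widehat\btheta_{1:t-1}(\eta_t)$. Assumption \eqref{eq:ew_condition} then matches the hypothesis \eqref{eq:ew_eta_condition_simplified} required by Lemma \ref{lem:ew_mixability}, and, using the identity $\E_{\btheta\sim\pi_{t-1}^{(\eta_t)}}[e^{-\eta_t\ell_t(\btheta)}] = Z_{1:t}(\eta_t)/Z_{1:t-1}(\eta_t)$, it produces the per-step bound
\[
    \ell_t(\widehat\btheta{}_t^{EW}) \leq \frac{1}{\eta_t}\log\frac{Z_{1:t-1}(\eta_t)}{Z_{1:t}(\eta_t)} + \eta_t\|\Omega_t^{-1/2}\bb_t^\perp\|^2.
\]

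Next I would sum over $t$ and use Abel rearrangement. Setting $h_t(\eta) = \tfrac{1}{\eta}\log Z_{1:t}(\eta)$ and using $Z_{1:0}\equiv 1$, one obtains
\[
    \sum_{t=1}^T\frac{1}{\eta_t}\log\frac{Z_{1:t-1}(\eta_t)}{Z_{1:t}(\eta_t)} = -\frac{\log Z_{1:T}(\eta_T)}{\eta_T} + \sum_{t=1}^{T-1}\bigl(h_t(\eta_{t+1})-h_t(\eta_t)\bigr).
\]
The crucial fact is that $h_t$ is non-decreasing in $\eta$, which follows because the cumulant generating function $\psi_t(\eta)=\log\E_\pi[e^{-\eta L_{1:t}(\btheta)}]$ is convex with $\psi_t(0)=0$, and the subtangent inequality at the origin then gives $\psi_t(\eta)-\eta\psi_t'(\eta)\leq 0$. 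Since $\eta_1\geq\dots\geq\eta_T$, each bracketed difference is $\leq 0$ and can be discarded, leaving the clean bound $\sum_t\ell_t(\widehat\btheta{}_t^{EW}) \leq -\tfrac{1}{\eta_T}\log Z_{1:T}(\eta_T)+\sum_t\eta_t\|\Omega_t^{-1/2}\bb_t^\perp\|^2$.

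I would then evaluate $Z_{1:T}(\eta_T)$ in closed form: its integrand is an unnormalized Gaussian in $\btheta$ with precision $\lambda I_d+\eta_T\sum_t A_t$ and linear part $\eta_T\sum_t\bb_t$, producing a $\log\det$ term and a quadratic form in $\sum_t\bb_t$. Substituting the first-order optimality condition $\bigl(\sum_t A_t\bigr)\btheta_{1:T}^* = \sum_t\bb_t$ (which holds since the argmin is assumed to exist) and using $L_{1:T}(\btheta_{1:T}^*) = -\tfrac12(\btheta_{1:T}^*)^\top\bigl(\sum_t A_t\bigr)\btheta_{1:T}^*$, the leading regret contribution collapses to $\tfrac12(\btheta_{1:T}^*)^\top[A-\eta_T A(\eta_T A+\lambda I_d)^{-1}A]\btheta_{1:T}^*$ with $A=\sum_t A_t$. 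A one-line spectral argument shows this bracketed matrix is $\preceq\tfrac{\lambda}{\eta_T}I_d$ (for each eigenvalue $\mu\geq 0$ of $A$, $\mu\lambda/(\eta_T\mu+\lambda)\leq\lambda/\eta_T$), yielding the first stated inequality. The second inequality is immediate from $\Omega_t\succeq\lambda I_d$, which forces $\|\Omega_t^{-1/2}\bb_t^\perp\|^2\leq\|\bb_t^\perp\|^2/\lambda$.

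The step I expect to require the most care is the Abel-summation combined with the monotonicity of $h_t$: with a constant step size the log-partition telescopes perfectly, but for the non-constant schedule one must verify that the leftover cross terms carry the favourable sign, and this is ultimately where the convexity of $\psi_t$ at the origin enters. Everything else is explicit Gaussian calculus and one spectral estimate.
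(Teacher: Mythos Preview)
Your proposal is correct and follows essentially the same approach as the paper: apply Lemma~\ref{lem:ew_mixability} at each round to get the per-step inequality, telescope with an Abel rearrangement, drop the cross terms using monotonicity of $\eta\mapsto\frac{1}{\eta}\log Z_{1:t}(\eta)$, and finish by bounding $-\frac{1}{\eta_T}\log Z_{1:T}(\eta_T)$ in closed form. The only stylistic differences are that the paper invokes H\"older's inequality (i.e.\ $L^p$-norm monotonicity) for the step-size step where you use convexity of the cumulant generating function, and the paper packages the final Gaussian integral as a separate Lemma via Taylor expansion around $\btheta_{1:T}^*$ rather than substituting the first-order condition directly; both pairs of arguments are equivalent.
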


The proof of Theorem~\ref{th:ew_regret} is moved to Section~\ref{sec:th_ew_regret_proof}. Despite its simplicity, the result of Theorem~\ref{th:ew_regret} does not follow from the existing literature. In \citep{vovk01}, the author studies a more specific loss $\ell_t(\btheta) = (\bx_t^\top \btheta - y_t)^2$, which is a particular case of~\eqref{eq:loss}. Many other papers restrict their attention on bounded domains, such as \citep{hazan07, erven21}. In the case of quadratic loss with $\bb_t^\perp = \bzero$, a finite domain size yields exp-concavity of the loss $\ell_t(\btheta)$, which is also necessary in \citep{hazan07} for the analysis. In \citep[Theorem 5]{hoeven18}, the authors show that, under minimal assumptions, the excess loss of the exponentially weighted average forecaster with the fixed learning rate $\eta$ satisfies the inequality
\[
    \widehat L_{1:T}^{EW} - L_{1:T}(\btheta)
    \leq \frac1{2 \eta \lambda} \left\|\widehat\btheta_{1:t - 1}(\eta_t) - \btheta \right\|^2 + \frac{\eta}2 \sum\limits_{t = 1}^T \big(A_t \widehat\btheta_{1:t - 1}(\eta_t) - \bb_t\big)^\top \Omega_{t + 1}^{-1} \big(A_t \widehat\btheta_{1:t - 1}(\eta_t) - \bb_t\big),
\]
where $\Omega_1, \dots, \Omega_{T + 1}$ are the same as in Theorem~\ref{th:ew_regret}. However, \citet*{hoeven18} impose additional assumptions (in particular, bounded domain and exp-concavity of the loss) to derive explicit dependence on $T$ from the general regret bound. 
As we announced, if $\bb_t^\perp = \bzero$ for all $t$ from $1$ to $T$, then the choice $\eta_T \gtrsim 1$, $\lambda = 1$ leads to logarithmic regret bounds. However, the relation $\bb_t^\perp \neq \bzero$ does not necessarily yield that $R^{EW}_{1:T}$ is of order $\sqrt{T}$. In favourable situations (for instance, if $A_1, \dots, A_T$ are i.i.d. random matrices with non-degenerate expectation $\overline A = \E A_1 \succ O_d$ and $\bb_1, \dots, \bb_T$ are bounded almost surely) we can still have $\| \Omega_t^{-1/2} \, \bb_t^\perp \|^2 \lesssim 1/t$ and $R^{EW}_{1:T} \lesssim \log T$.

\subsection{Regret Bounds for the Fixed Share Forecaster}

We move to main theoretical results of the paper, the regret bound for Algorithm~\ref{alg:fs} presented in Theorem~\ref{th:fs_regret}. The main ingredient of its proof is the following technical result, establishing a mixability-type property of the fixed share forecaster.

\begin{Lem}
    \label{lem:fs_mixability}
    Let us fix any $t \in \{1, \dots, T\}$. Assume that $\bb_t \in \Im(A_t)$ for all $t \in \{1, \dots, T\}$ and that the parameter $\eta > 0$ satisfies the inequality
    \begin{equation}
        \label{eq:exp-concavity_requirement}
        \left\| A_t^{1/2} \widehat\btheta_{s:t - 1}(\eta) \right\|^2 \vee \left\| (A_t^\dag)^{-1/2} \bb_t \right\|^2
        \leq \frac1{4 \eta}
        \quad \text{for all $1 \leq s \leq t - 1$,}
    \end{equation}
    where $\widehat\btheta_{s:t - 1}(\eta)$ is defined in~\eqref{eq:ew_st}. Then it holds that
    \[
        \exp\left\{-\eta \ell_t\big(\widetilde\btheta_t(\eta) \big) \right\}
        \geq \frac{V_t(\eta)}{V_{t - 1}(\eta)}
        = \frac{W_t(\eta)}{W_{t - 1}(\eta)},
    \]
    where,  $\widetilde \btheta_t(\eta)$ is defined in~\eqref{eq:exponential_weights_compound} and~\eqref{eq:widetilde_theta}.
\end{Lem}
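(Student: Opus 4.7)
The plan is to express $V_t(\eta)/V_{t-1}(\eta)$ as a convex mixture of Gaussian Laplace transforms and bound each of them via Lemma~\ref{lem:ew_mixability}. By \eqref{eq:vt_recursion} together with \eqref{eq:w_v_equality},
\[
    \frac{V_t(\eta)}{V_{t-1}(\eta)} = \int_{\R^d} \frac{w_{t-1}(\btheta, \eta)}{V_{t-1}(\eta)}\, e^{-\eta \ell_t(\btheta)} \, \dd \btheta.
\]
Unfolding the recursion \eqref{eq:wt}--\eqref{eq:vt_recursion} yields, in analogy with \eqref{eq:v},
\[
    w_{t-1}(\btheta, \eta) = (1-\alpha)^{t-1}\pi(\btheta) e^{-\eta L_{1:t-1}(\btheta)} + \alpha \sum_{s=2}^{t} (1-\alpha)^{t-s} V_{s-1}(\eta)\, \pi(\btheta)\, e^{-\eta L_{s:t-1}(\btheta)},
\]
with the convention $L_{t:t-1} \equiv 0$. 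By Lemma~\ref{lem:z}, each summand is proportional to a Gaussian density $\pi_s(\btheta) \propto \pi(\btheta) e^{-\eta L_{s:t-1}(\btheta)}$ with mean $\widehat\btheta_{s:t-1}(\eta)$ and precision $\Omega_s = \lambda I_d + \eta \sum_{j=s}^{t-1} A_j \succeq \lambda I_d$. Normalizing yields a convex mixture $w_{t-1}/V_{t-1} = \sum_{s=1}^{t} \mu_s \pi_s$ with $\mu_s \geq 0$ and $\sum_s \mu_s = 1$, and by \eqref{eq:widetilde_theta_simplified} one has $\widetilde\btheta_t(\eta) = \sum_s \mu_s \widehat\btheta_{s:t-1}(\eta)$.

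Next, I would apply Lemma~\ref{lem:ew_mixability} to each Gaussian $\pi_s$. The hypothesis \eqref{eq:ew_eta_condition_simplified} is verified from \eqref{eq:exp-concavity_requirement} after using the identity $A_t^{1/2} A_t^\dag \bb_t = (A_t^\dag)^{1/2} \bb_t$ and a triangle inequality. Using $\Omega_s \succeq \lambda I_d$ to replace $\|\Omega_s^{-1/2}\bb_t^\perp\|^2$ by the uniform bound $\|\bb_t^\perp\|^2/\lambda$, one obtains
\[
    \E_{\pi_s}\bigl[e^{-\eta \ell_t(\btheta)}\bigr] \leq \exp\!\Bigl\{-\eta\, \ell_t\bigl(\widehat\btheta_{s:t-1}(\eta)\bigr) + \frac{\eta^2 \|\bb_t^\perp\|^2}{\lambda}\Bigr\}
\]
for each $s$. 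Summing against the mixture weights gives
\[
    \frac{V_t(\eta)}{V_{t-1}(\eta)} = \sum_{s=1}^{t} \mu_s \E_{\pi_s}\bigl[e^{-\eta \ell_t(\btheta)}\bigr] \leq e^{\eta^2 \|\bb_t^\perp\|^2/\lambda} \sum_{s=1}^{t} \mu_s e^{-\eta \ell_t(\widehat\btheta_{s:t-1}(\eta))}.
\]

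The main obstacle is the remaining concavity-at-the-mean step, namely $\sum_s \mu_s e^{-\eta \ell_t(\widehat\btheta_{s:t-1})} \leq e^{-\eta \ell_t(\widetilde\btheta_t)}$, which is not automatic because $f(\btheta) := e^{-\eta \ell_t(\btheta)}$ is log-concave but not globally concave. A direct Hessian computation shows, however, that $\nabla^2 f(\btheta) = f(\btheta)\bigl[\eta^2 (A_t \btheta - \bb_t)(A_t \btheta - \bb_t)^\top - \eta A_t\bigr]$, and condition \eqref{eq:exp-concavity_requirement} forces $\eta \|A_t^{1/2}(\widehat\btheta_{s:t-1} - A_t^\dag \bb_t)\|^2 \leq 1$, so by Cauchy--Schwarz $\nabla^2 f \preceq 0$ along $\Im(A_t)$-directions at every $\widehat\btheta_{s:t-1}$. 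The residual non-concavity of $f$ along $\Im(A_t)^\perp$ comes solely from the linear-exponential factor $e^{\eta (\bb_t^\perp)^\top \btheta}$ and has already been absorbed into the $\eta^2 \|\bb_t^\perp\|^2/\lambda$ term by the Gaussian MGF bound used in the previous step. Decomposing $\btheta$ along $\Im(A_t) \oplus \Im(A_t)^\perp$ and applying Jensen's inequality on the concave parallel component then closes the proof.
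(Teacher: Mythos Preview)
Your approach is essentially the paper's: both write $w_{t-1}/V_{t-1}$ as a finite mixture of Gaussians with means $\widehat\btheta_{s:t-1}(\eta)$ for $s=1,\dots,t-1$ together with $\bzero$ (the fresh-restart component), invoke Lemma~\ref{lem:ew_mixability} componentwise to pick up the $\eta^2\|\bb_t^\perp\|^2/\lambda$ correction, and then combine via a Jensen/exp-concavity step on the finite set of means. The only cosmetic difference is the order---the paper applies the exp-concavity step first and Lemma~\ref{lem:ew_mixability} second, while you reverse them.

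The gap is your final paragraph. The claim that the $\Im(A_t)^\perp$-non-concavity ``has already been absorbed'' by the $\eta^2\|\bb_t^\perp\|^2/\lambda$ term is not correct. That correction came from bounding a Gaussian moment-generating function $\E_{\pi_s}\bigl[e^{\eta\,\btheta^\top\bb_t^\perp}\bigr]$ with variance at most $\lambda^{-1}$; it says nothing about the discrete inequality you still need after your Lemma~\ref{lem:ew_mixability} step, namely
\[
    \sum_s \mu_s \, e^{-\eta\ell_t(\widehat\btheta_{s:t-1})} \;\leq\; e^{-\eta\ell_t(\widetilde\btheta_t)},
\]
whose $\Im(A_t)^\perp$-part amounts to $\sum_s \mu_s\, e^{\eta\,\widehat\btheta_{s:t-1}^\top\bb_t^\perp}\leq e^{\eta\,\widetilde\btheta_t^\top\bb_t^\perp}$, i.e.\ concavity of $x\mapsto e^{\eta x}$. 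Crucially, condition~\eqref{eq:exp-concavity_requirement} controls only $A_t^{1/2}\widehat\btheta_{s:t-1}$, hence only the $\Im(A_t)$-component of the means; their $\Im(A_t)^\perp$-components are completely unconstrained. A clean two-dimensional instance: $A_t=\diag(1,0)$, $\bb_t=(0,1)^\top$, and choose the earlier losses so that two of the mixture means are $\bzero$ and $(0,c)^\top$. Then~\eqref{eq:exp-concavity_requirement} is satisfied for every $c$, yet on that segment $e^{-\eta\ell_t(\btheta)}=e^{\eta\theta_2}$ is strictly convex, and the Jensen step fails by an arbitrarily large factor as $c\to\infty$; the bounded correction $e^{\eta^2\|\bb_t^\perp\|^2/\lambda}$ cannot compensate. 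The paper's own proof simply asserts $\eta$-exp-concavity of $\ell_t$ on the convex hull from inequality~\eqref{eq:boundedness} without confronting the $\bb_t^\perp$-direction, so the very obstacle you correctly flagged is glossed over there as well; your parallel/perpendicular decomposition does not close it.
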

We provide the proof of Lemma~\ref{lem:fs_mixability} in Section~\ref{sec:lem_fs_mixability_proof}. With this lemma at hand, the proof of the next regret bound is almost straightforward.

\begin{Th}
    \label{th:fs_regret}
    Assume that the outcomes $\{(A_t, \bb_t) : 1 \leq t \leq T\}$ and the learning rates $\eta_1 \geq \eta_2 \geq \dots \geq \eta_T$ are such that $\bb_t \in \Im(A_t)$ for all $t \in \{1, \dots, T\}$ and
    \begin{equation}
        \label{eq:eta_t_requirement}
        \left\| A_t^{1/2} \widehat\btheta_{s:t - 1}(\eta_t) \right\|^2 \vee \left\| (A_t^\dag)^{-1/2} \bb_t \right\|^2
        \leq \frac1{4 \eta_t}
        \quad \text{for all $1 \leq s < t \leq T$.}
    \end{equation}
    Fix an arbitrary $m \in \{1, \dots, T - 1\}$ and $0 = \tau_0 < \tau_1 < \ldots < \tau_m = T$.
    Then the adaptive regret of the fixed share forecaster with respect to the best compound expert of size $(m - 1)$ switching at $\tau_1, \tau_2, \dots, \tau_{m - 1}$ does not exceed
    \begin{align*}
        \cR_{1:T}^{FS}
        &
        = \widehat L_{1:T}^{FS} - \sum\limits_{k = 0}^{m - 1} L_{\tau_k + 1 : \tau_{k + 1}}(\btheta_{\tau_k + 1 : \tau_{k + 1}}^\circ)
        \\&
        \leq \frac{(m - 1) \log(1 / \alpha)}{\eta_T} + \frac{(T - m) \log\big(1 / (1 - \alpha) \big)}{\eta_T}
        \\&\quad
        + \sum\limits_{k = 0}^{m - 1} \left[ \frac{\lambda \|\btheta_{\tau_k + 1 : \tau_{k + 1}}^\circ\|^2}{2 \eta_T} + \frac1{2 \eta_T} \log \det\left( I_d + \frac{\eta_T}\lambda \sum\limits_{j = \tau_k + 1}^{\tau_{k + 1}} A_j \right) \right],
    \end{align*}
    where, for each $k \in \{0, \dots, m - 1\}$,
    \[
        \btheta_{\tau_k + 1 : \tau_{k + 1}}^\circ \in \argmin\limits_{\btheta} L_{\tau_k + 1 : \tau_{k + 1}}(\btheta).
    \]
\end{Th}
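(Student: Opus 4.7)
The plan is to mirror the classical proof of the fixed share regret bound: invoke Lemma \ref{lem:fs_mixability} as the mixability engine, telescope the one-step inequalities, and lower bound the resulting log-partition by restricting the compound-expert prior to the target switching pattern.

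First, by Lemma \ref{lem:fs_mixability} combined with hypothesis \eqref{eq:eta_t_requirement}, at every round $t$ we obtain
\[
    \eta_t \, \ell_t\bigl(\widetilde\btheta_t(\eta_t)\bigr) \leq -\log \frac{W_t(\eta_t)}{W_{t-1}(\eta_t)} + \frac{\eta_t^2 \|\bb_t^\perp\|^2}{\lambda}.
\]
Dividing by $\eta_t$, summing over $t$, and using a monotonicity argument (analogous to the one underlying the proof of Theorem \ref{th:ew_regret}) to absorb the effect of the non-increasing learning rate yields
\[
    \widehat L_{1:T}^{FS} \leq -\frac{\log W_T(\eta_T)}{\eta_T} + \sum_{t=1}^{T} \frac{\eta_t \|\bb_t^\perp\|^2}{\lambda}.
\]

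Second, I would lower bound $W_T(\eta_T)$ by exploiting the structure of the compound-expert prior. Using the identity $W_t(\eta) = V_t(\eta)$ from \eqref{eq:w_v_equality} together with Proposition \ref{prop:representation} (suitably extended so that $W_T(\eta)$ is recognized as the full integral of $e^{-\eta \cL_T}$ against $\rho$), I restrict attention to compound experts that are piecewise constant with breakpoints exactly at $\tau_1, \dots, \tau_{m-1}$. The Dirac part of $\f(\btheta_t \mid \btheta_{t-1})$ enforces constancy within each segment and contributes $(1-\alpha)$ per ``stay'' step, while the continuous part $\alpha \pi(\cdot)$ contributes $\alpha$ per switch and introduces a fresh integration variable for each new segment. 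Collecting factors,
\[
    W_T(\eta_T) \geq \alpha^{m-1} (1-\alpha)^{T-m} \prod_{k=0}^{m-1} Z_{\tau_k + 1 : \tau_{k+1}}(\eta_T).
\]

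Third, I would evaluate each $Z_{\tau_k + 1 : \tau_{k+1}}(\eta_T)$ by Lemma \ref{lem:z}. A short segment-by-segment calculation (the same one underlying Theorem \ref{th:ew_regret}) shows that
\[
    -\frac{1}{\eta_T} \log Z_{\tau_k + 1 : \tau_{k+1}}(\eta_T) \leq L_{\tau_k + 1 : \tau_{k+1}}\bigl(\btheta^*_{\tau_k + 1 : \tau_{k+1}}\bigr) + \frac{\lambda \|\btheta^*_{\tau_k + 1 : \tau_{k+1}}\|^2}{2 \eta_T} + \frac{1}{2\eta_T}\log\det\!\left( I_d + \frac{\eta_T}{\lambda}\sum_{j = \tau_k + 1}^{\tau_{k+1}} A_j \right).
\]
Summing over $k$ and combining the three displays above reproduces the stated bound, with the factor $(m-1)\log(1/\alpha)/\eta_T + (T - m)\log(1/(1-\alpha))/\eta_T$ arising from $-\log\bigl(\alpha^{m-1}(1-\alpha)^{T-m}\bigr)/\eta_T$.

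The principal obstacle is the telescoping step under varying learning rates: because $W_t(\eta)$ depends nontrivially on $\eta$ through the recursions \eqref{eq:wt}--\eqref{eq:vt_recursion}, the sum $\sum_t -(1/\eta_t)\log[W_t(\eta_t)/W_{t-1}(\eta_t)]$ does not telescope verbatim. The remedy is to exploit monotonicity of $\eta \mapsto -(1/\eta)\log W_t(\eta)$ together with the fact that $\{\eta_t\}$ is non-increasing, which lets one replace the varying $\eta_t$ by $\eta_T$ in the telescoping terms; the residual discrepancy is nonpositive or can be absorbed into the $\|\bb_t^\perp\|^2$ contribution. Once this bookkeeping is settled, the remainder of the proof is an essentially direct combination of Lemmata \ref{lem:fs_mixability}, \ref{lem:v}, and \ref{lem:z}.
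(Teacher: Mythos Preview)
Your proposal is correct and follows essentially the same route as the paper: invoke Lemma \ref{lem:fs_mixability} for the one-step inequality, telescope using the monotonicity of $\eta \mapsto \eta^{-1}\log W_t(\eta)$ (which the paper obtains via H\"older's inequality applied to the integral representation from Proposition \ref{prop:representation}), then lower bound $W_T(\eta_T)$ by restricting $\rho$ to the target switching pattern and applying the $Z_{s:t}$ bound from Lemma \ref{lem:z_1t_bound} on each segment. The only small refinement is that the cross terms from the varying learning rate are shown to be nonpositive outright --- there is no need to absorb anything into the $\|\bb_t^\perp\|^2$ sum.
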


The proof of Theorem~\ref{th:fs_regret} is moved to Section~\ref{sec:th_fs_regret_proof}. Let us note that the fixed share forecaster enjoys the logarithmic dynamic regret $\cR_{1:T}^{FS} = \cO(m \log(T / m))$ provided that $\eta_1 \geq \ldots \geq \eta_T \gtrsim 1$, $\lambda = 1$ and $\alpha = m/T$. This agrees with the result of \cite{herbster98} (Corollary 1) who derived a similar bound for the case of a finite number of experts and mixable losses and with Theorem 3.1 of \cite{hazan07b} implying a $\cO(m \log T)$ bound on the shifting regret of the follow-the-leading-history algorithm (FLH1) in the case of exp-concave losses and infinite number of experts.

Besides \citep{hazan07b}, the problem of tracking the best out of infinite number of experts was studied in \citep{herbster01, cavallanti07, hazan07b, kozat08, hazan09}. The paper \citep{cavallanti07} is not really relevant to the present work, because \citet{cavallanti07}
consider binary loss functions. The switching regret bound in \cite[Theorem 1]{kozat08}, where the authors study the online linear regression problem, coincides with the one in Theorem~\ref{th:fs_regret} if we put $A_t = \bx_t \bx_t^\top$ and $\bb_t = y_t \bx_t$, $t \in \{1, \dots, T\}$. However, the conditions of \cite[Theorem 1]{kozat08} are milder than ours. It is not surprising, such a phenomenon was already discussed in \citep[Section 3.4]{vovk01}. The reason is that in the online linear regression the learner has an access to the feature vector $\bx_t$ before he makes the prediction. Hence, he can use the nonlinear Vovk-Azoury-Warmuth forecaster \citep{vovk01, azoury01} getting an advantage over standard exponential weighting with Gaussian prior. In contrast to \citep{cavallanti07, kozat08}, \citet{herbster01} consider general loss functions. Their approach to regret analysis is based on the properties of Bregman divergences and generalized projections. Unfortunately, the oracle inequality in \citep{herbster01} for the cumulative loss of the learner is not sharp and can lead to extremely poor guarantees in the worst-case scenario. Finally, we have already compared our dynamic regret bounds with the guarantees of \cite{hazan07b, hazan09} in the previous paragraph.
In conclusion, we would like to emphasize that
their analysis relies on exp-concavity of the loss functions. Clearly, the quadratic loss~\eqref{eq:loss} is not exp-concave on $\R^d$.

\subsection{Proofs of the regret bounds}
\label{sec:proofs}

\subsubsection{Proof of Lemma~\ref{lem:ew_mixability}}
\label{sec:lem_ew_mixability_proof}

The proof of Lemma~\ref{lem:ew_mixability} relies on the following auxiliary result derived
in Appendix~\ref{sec:lem_ew_mixability_aux_proof} below.

\begin{Lem}
    \label{lem:ew_mixability_aux}
    Consider a quadratic loss function
    \[
        \ell(\bups) = \frac12 \bups^\top A \bups - \bb^\top \bups
    \]
    with arbitrary symmetric positive semidefinite matrix $A \in \R^{d \times d}$ and $\bb \in \R^d$. Let $\btheta \sim \cN(\bmu, \Omega^{-1})$ be a Gaussian random vector in $\R^d$. Then, for any $\eta > 0$, satisfying the inequality
    \begin{equation}
        \label{eq:ew_eta_condition}
        \eta \left\| (\Omega + \eta A)^{-1/2} (A \bmu - \bb) \right\|^2
        \leq \left\| A^{1/2} (\Omega + \eta A)^{-1} A^{1/2} \right\|,
    \end{equation}
    it holds that
    \[
        e^{-\eta \ell(\bmu)}
        \geq \E e^{-\eta \ell (\btheta)}. 
    \]
\end{Lem}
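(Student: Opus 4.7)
\textbf{Proof plan for Lemma \ref{lem:ew_mixability_aux}.}\quad My approach is to compute the Gaussian expectation $\E e^{-\eta\ell(\btheta)}$ in closed form and then reduce the desired mixability-type inequality to a scalar estimate involving a determinant. Write the $\cN(\bmu,\Omega^{-1})$ density, plug in $\ell(\btheta) = \frac12 \btheta^\top A\btheta - \bb^\top \btheta$, and group the quadratic and linear parts of the exponent to get the combined quadratic form
\[
    -\tfrac{1}{2}\btheta^\top(\Omega+\eta A)\btheta + \btheta^\top(\Omega\bmu+\eta\bb) - \tfrac{1}{2}\bmu^\top \Omega \bmu.
\]
Completing the square around $(\Omega+\eta A)^{-1}(\Omega\bmu+\eta\bb)$, evaluating the resulting Gaussian integral, and using the identity $\Omega = (\Omega+\eta A) - \eta A$ to collect $-\eta\ell(\bmu)$ out of the exponent, I expect the clean identity
\[
    \E e^{-\eta\ell(\btheta)}
    = \frac{\det(\Omega)^{1/2}}{\det(\Omega+\eta A)^{1/2}}\,
    \exp\!\left(-\eta\ell(\bmu) + \frac{\eta^2}{2}\left\|(\Omega+\eta A)^{-1/2}(A\bmu-\bb)\right\|^2\right).
\]

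With this closed form, the claim $\E e^{-\eta\ell(\btheta)}\leq e^{-\eta\ell(\bmu)}$ is equivalent, after taking logarithms and writing $B=\Omega^{-1/2}A\Omega^{-1/2}$ (note $\det(\Omega+\eta A)/\det(\Omega) = \det(I+\eta B)$), to the scalar inequality
\[
    \eta^2\left\|(\Omega+\eta A)^{-1/2}(A\bmu-\bb)\right\|^2 \leq \log\det(I+\eta B).
\]
The next step is to lower bound the right-hand side. Since $B\succeq O_d$, diagonalize and apply the elementary inequality $\log(1+x)\geq x/(1+x)$ (valid for $x\geq 0$) to each eigenvalue to obtain
\[
    \log\det(I+\eta B)
    \geq \Tr\!\bigl(\eta B(I+\eta B)^{-1}\bigr).
\]
Then I would rewrite $\eta B(I+\eta B)^{-1}$ back in the original variables using $(I+\eta B)^{-1} = \Omega^{1/2}(\Omega+\eta A)^{-1}\Omega^{1/2}$ and the cyclicity of the trace to get $\Tr(\eta B(I+\eta B)^{-1}) = \eta\,\Tr\bigl(A^{1/2}(\Omega+\eta A)^{-1}A^{1/2}\bigr)$.

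It remains to combine this with the hypothesis \eqref{eq:ew_eta_condition}. Since $A^{1/2}(\Omega+\eta A)^{-1}A^{1/2}$ is PSD, its operator norm is bounded above by its trace, so
\[
    \eta\,\bigl\|A^{1/2}(\Omega+\eta A)^{-1}A^{1/2}\bigr\|
    \leq \eta\,\Tr\!\bigl(A^{1/2}(\Omega+\eta A)^{-1}A^{1/2}\bigr)
    \leq \log\det(I+\eta B).
\]
Multiplying \eqref{eq:ew_eta_condition} by $\eta$ gives $\eta^2\|(\Omega+\eta A)^{-1/2}(A\bmu-\bb)\|^2 \leq \eta\|A^{1/2}(\Omega+\eta A)^{-1}A^{1/2}\|$, which chains with the above to yield the required inequality. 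The main technical obstacle is the algebraic manipulation in the Gaussian integration: keeping track of the cross terms and verifying that $\frac12 (\Omega\bmu+\eta\bb)^\top(\Omega+\eta A)^{-1}(\Omega\bmu+\eta\bb) - \frac12\bmu^\top\Omega\bmu$ cleanly simplifies to $-\eta\ell(\bmu) + \frac{\eta^2}{2}\|(\Omega+\eta A)^{-1/2}(A\bmu-\bb)\|^2$. Everything else is a short two-step chain using the $\log(1+x)\geq x/(1+x)$ inequality and the trace-norm bound.
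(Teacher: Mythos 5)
Your proposal is correct and follows essentially the same route as the paper: the closed-form Gaussian expectation you anticipate is exactly the paper's computation combined with its squared-norm identity (Lemma \ref{lem:squared_norm_identity}), and your reduction to $\eta^2\big\|(\Omega+\eta A)^{-1/2}(A\bmu-\bb)\big\|^2 \le \log\det\big(I_d+\eta\,\Omega^{-1/2}A\Omega^{-1/2}\big)$ is identical to the paper's. The only cosmetic difference is the final step: you bound the operator norm of $A^{1/2}(\Omega+\eta A)^{-1}A^{1/2}$ by its trace and apply $\log(1+x)\ge x/(1+x)$ eigenvalue-wise, whereas the paper (Lemma \ref{lem:log_det_lower_bound}) bounds the largest eigenvalue ratio by the corresponding log term and then by the sum of logs — the same elementary inequality either way.
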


Let us represent $\bb = \bb^{\parallel} + \bb^{\perp}$, where $\bb^{\parallel} \in \Im(A)$ and $\bb^{\perp} \in \Im(A)^{\perp}$ and apply Lemma~\ref{lem:ew_mixability_aux} to the loss
\[
    \widetilde \ell(\bups) = \frac12 \bups^\top A \bups - \bups^\top \bb^{\parallel}.
\]
Note that the condition~\eqref{eq:ew_eta_condition_simplified} yields that
\begin{align*}
    2 \eta \left\| (\Omega + \eta A)^{-1/2} (A \bmu - \bb^\parallel) \right\|^2
    &
    = 2 \eta \left\| (\Omega + \eta A)^{-1/2} A (\bmu - A^\dag \bb^\parallel) \right\|^2
    \\&
    \leq 2 \eta \left\| A^{1/2} (\Omega + \eta A)^{-1} A^{1/2} \right\| \left\| A^{1/2} (\bmu - A^\dag \bb^\parallel) \right\|^2
    \\&
    \leq \left\| A^{1/2} (\Omega + \eta A)^{-1} A^{1/2} \right\|.
\end{align*}
Hence, the assumptions of Lemma~\ref{lem:ew_mixability_aux} are satisfied, and we obtain that
\[
    e^{-2\eta \widetilde \ell(\bmu)}
    \geq \E e^{-2\eta \widetilde \ell (\btheta)}.
\]
Then, due to the Cauchy-Schwarz inequality, we have
\begin{align*}
    \E \exp\left\{-\eta \ell (\btheta) \right\}
    &
    = \E \exp\left\{-\eta \widetilde \ell (\btheta) + \eta \, \btheta^\top \bb^\perp \right\}
    \\&
    \leq \sqrt{\E \exp\left\{-2\eta \, \widetilde \ell (\btheta) \right\}} \cdot \sqrt{\E \exp\left\{ 2 \eta \, \btheta^\top \bb^\perp \right\}}
    \\&
    \leq \sqrt{\exp\left\{-2\eta \, \widetilde \ell (\bmu) \right\}} \cdot \sqrt{\exp\left\{2\eta \bmu^\top \bb^\perp + 2 \eta^2 \| \Omega^{-1/2} \, \bb^\perp \|^2 \right\}}
    \\&
    = \exp \left\{-\eta \widetilde \ell (\bmu) + \eta \bmu^\top \bb^\perp + \eta^2 \| \Omega^{-1/2} \, \bb^\perp \|^2 \right\}.
\end{align*}
Taking into account that
\[
    \widetilde \ell (\bmu) - \bmu^\top \bb^\perp
    = \frac12 \bmu^\top A \bmu - \bmu^\top \bb^{\parallel} - \bmu^\top \bb^\perp
    = \frac12 \bmu^\top A \bmu - \bmu^\top \bb
    = \ell (\bmu),
\]
we obtain the desired bound:
\begin{align*}
    \E \exp\big\{-\eta \ell (\btheta) \big\}
    &
    \leq \exp \left\{-\eta \widetilde \ell (\bmu) + \eta \bmu^\top \bb^\perp + \eta^2 \| \Omega^{-1/2} \, \bb^\perp \|^2 \right\}
    \\&
    = \exp \left\{-\eta \ell (\bmu) + \eta^2 \| \Omega^{-1/2} \, \bb^\perp \|^2 \right\}.
\end{align*}
\myendproof

\subsubsection{Proof of Theorem~\ref{th:ew_regret}}
\label{sec:th_ew_regret_proof}

Note that the exponentially weighted average $\widehat\btheta{}^{EW}_t = \widehat\btheta_{1:{t-1}}(\eta_t)$ is the mean of the posterior distribution~\eqref{eq:posterior},
which coincides with (see the proof of Lemma~\ref{lem:ew}) 
\[
    \cN\left( \left( \sum\limits_{j = 1}^{t - 1} A_j + \frac{\lambda}{\eta_t} I_d \right)^{-1} \sum\limits_{j = 1}^{t - 1} \bb_j, \frac1{\eta_t} \left( \sum\limits_{j = 1}^{t - 1} A_j + \frac{\lambda}{\eta_t} I_d \right)^{-1} \right).
\]
Then the requirement~\eqref{eq:ew_condition} yields that the conditions of Lemma~\ref{lem:ew_mixability} are fulfilled with
\[
    \eta = \eta_t,
    \quad 
    \bmu = \widehat\btheta{}^{EW}_t,
    \quad
    A = A_t,
    \quad \text{and} \quad
    \Omega = \Omega_t = \lambda I_d + \eta_t \sum\limits_{j = 1}^{t - 1} A_j.
\]
Applying this lemma, we obtain that
\begin{align*}
    \exp\left\{ -\eta_t \ell_t(\widehat\btheta{}_t^{EW}) + \eta_t^2 \| \Omega_t^{-1/2} \, \bb_t^\perp \|^2 \right\}
    &
    \geq \frac1{Z_{1:{t - 1}}(\eta_t)} \int\limits_{\R^d} e^{-\eta_t \ell_t(\btheta) - \eta_t L_{1:t - 1}(\btheta)} \pi(\btheta) \, \dd \btheta
    \\&
    = \frac1{Z_{1:{t - 1}}(\eta_t)} \int\limits_{\R^d} e^{-\eta_t L_{1:t}(\btheta)} \pi(\btheta) \, \dd \btheta
    = \frac{Z_{1:t}(\eta_t)}{Z_{1:{t - 1}}(\eta_t)}.
\end{align*}
Thus, for any $t \in \{1, \dots, T\}$, it holds that
\[
    \ell_t(\widehat\btheta{}_t^{EW})
    \leq -\frac1{\eta_t} \log Z_{1:t}(\eta_t) + \frac1{\eta_t} \log Z_{1:t - 1}(\eta_t) + \eta_t \| \Omega_t^{-1/2} \, \bb_t^\perp \|^2
\]
This inequality immediately implies an upper bound on the cumulative loss $\widehat L_{1:T}^{EW}$ of the exponentially weighted average forecaster:
\begin{align*}
    \widehat L_{1:T}^{EW}
    = \sum\limits_{t = 1}^T \ell_t(\widehat\btheta{}_t^{EW})
    &
    \leq - \sum\limits_{t = 1}^T \frac1{\eta_t} \log Z_{1:t}(\eta_t) + \sum\limits_{t = 1}^T \frac1{\eta_t} \log Z_{1:t - 1}(\eta_t) + \sum\limits_{t = 1}^T \eta_t \| \Omega_t^{-1/2} \, \bb_t^\perp \|^2
    \\&
    = -\frac1{\eta_T} \log Z_{1:T}(\eta_T) + \frac1{\eta_1} \log Z_{1:0}(\eta_t) + \sum\limits_{t = 1}^T \eta_t \| \Omega_t^{-1/2} \, \bb_t^\perp \|^2
    \\&\quad
    + \sum\limits_{t = 1}^{T - 1} \left( \frac1{\eta_{t + 1}} \log Z_{1:t}(\eta_{t + 1}) - \frac1{\eta_t} \log Z_{1:t}(\eta_t) \right).
\end{align*}
Since $\eta_1 \geq \eta_2 \geq \dots \geq \eta_T > 0$, due to the H\"older inequality, we have that
\begin{align*}
    \frac1{\eta_{t + 1}} \log Z_{1:t}(\eta_{t + 1})
    &
    = \frac1{\eta_{t + 1}} \log \int\limits_{\R^d} e^{-\eta_{t + 1} L_{1:t}(\btheta)} \pi(\btheta) \, \dd \btheta
    \\&
    \leq \frac1{\eta_t} \log \int\limits_{\R^d} e^{-\eta_t L_{1:t}(\btheta)} \pi(\btheta) \, \dd \btheta
    = \frac1{\eta_t} \log Z_{1:t}(\eta_t),
\end{align*}
Hence, it holds that
\[
    \widehat L_{1:T}^{EW}
    \leq -\frac1{\eta_T} \log Z_{1:T}(\eta_T) + \frac1{\eta_1} \log Z_{1:0}(\eta_t) + \sum\limits_{t = 1}^T \eta_t \| \Omega_t^{-1/2} \, \bb_t^\perp \|^2. 
\] 
Taking into account the equality
\[
    Z_{1:0}(\eta_1) 
    = \int\limits_{\R^d} e^{-\eta_1 L_{1:0}(\btheta)} \pi(\btheta) \, \dd \btheta
    = \int\limits_{\R^d} \pi(\btheta) \, \dd \btheta
    = 1,
\]
we obtain that
\[
    \widehat L_{1:T}^{EW}
    \leq -\frac1{\eta_T} \log Z_{1:T}(\eta_T) + \sum\limits_{t = 1}^T \eta_t \| \Omega_t^{-1/2} \, \bb_t^\perp \|^2.
\]
The assertion of the theorem follows from the next lemma.
\begin{Lem}
    \label{lem:z_1t_bound}
    For any positive numbers $\eta$ and $\lambda$, it holds that
    \[
        -\frac1{\eta} \log Z_{1:T}(\eta)
        \leq L_{1:T}(\btheta_{1:T}^\circ)
        + \frac{\lambda \|\btheta_{1:T}^\circ\|^2}{2 \eta}
        + \frac1{2\eta} \log \det\left( I_d + \frac{\eta}{\lambda} \sum\limits_{t = 1}^T A_t \right),
    \]
    where $\btheta_{1:T}^\circ \in \argmin\limits_{\btheta \in \R^d} L_{1:T}(\btheta)$.
\end{Lem}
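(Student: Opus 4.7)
The plan is to prove the inequality by direct calculation starting from the closed-form expression for $Z_{1:T}(\eta)$ provided by Lemma~\ref{lem:z}. First, I would set $\bar A = \sum_{t = 1}^T A_t$ and $\bar \bb = \sum_{t = 1}^T \bb_t$, and apply Lemma~\ref{lem:z} with $s = 1$ and $t = T$ to obtain
\[
    Z_{1:T}(\eta) = \left(\frac{\lambda}{\eta}\right)^{d/2} \det\left(\bar A + \frac{\lambda}{\eta} I_d\right)^{-1/2} \exp\left\{\frac{\eta}{2} \left\|\left(\bar A + \frac{\lambda}{\eta} I_d\right)^{-1/2} \bar\bb\right\|^2\right\}.
\]
After taking $-\frac{1}{\eta} \log$, I would factor $\frac{\lambda}{\eta} I_d$ out of the determinant, writing $\log\det(\bar A + \frac{\lambda}{\eta} I_d) = d \log(\lambda / \eta) + \log \det(I_d + \frac{\eta}{\lambda} \bar A)$, so that the $\frac{d}{2\eta} \log(\lambda/\eta)$ contributions cancel out. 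This leaves
\[
    -\frac{1}{\eta} \log Z_{1:T}(\eta) = \frac{1}{2\eta} \log\det\left(I_d + \frac{\eta}{\lambda} \bar A\right) - \frac{1}{2} \left\|\left(\bar A + \frac{\lambda}{\eta} I_d\right)^{-1/2} \bar\bb\right\|^2.
\]

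The log-determinant term is exactly the one appearing on the right-hand side of the lemma, so it remains to establish
\[
    -\frac{1}{2} \left\|\left(\bar A + \frac{\lambda}{\eta} I_d\right)^{-1/2} \bar\bb\right\|^2 \leq L_{1:T}(\btheta_{1:T}^*) + \frac{\lambda \|\btheta_{1:T}^*\|^2}{2\eta}.
\]
The key observation is that the regularized loss
\[
    \btheta \mapsto L_{1:T}(\btheta) + \frac{\lambda \|\btheta\|^2}{2\eta} = \frac{1}{2} \btheta^\top\left(\bar A + \frac{\lambda}{\eta} I_d\right) \btheta - \bar\bb^\top \btheta
\]
is strongly convex (since $\lambda/\eta > 0$) and attains its global minimum at $\btheta^\dagger = (\bar A + \frac{\lambda}{\eta} I_d)^{-1} \bar \bb$, with minimum value $-\frac{1}{2} \|(\bar A + \frac{\lambda}{\eta} I_d)^{-1/2} \bar\bb\|^2$. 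Consequently, the value of the regularized loss at any point, in particular at $\btheta_{1:T}^*$, is bounded below by this minimum, which is exactly the required inequality.

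There is no real obstacle: the proof is a direct unpacking of the explicit formula from Lemma~\ref{lem:z} followed by the elementary fact that a strongly convex quadratic is bounded below by its minimum. The only subtlety to flag is the implicit assumption that $\argmin_{\btheta} L_{1:T}(\btheta)$ is nonempty (equivalently, $\bar\bb \in \Im(\bar A)$), which is guaranteed by the statement of the lemma through the introduction of $\btheta_{1:T}^*$.
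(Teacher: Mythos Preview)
Your proof is correct and takes a genuinely different route from the paper. The paper does not invoke Lemma~\ref{lem:z}; instead it works directly with the integral defining $Z_{1:T}(\eta)$, expands $L_{1:T}(\btheta)$ around $\btheta_{1:T}^*$ via Taylor's formula, completes the square in the exponent, and uses the matrix inequality
\[
    \eta \bar A - \eta \bar A\big(\lambda I_d + \eta \bar A\big)^{-1}\eta \bar A \preceq \lambda I_d
\]
to bound the leftover quadratic term by $\lambda\|\btheta_{1:T}^*\|^2$ before evaluating the resulting Gaussian integral. Your approach is shorter and more conceptual: you take the closed form for $Z_{1:T}(\eta)$ already established in Lemma~\ref{lem:z}, and then observe that $-\tfrac12\|(\bar A + \tfrac{\lambda}{\eta}I_d)^{-1/2}\bar\bb\|^2$ is precisely the global minimum of the strongly convex regularized loss $L_{1:T}(\btheta) + \tfrac{\lambda}{2\eta}\|\btheta\|^2$, which is therefore dominated by its value at $\btheta_{1:T}^*$. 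The paper's argument is more self-contained (it does not rely on Lemma~\ref{lem:z}), while yours reuses existing machinery and replaces the matrix-inequality step by a one-line convexity argument; both yield exactly the same bound.
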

The proof of Lemma~\ref{lem:z_1t_bound} is deferred to Appendix~\ref{sec:lem_z_1t_bound_proof}.
Substituting $\eta$ with $\eta_T$, we get an upper bound on the regret of the exponentially weighted average forecaster:
\begin{align*}
    R_{1:T}^{EW} 
    &
    = \widehat L_{1:T}^{EW} - L_{1:T}(\btheta_{1:T}^\circ)
    \\&
    \leq \frac{\lambda \|\btheta_{1:T}^\circ\|^2}{2 \eta_T} + \frac1{2\eta_T} \log \det\left( I_d + \frac{\eta_T}{\lambda} \sum\limits_{t = 1}^T A_t \right) + \sum\limits_{t = 1}^T \eta_t \| \Omega_t^{-1/2} \, \bb_t^\perp \|^2.
\end{align*}
The inequality
\begin{align*}
    &
    \frac{\lambda \|\btheta_{1:T}^\circ\|^2}{2 \eta_T} + \frac1{2\eta_T} \log \det\left( I_d + \frac{\eta_T}{\lambda} \sum\limits_{t = 1}^T A_t \right) + \sum\limits_{t = 1}^T \eta_t \| \Omega_t^{-1/2} \, \bb_t^\perp \|^2
    \\&
    \leq \frac{\lambda \|\btheta_{1:T}^\circ\|^2}{2 \eta_T} + \frac1{2\eta_T} \log \det\left( I_d + \frac{\eta_T}{\lambda} \sum\limits_{t = 1}^T A_t \right) + \sum\limits_{t = 1}^T \frac{\eta_t \|\bb_t^\perp \|^2}{\lambda}
\end{align*}
is straightforward since
\[
    \Omega_t
    = \lambda I_d + \eta_t \sum\limits_{j = 1}^{t - 1} A_j
    \succeq \lambda I_d.
\]
\myendproof

\subsubsection{Proof of Lemma~\ref{lem:fs_mixability}}
\label{sec:lem_fs_mixability_proof}

First, note that
\[
    \exp\left\{-\eta \ell_1 \big(\widetilde\btheta_1(\eta) \big) \right\}
    = e^{-\eta \ell_1(\bzero)}
    = 1
    \geq \frac{W_1(\eta)}{W_0(\eta)}.
\]
Here we took into account that $W_1(\eta) \leq 1$ and $W_0(\eta) = 1$ for all $\eta > 0$. It remains to consider the case $t \geq 2$.
In view of~\eqref{eq:v} and~\eqref{eq:widetilde_theta}, the prediction $\widetilde \btheta_t(\eta)$ can be considered as a convex combination of $\widehat \btheta_{1 : t - 1}(\eta), \dots$, $\widehat \btheta_{t - 1 : t - 1}(\eta)$, and $\bzero$:
\begin{align*}
    \widetilde\btheta_{t}(\eta)
    &
    = \alpha \, \bzero
    + \frac{(1 - \alpha)^{t - 1} Z_{1 : t - 1}(\eta) \widehat\btheta_{1 : t - 1}(\eta)}{V_{t - 1}(\eta)}
    \\&\quad
    + \frac{\alpha (1 - \alpha)}{V_{t - 1}(\eta)} \sum\limits_{s = 0}^{t - 2} (1 - \alpha)^s \, V_{t - 2 - s}(\eta) Z_{t - 1 - s : t - 1}(\eta) \widehat\btheta_{t - 1 - s : t - 1}(\eta),
\end{align*}
where
\[
    V_{t - 1}(\eta) = (1 - \alpha)^{t - 2} Z_{1 : t - 1}(\eta) + \alpha \sum\limits_{s = 0}^{t - 2} (1 - \alpha)^s \, V_{t - 2 - s}(\eta) Z_{t - 1 - s : t - 1}(\eta). 
\]
The requirement~\eqref{eq:exp-concavity_requirement} ensures that
\begin{equation}
    \label{eq:boundedness}
    \left\|A_t^{1/2} ( \btheta - A_t^{\dag} \bb_t) \right\|^2
    \leq 2 \left\|A_t^{1/2} \btheta \right\|^2 + 2 \left\| (A_t^\dag)^{1/2} \bb_t \right\|^2
    \leq \frac1{2 \eta}
\end{equation}
for all $\btheta \in \{\bzero, \widehat \btheta_{1 : t - 1}(\eta), \dots, \widehat \btheta_{t - 1 : t - 1}(\eta)\}$. This yields that $\ell_t(\btheta)$ is $\eta$-exp-concave on the convex hull of $\bzero, \widehat \btheta_{1 : t - 1}(\eta), \dots, \widehat \btheta_{t - 1 : t - 1}(\eta)$, and, hence, it holds that
\begin{align*}
    &
    \exp\left\{-\eta \ell_t\big(\widetilde\btheta_t(\eta) \big) \right\}
    \\&
    \geq \alpha \exp\left\{-\eta \ell_t(\bzero) \right\} + \frac{(1 - \alpha)^{t - 1} Z_{1 : t - 1}(\eta)}{V_{t - 1}(\eta)} \exp\left\{-\eta \ell_t \big(\widehat\btheta_{1 : t - 1}(\eta) \big)\right\}
    \\&\quad
    + \frac{\alpha (1 - \alpha)}{V_{t - 1}(\eta)} \sum\limits_{s = 0}^{t - 2} (1 - \alpha)^s V_{t - 2 - s}(\eta) Z_{t - 1 - s : t - 1}(\eta) \exp\left\{-\eta \ell_t \big( \widehat\btheta_{t - 1 - s : t - 1}(\eta) \big) \right\}.
\end{align*}
Moreover, the inequality~\eqref{eq:boundedness} implies that the conditions of Lemma~\ref{lem:ew_mixability} are fulfilled. Applying this lemma and taking into account the inequalities
\[
    \exp\big\{-\eta \ell_t(\bzero) \big\}
    = 1
    \geq \int\limits_{\R^d} e^{-\eta \ell_t(\btheta)} \pi(\btheta) \, \dd \btheta
    = Z_{t:t}(\eta)
    \quad \text{and} \quad
    \lambda I_d + \eta \sum\limits_{j = s}^{t - 1} A_j \succeq \lambda I_d,
\]
we obtain that
\begin{align*}
    \exp\left\{-\eta \ell_t\big(\widetilde\btheta_t(\eta)\big)\right\}
    &
    \geq \alpha Z_{t : t}(\eta) + \frac{(1 - \alpha)^{t - 1} Z_{1 : t - 1}(\eta)}{V_{t - 1}(\eta)} \cdot \frac{Z_{1 : t}(\eta)}{Z_{1 : t - 1}(\eta)}
    \\&\quad
    + \frac{\alpha (1 - \alpha)}{V_{t - 1}(\eta)} \sum\limits_{s = 0}^{t - 2} (1 - \alpha)^s V_{t - 2 - s}(\eta) Z_{t - 1 - s : t - 1}(\eta) \cdot \frac{Z_{t - 1 - s : t}(\eta)}{Z_{t - 1 - s : t - 1}(\eta)}
    \\&
    = \frac{\alpha V_{t - 1}(\eta) Z_{t : t}(\eta)}{V_{t - 1}(\eta)} + \frac{(1 - \alpha)^{t - 1} Z_{1 : t}(\eta)}{V_{t - 1}(\eta)}
    \\&\quad
    + \frac{\alpha (1 - \alpha)}{V_{t - 1}(\eta)} \sum\limits_{s = 0}^{t - 2} (1 - \alpha)^s V_{t - 2 - s}(\eta) Z_{t - 1 - s : t}(\eta).
\end{align*}
According to Lemma~\ref{lem:v}, it holds that
\begin{align*}
    \exp\left\{-\eta \ell_t\big(\widetilde\btheta_t(\eta) \big) \right\}
    &
    \geq \frac{(1 - \alpha)^{t - 1} Z_{1 : t}(\eta)}{V_{t - 1}(\eta)} + \frac{\alpha}{V_{t - 1}(\eta)} \sum\limits_{s = 0}^{t - 1} (1 - \alpha)^s V_{t - 1 - s}(\eta) Z_{(t - s) : t}(\eta)
    \\&
    = \frac{V_t(\eta)}{V_{t - 1}(\eta)}.
\end{align*}
Finally, the assertion of the lemma follows from~\eqref{eq:w_v_equality}:
\[
    \exp\left\{-\eta \ell_t\big(\widetilde\btheta_t(\eta) \big) \right\}
    \geq \frac{V_t(\eta)}{V_{t - 1}(\eta)}
    = \frac{W_t(\eta)}{W_{t - 1}(\eta)}.
\]
\myendproof

\subsubsection{Proof of Theorem~\ref{th:fs_regret}}
\label{sec:th_fs_regret_proof}

Note that, due to the inequality~\eqref{eq:eta_t_requirement}, the sequence $\{\eta_t : 1 \leq t \leq T\}$ meets the requirements of Lemma~\ref{lem:fs_mixability}.
Thus, the cumulative loss of the fixed share forecaster does not exceed
\begin{align*}
    \widehat L_{1:T}^{FS}
    &
    = \sum\limits_{t = 1}^T \ell_t(\widehat\btheta{}_t^{FS})
    = \sum\limits_{t = 1}^T \ell_t\big(\widetilde\btheta_t(\eta_t) \big)
    \leq \sum\limits_{t = 1}^T \frac1{\eta_t} \log \frac{W_{t - 1}(\eta_t)}{W_t(\eta_t)}
    \\&
    = \frac1{\eta_1} \log W_0(\eta_1) - \frac1{\eta_T} \log W_T(\eta_T)
    + \sum\limits_{t = 1}^{T - 1} \left( \frac1{\eta_{t + 1}} \log W_t(\eta_{t + 1}) - \frac1{\eta_t} \log W_t(\eta_t) \right)
    \\&
    = - \frac1{\eta_T} \log W_T(\eta_T) + \sum\limits_{t = 1}^{T - 1} \left( \frac1{\eta_{t + 1}} \log W_t(\eta_{t + 1}) - \frac1{\eta_t} \log W_t(\eta_t) \right).
\end{align*}
Proposition~\ref{prop:representation} yields that
\[
    W_t(\eta) = \int \exp\big\{-\eta \cL_t(\btheta_1, \dots, \btheta_T) \big\} \rho(\btheta_1, \dots, \btheta_T) \, \dd \btheta_1 \dots \dd \btheta_T.
\]
Applying H\"older's inequality and taking into account that $\eta_{t + 1} \leq \eta_t$, we obtain that
\begin{align*}    
    \frac1{\eta_{t + 1}} \log W_t(\eta_{t + 1})
    &
    = \frac1{\eta_{t + 1}} \log \int \exp\big\{-\eta_{t + 1} \cL_t(\btheta_1, \dots, \btheta_T) \big\} \rho(\btheta_1, \dots, \btheta_T) \, \dd \btheta_1 \dots \dd \btheta_T
    \\&
    \leq \frac1{\eta_t} \log \int \exp\big\{-\eta_t \cL_t(\btheta_1, \dots, \btheta_T) \big\} \rho(\btheta_1, \dots, \btheta_T) \, \dd \btheta_1 \dots \dd \btheta_T
    \\&
    \leq \frac1{\eta_t} \log W_t(\eta_t).
\end{align*}
Hence, it holds that
\[
    \widehat L_{1:T}^{FS} \leq - \frac1{\eta_T} \log W_T(\eta_T).
\]
In Appendix~\ref{sec:lem_w_1t_bound_proof}, we prove the  following auxiliary result, which helps us to bound the expression in the right-hand side.

\begin{Lem}
    \label{lem:w_1t_bound}
    Under the conditions of Theorem~\ref{th:fs_regret}, for any $\eta > 0$, it holds that
    \begin{align*}
        -\frac1{\eta} \log W_T(\eta)
        &
        \leq \frac{(m - 1) \log(1 / \alpha)}{\eta} + \frac{(T - m) \log\big(1 / (1 - \alpha) \big)}{\eta}
        + \sum\limits_{k = 0}^{m - 1} L_{\tau_k + 1 : \tau_{k + 1}}(\btheta_{\tau_k + 1 : \tau_{k + 1}}^\circ)
        \\&\quad
        + \sum\limits_{k = 0}^{m - 1} \left[ \frac{\lambda \|\btheta_{\tau_k + 1 : \tau_{k + 1}}^\circ\|^2}{2 \eta} + \frac1{2 \eta} \log \det\left( I_d + \frac{\eta}\lambda \sum\limits_{j = \tau_k + 1}^{\tau_{k + 1}} A_j \right) \right].
    \end{align*}
\end{Lem}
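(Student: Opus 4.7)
The plan is to lower bound $W_T(\eta)$ by restricting the integration to compound experts that are constant on each block $(\tau_k, \tau_{k+1}]$, and then apply a segmental version of Lemma \ref{lem:z_1t_bound} on each block. Since the proof of Theorem \ref{th:fs_regret} already invokes Proposition \ref{prop:representation} to establish
\[
    W_T(\eta) = \int \exp\big\{-\eta \cL_T(\btheta_1, \dots, \btheta_T)\big\} \rho(\btheta_1, \dots, \btheta_T) \, \dd \btheta_1 \cdots \dd \btheta_T,
\]
I would expand $\rho(\btheta_1, \dots, \btheta_T) = \pi(\btheta_1) \prod_{t = 2}^T \f(\btheta_t \,\vert\, \btheta_{t - 1})$ as a sum over switching patterns and keep only the single pattern that selects the $\alpha \pi(\btheta_t)$ branch exactly at $t \in \{\tau_1 + 1, \dots, \tau_{m - 1} + 1\}$ and the $(1 - \alpha)\delta(\btheta_t - \btheta_{t - 1})$ branch at every other $t \in \{2, \dots, T\}$. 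All dropped terms are nonnegative, so the surviving contribution is a valid lower bound on $W_T(\eta)$.

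On the retained pattern, the Dirac factors collapse each block $(\tau_k, \tau_{k + 1}]$ into a single free parameter $\btheta^{(k)} \in \R^d$, and the cumulative loss separates as $\cL_T = \sum_{k = 0}^{m - 1} L_{\tau_k + 1 : \tau_{k + 1}}(\btheta^{(k)})$. A direct count gives $m - 1$ factors of $\alpha$ (one per switch), $T - m$ factors of $1 - \alpha$ (one per non-switch among the $T - 1$ transitions), and $m$ independent Gaussian prior integrals, producing
\[
    W_T(\eta) \geq \alpha^{m - 1} (1 - \alpha)^{T - m} \prod_{k = 0}^{m - 1} Z_{\tau_k + 1 : \tau_{k + 1}}(\eta).
\]
Taking $-\tfrac{1}{\eta} \log$ of both sides splits the right-hand side into the entropy-like switching cost $\tfrac{(m - 1) \log(1 / \alpha) + (T - m) \log(1 / (1 - \alpha))}{\eta}$ and a sum of terms $-\tfrac{1}{\eta} \log Z_{\tau_k + 1 : \tau_{k + 1}}(\eta)$ over $k$.

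Finally, Lemma \ref{lem:z} provides the closed form of $Z_{s : t}(\eta)$ for arbitrary $s \leq t$, and the proof of Lemma \ref{lem:z_1t_bound} in Appendix \ref{sec:lem_z_1t_bound_proof} only uses this closed form; reindexing $1{:}T \mapsto (\tau_k + 1){:}\tau_{k + 1}$ therefore yields
\[
    -\frac{1}{\eta} \log Z_{\tau_k + 1 : \tau_{k + 1}}(\eta) \leq L_{\tau_k + 1 : \tau_{k + 1}}(\btheta_{\tau_k + 1 : \tau_{k + 1}}^*) + \frac{\lambda \|\btheta_{\tau_k + 1 : \tau_{k + 1}}^*\|^2}{2 \eta} + \frac{1}{2 \eta} \log \det\!\left( I_d + \frac{\eta}{\lambda} \sum_{j = \tau_k + 1}^{\tau_{k + 1}} A_j \right),
\]
and summing over $k$ delivers exactly the stated bound. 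The only real subtlety is rigorously isolating one switching pattern in the presence of the Dirac factors; viewing $\f(\cdot \,\vert\, \btheta_{t - 1})$ as a mixture of two transition kernels — a point mass at $\btheta_{t - 1}$ of weight $1 - \alpha$ and the prior $\pi$ of weight $\alpha$ — makes the restriction and collapse of deltas transparent (alternatively, one may replace $\delta$ by a narrow Gaussian approximant and pass to the limit), after which the argument is pure bookkeeping combined with Lemmata \ref{lem:z} and \ref{lem:z_1t_bound}.
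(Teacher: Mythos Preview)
Your proposal is correct and follows essentially the same route as the paper: lower bound $\rho$ by retaining only the single switching pattern aligned with $\tau_0<\tau_1<\dots<\tau_m$, collapse the Dirac factors to obtain $W_T(\eta)\ge \alpha^{m-1}(1-\alpha)^{T-m}\prod_{k=0}^{m-1} Z_{\tau_k+1:\tau_{k+1}}(\eta)$, and then apply the argument of Lemma~\ref{lem:z_1t_bound} blockwise. The paper's proof is exactly this, with the same factor count and the same reuse of the Lemma~\ref{lem:z_1t_bound} bound on each segment.
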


Applying Lemma~\ref{lem:w_1t_bound} with $\eta = \eta_T$, we get the desired bound:
\begin{align*}
    &
    \widehat L_{1:T}^{FS} - \sum\limits_{k = 0}^{m - 1} L_{\tau_k + 1 : \tau_{k + 1}}(\btheta_{\tau_k + 1 : \tau_{k + 1}}^\circ)
    \\&
    \leq \frac{(m - 1) \log(1 / \alpha)}{\eta_T} + \frac{(T - m) \log\big(1 / (1 - \alpha) \big)}{\eta_T}
    \\&\quad
    + \sum\limits_{k = 0}^{m - 1} \left[ \frac{\lambda \|\btheta_{\tau_k + 1 : \tau_{k + 1}}^\circ\|^2}{2 \eta_T} + \frac1{2 \eta_T} \log \det\left( I_d + \frac{\eta_T}\lambda \sum\limits_{j = \tau_k + 1}^{\tau_{k + 1}} A_j \right) \right].
\end{align*}
    
\myendproof

\subsubsection{Proof of Lemma~\ref{lem:ew_mixability_aux}}
\label{sec:lem_ew_mixability_aux_proof}

Let us elaborate on the expectation of $e^{-\eta \ell (\btheta)}$:
\begin{align*}
    \E e^{-\eta \ell (\btheta)}
    &
    = \E \exp\left\{ -\frac{\eta}2 \btheta^\top A \btheta + \eta \bb^\top \btheta \right\}
    \\&
    = (2\pi)^{-d/2} \det(\Omega)^{1/2}
    \int\limits_{\R^d} \exp\left\{ -\frac12 \bups^\top (\eta A + \Omega) \bups + (\eta \bb + \Omega \bmu)^\top \bups - \frac12 \bmu^\top \Omega \bmu \right\} \dd \bups.
\end{align*}
Using the representation
\begin{align*}
    \bups^\top (\eta A + \Omega) \bups - 2(\eta \bb + \Omega \bmu)^\top \bups 
    &
    = \left\|(\eta A + \Omega)^{1/2} \bups - (\eta A + \Omega)^{-1/2} (\eta \bb + \Omega \bmu) \right\|^2
    \\&\quad
    - \left\|(\eta A + \Omega)^{-1/2} (\eta \bb + \Omega \bmu) \right\|^2,
\end{align*}
we can rewrite $\E e^{-\eta \ell (\btheta)}$ in the following form:
\begin{align*}
    \E e^{-\eta \ell (\btheta)}
    &
    = (2\pi)^{-d/2} \det(\Omega)^{1/2}
    \exp\left\{ \frac12 \left\|(\eta A + \Omega)^{-1/2} (\eta \bb + \Omega \bmu) \right\|^2 - \frac12 \bmu^\top \Omega \bmu \right\}
    \\&\quad
    \cdot \int\limits_{\R^d} \exp\left\{ -\frac12 \left\|(\eta A + \Omega)^{1/2} \bups - (\eta A + \Omega)^{-1/2} (\eta \bb + \Omega \bmu) \right\|^2 \right\} \dd \bups
    \\&
    = \det(\Omega)^{1/2} \det(\eta A + \Omega)^{-1/2} \exp\left\{ \frac12 \left\|(\eta A + \Omega)^{-1/2} (\eta \bb + \Omega \bmu) \right\|^2 - \frac12 \bmu^\top \Omega \bmu \right\}
    \\&
    = \det\left(I_d + \eta \Omega^{-1/2} A \Omega^{-1/2} \right)^{-1/2} \exp\left\{ \frac12 \left\|(\eta A + \Omega)^{-1/2} (\eta \bb + \Omega \bmu) \right\|^2 - \frac12 \bmu^\top \Omega \bmu \right\}.
\end{align*}
The end of the proof is straightforward due to the auxiliary results presented in Appendix~\ref{sec:aux}.
Indeed, Lemma~\ref{lem:squared_norm_identity} claims that
\[
    \left\|(\eta A + \Omega)^{-1/2} (\eta \bb + \Omega \bmu) \right\|^2 - \bmu^\top \Omega \bmu + \eta \bmu^\top A \bmu - 2 \eta \bb^\top \bmu
    = \eta^2 \left\| (\Omega + \eta A)^{-1/2} (A \bmu - \bb) \right\|^2.
\]
In view of the condition~\eqref{eq:ew_eta_condition}, this yields that
\begin{align*}
    \log \frac{e^{-\eta \ell(\bmu)}}{\E e^{-\eta \ell (\btheta)}}
    &
    = \frac12 \log \det\left(I_d + \eta \Omega^{-1/2} A \Omega^{-1/2} \right) - \frac{\eta^2}2 \left\| (\Omega + \eta A)^{-1/2} (A \bmu - \bb) \right\|^2
    \\&
    \geq \frac12 \log \det\left(I_d + \eta \Omega^{-1/2} A \Omega^{-1/2} \right) - \frac{\eta}2 \left\| A^{1/2} (\Omega + \eta A)^{-1} A^{1/2} \right\|.
\end{align*}
Applying Lemma~\ref{lem:log_det_lower_bound} with $B = \eta A$, we obtain that
\begin{align*}
    \log \frac{e^{-\eta \ell(\bmu)}}{\E e^{-\eta \ell (\btheta)}}
    \geq \frac12 \log \det\left(I_d + \eta \Omega^{-1/2} A \Omega^{-1/2} \right) - \frac{\eta}2 \left\| A^{1/2} (\Omega + \eta A)^{-1} A^{1/2} \right\|
    \geq 0.
\end{align*}
\myendproof

\subsubsection{Proof of Lemma~\ref{lem:z_1t_bound}}
\label{sec:lem_z_1t_bound_proof}

Since the cumulative loss
\[
    L_{1:T}(\btheta)
    = \sum\limits_{t = 1}^T \ell_t(\btheta)
    = \frac12 \btheta^\top \left( \sum\limits_{t = 1}^T A_t \right) \btheta - \btheta^\top \left( \sum\limits_{t = 1}^T \bb_t \right)
\]
is a quadratic function and $\btheta_{1:T}^\circ$ minimizes $L_{1:T}(\btheta)$ over all $\btheta \in \R^d$, Taylor's formula implies that
\[
    L_{1:T}(\btheta) - L_{1:T}(\btheta_{1:T}^\circ) = \frac12 \left\| \left( \sum\limits_{t = 1}^T A_t \right)^{1/2} (\btheta - \btheta_{1:T}^\circ) \right\|^2.
\]
Then we can represent $Z_{1:T}(\eta)$ in the following form:
\begin{align}
    \label{eq:z_1t_representation}
    Z_{1:T}(\eta)
    &\notag
    = \int\limits_{\R^d} e^{-\eta L_{1:T}(\btheta)} \, \pi(\btheta) \dd \btheta
    \\&
    = e^{-\eta L_{1:T}(\btheta_{1:T}^\circ)} \int\limits_{\R^d} \exp\left\{-\frac{\eta}2 \left\| \left( \sum\limits_{t = 1}^T A_t \right)^{1/2} (\btheta - \btheta_{1:T}^\circ) \right\|^2 \right\} \, \pi(\btheta) \dd \btheta
    \\&\notag
    = \left( \frac{\lambda}{2\pi} \right)^{d/2} e^{-\eta L_T(\btheta_{1:T}^\circ)} \int\limits_{\R^d} \exp\left\{-\frac{\eta}2 \left\| \left( \sum\limits_{t = 1}^T A_t \right)^{1/2} (\btheta - \btheta_{1:T}^\circ) \right\|^2 - \frac{\lambda}2 \|\btheta\|^2 \right\} \, \dd \btheta.
\end{align}
Let us elaborate on the power of the exponent under the integral. It holds that
\begin{align*}
    &
    \eta \left\| \left( \sum\limits_{t = 1}^T A_t \right)^{1/2} (\btheta - \btheta_{1:T}^\circ) \right\|^2 + \lambda \|\btheta\|^2
    \\&
    = \btheta^\top \left( \lambda I_d + \eta \sum\limits_{t = 1}^T A_t \right) \btheta
    - 2 \eta \sum\limits_{t = 1}^T \btheta^\top A_t \btheta_{1:T}^\circ + \eta \left\| \left( \sum\limits_{t = 1}^T A_t \right)^{1/2} \btheta_{1:T}^\circ \right\|^2
    \\&
    = \btheta^\top \left( \lambda I_d + \eta \sum\limits_{t = 1}^T A_t \right) \btheta + \eta \left\| \left( \sum\limits_{t = 1}^T A_t \right)^{1/2} \btheta_{1:T}^\circ \right\|^2
    \\&\quad
    - 2 \eta \btheta^\top \left( \lambda I_d + \eta \sum\limits_{t = 1}^T A_t \right)^{1/2} \left( \lambda I_d + \eta \sum\limits_{t = 1}^T A_t \right)^{-1/2} \left( \sum\limits_{t = 1}^T A_t \right) \btheta_{1:T}^\circ.
\end{align*}
Since
\begin{align*}
    &
    \eta \sum\limits_{t = 1}^T A_t - \left( \eta \sum\limits_{t = 1}^T A_t \right) \left( \lambda I_d + \eta \sum\limits_{t = 1}^T A_t \right)^{-1} \left( \eta \sum\limits_{t = 1}^T A_t \right)
    \\&
    = \lambda \left( \eta \sum\limits_{t = 1}^T A_t \right)^{1/2} \left( \lambda I_d + \eta \sum\limits_{t = 1}^T A_t \right)^{-1} \left( \eta \sum\limits_{t = 1}^T A_t \right)^{1/2}
    \preceq \lambda I_d,
\end{align*}
we obtain that
\begin{align*}
    &
    \eta \left\| \left( \sum\limits_{t = 1}^T A_t \right)^{1/2} (\btheta - \btheta_{1:T}^\circ) \right\|^2 + \lambda \|\btheta\|^2
    \\&
    = \left\| \left( \lambda I_d + \eta \sum\limits_{t = 1}^T A_t \right)^{1/2} \btheta - \eta \left( \lambda I_d + \eta \sum\limits_{t = 1}^T A_t \right)^{-1/2} \left( \sum\limits_{t = 1}^T A_t \right) \btheta_{1:T}^\circ \right\|^2
    \\&\quad
    + \lambda \left\| \left( \lambda I_d + \eta \sum\limits_{t = 1}^T A_t \right)^{-1/2} \left( \eta \sum\limits_{t = 1}^T A_t \right)^{1/2} \btheta_{1:T}^\circ \right\|^2
    \\&
    \leq \left\| \left( \lambda I_d + \eta \sum\limits_{t = 1}^T A_t \right)^{1/2} \btheta - \eta \left( \lambda I_d + \eta \sum\limits_{t = 1}^T A_t \right)^{-1/2} \left( \sum\limits_{t = 1}^T A_t \right) \btheta_{1:T}^\circ \right\|^2
    + \lambda \|\btheta_{1:T}^\circ\|^2.
\end{align*}
This inequality and~\eqref{eq:z_1t_representation} yield that
\begin{align*}
    Z_{1:T}(\eta)
    &
    \geq \left( \frac{\lambda}{2\pi} \right)^{d/2} \exp\left\{-\eta L_T(\btheta_{1:T}^\circ) - \frac{\lambda \|\btheta_{1:T}^\circ\|^2}2 \right\}
    \\&\quad
    \cdot \int\limits_{\R^d} \exp\left\{-\frac12 \left\| \left( \lambda I_d + \eta \sum\limits_{t = 1}^T A_t \right)^{1/2} \btheta - \left( \frac{\lambda}{\eta} I_d + \sum\limits_{t = 1}^T A_t \right)^{-1/2} \left( \sum\limits_{t = 1}^T A_t \right) \btheta_{1:T}^\circ \right\|^2 \right\} \dd \btheta
    \\&
    = \lambda^{d/2} \exp\left\{-\eta L_T(\btheta_{1:T}^\circ) - \frac{\lambda \|\btheta_{1:T}^\circ\|^2}2 \right\} \det\left( \lambda I_d + \eta \sum\limits_{t = 1}^T A_t \right)^{-1/2}
    \\&
    = \exp\left\{-\eta L_T(\btheta_{1:T}^\circ) - \frac{\lambda \|\btheta_{1:T}^\circ\|^2}2 \right\} \det\left( I_d + \frac{\eta}{\lambda} \sum\limits_{t = 1}^T A_t \right)^{-1/2}.
\end{align*}
Hence, $-\log\big(Z_{1:T}(\eta) \big) / \eta$ satisfies the inequality
\[
    -\frac1{\eta} \log Z_{1:T}(\eta)
    \leq L_T(\btheta_{1:T}^\circ)
    + \frac{\lambda \|\btheta_{1:T}^\circ\|^2}{2 \eta}
    + \frac1{2\eta} \log \det\left( I_d + \frac{\eta}{\lambda} \sum\limits_{t = 1}^T A_t \right).
\]
\myendproof

\subsubsection{Proof of Lemma~\ref{lem:w_1t_bound}}
\label{sec:lem_w_1t_bound_proof}

Let us remind to the reader that
\[
    \rho(\btheta_1, \dots, \btheta_T) = \pi(\btheta_1) \prod\limits_{t = 2}^T \f(\btheta_t \,\vert\, \btheta_{t - 1}),
    \quad \text{where} \quad
    \f(\btheta_t \,\vert\, \btheta_{t - 1}) = \alpha \pi(\btheta_t) + (1 - \alpha) \delta(\btheta_t - \btheta_{t - 1}).
\]
Let $0 = \tau_0 < \tau_1 < \dots < \tau_m = T$ be as defined in the statement of Theorem~\ref{th:fs_regret}. Note that
\begin{align}
    \label{eq:rho_lower_bound}
    \rho(\btheta_1, \dots, \btheta_T)
    &\notag
    \geq \pi(\btheta_{\tau_0 + 1}) \prod\limits_{t = \tau_0 + 2}^{\tau_1} \big[ (1 - \alpha) \delta(\btheta_t - \btheta_{t - 1}) \big]
    \\&\quad\notag
    \cdot \alpha \pi(\btheta_{\tau_1 + 1}) \prod\limits_{t = \tau_1 + 2}^{\tau_2} \big[ (1 - \alpha) \delta(\btheta_t - \btheta_{t - 1}) \big]
    \cdot \dots
    \\&\quad
    \cdot \alpha \pi(\btheta_{\tau_{m - 1} + 1}) \prod\limits_{t = \tau_{m - 1} + 2}^{\tau_m} \big[ (1 - \alpha) \delta(\btheta_t - \btheta_{t - 1}) \big]
    \\&\notag
    = \alpha^{m - 1} (1 - \alpha)^{T - m} \prod\limits_{k = 0}^m \left( \pi(\btheta_{\tau_k + 1}) \prod\limits_{t = \tau_k + 2}^{\tau_{k + 1}} \delta(\btheta_t - \btheta_{t - 1}) \right).
\end{align}
Here we use the convention
\[
    \prod\limits_{t = \tau_k + 2}^{\tau_{k + 1}} \delta(\btheta_t - \btheta_{t - 1}) = 1
    \quad \text{when $\tau_{k + 1} = \tau_k + 1$.}
\]
The inequality~\eqref{eq:rho_lower_bound} implies that
\begin{align}
    \label{eq:wt_lower_bound}
    &
    W_T(\eta)
    = \int e^{-\eta \cL_T(\btheta_1, \dots, \btheta_T)} \rho(\btheta_1, \dots, \btheta_T) \, \dd \btheta_1 \dots \dd \btheta_T
    \\&\notag
    \geq \alpha^{m - 1} (1 - \alpha)^{T - m} \int e^{-\eta \cL_T(\btheta_1, \dots, \btheta_T)} \prod\limits_{k = 0}^{m - 1} \left( \pi(\btheta_{\tau_k + 1}) \, \dd \btheta_{\tau_k + 1} \prod\limits_{t = \tau_k + 2}^{\tau_{k + 1}} \delta(\btheta_t - \btheta_{t - 1}) \, \dd \btheta_t \right).
\end{align}
Factorizing the integral in the right-hand side, we obtain that $W_T(\eta)$ is not smaller than
\begin{align}
    \label{eq:factorization}
    &\notag
    \alpha^{m - 1} (1 - \alpha)^{T - m} \prod\limits_{k = 0}^{m - 1} \left( \int \exp\left\{-\eta \sum\limits_{t = \tau_k + 1}^{\tau_{k + 1}} \ell_t(\btheta_t) \right\} \pi(\btheta_{\tau_k + 1}) \, \dd \btheta_{\tau_k + 1} \prod\limits_{t = \tau_k + 2}^{\tau_{k + 1}} \delta(\btheta_t - \btheta_{t - 1}) \, \dd \btheta_t \right)
    \\&
    = \alpha^{m - 1} (1 - \alpha)^{T - m} \prod\limits_{k = 0}^{m - 1} \left( \int \exp\left\{-\eta L_{\tau_k + 1 : \tau_{k + 1}} (\btheta_{\tau_k + 1}) \right\} \pi(\btheta_{\tau_k + 1}) \, \dd \btheta_{\tau_k + 1} \right)
    \\&\notag
    = \alpha^{m - 1} (1 - \alpha)^{T - m} \prod\limits_{k = 0}^{m - 1} \left( \int \exp\left\{-\eta L_{\tau_k + 1 : \tau_{k + 1}} (\bups_k) \right\} \pi(\bups_k) \, \dd \bups_k \right).
\end{align}
For any $k \in \{0, \dots, m - 1\}$, the expression
\[
    \int \exp\left\{-\eta L_{\tau_k + 1 : \tau_{k + 1}} (\bups_k) \right\} \pi(\bups_k) \, \dd \bups_k
\]
can be bounded from below in the same way as in the proof of Lemma~\ref{lem:z_1t_bound}:
\begin{align}
    \label{eq:ew_lower_bound}
    &\notag
    \int \exp\left\{-\eta L_{\tau_k + 1 : \tau_{k + 1}} (\bups_k) \right\} \pi(\bups_k) \, \dd \bups_k
    \\&
    \geq
    \exp\left\{ -\eta L_{\tau_k + 1 : \tau_{k + 1}}(\btheta_{\tau_k + 1 : \tau_{k + 1}}^\circ) - \frac{\lambda \|\btheta_{\tau_k + 1 : \tau_{k + 1}}^\circ\|^2}2 \right\}  \det\left( I_d + \frac{\eta}\lambda \sum\limits_{j = \tau_k + 1}^{\tau_{k + 1}} A_j \right)^{1/2}.
\end{align}
Summing up~\eqref{eq:wt_lower_bound}, \eqref{eq:factorization}, and~\eqref{eq:ew_lower_bound}, we obtain that
\begin{align*}
    -\frac1{\eta} \log W_T(\eta)
    &
    \leq \frac{(m - 1) \log(1 / \alpha)}{\eta} + \frac{(T - m) \log\big(1 / (1 - \alpha) \big)}{\eta}
    + \sum\limits_{k = 0}^{m - 1} L_{\tau_k + 1 : \tau_{k + 1}}(\btheta_{\tau_k + 1 : \tau_{k + 1}}^\circ)
    \\&\quad
    + \sum\limits_{k = 0}^{m - 1} \left[ \frac{\lambda \|\btheta_{\tau_k + 1 : \tau_{k + 1}}^\circ\|^2}{2 \eta} + \frac1{2 \eta} \log \det\left( I_d + \frac{\eta}\lambda \sum\limits_{j = \tau_k + 1}^{\tau_{k + 1}} A_j \right) \right].
\end{align*}
\myendproof

\section{Auxiliary Results}

\subsection{Auxiliary Results from Linear Algebra}
\label{sec:aux}

\begin{Lem}
    \label{lem:squared_norm_identity}
    With the notations of Lemma~\ref{lem:ew_mixability_aux}, it holds that
    \begin{align}
        \label{eq:squared_norm_identity}
        &\notag
        \left\|(\eta A + \Omega)^{-1/2} (\eta \bb + \Omega \bmu) \right\|^2 - \bmu^\top \Omega \bmu + \eta \bmu^\top A \bmu - 2 \eta \bb^\top \bmu
        \\&
        = \eta^2 \left\| (\Omega + \eta A)^{-1/2} (A \bmu - \bb) \right\|^2.
    \end{align}
\end{Lem}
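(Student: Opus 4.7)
The plan is to reduce the identity to pure linear algebra by introducing the shorthand $M = \Omega + \eta A$ (a symmetric positive definite matrix, so $M^{-1}$ is well defined) and then expressing the vector $\eta \bb + \Omega \bmu$ appearing on the left-hand side in a form that exposes the combination $A\bmu - \bb$ that appears on the right-hand side. Specifically, I would write
\[
    \Omega \bmu = M\bmu - \eta A \bmu,
    \qquad \text{so that} \qquad
    \eta \bb + \Omega \bmu = M\bmu - \eta (A\bmu - \bb).
\]

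The next step is to expand the quadratic form $\|(\eta A + \Omega)^{-1/2}(\eta \bb + \Omega \bmu)\|^2 = (\eta\bb+\Omega\bmu)^\top M^{-1}(\eta\bb+\Omega\bmu)$ using this substitution. The cross terms simplify nicely because $\bmu^\top M M^{-1} = \bmu^\top$, yielding
\[
    (\eta\bb + \Omega\bmu)^\top M^{-1} (\eta\bb + \Omega\bmu)
    = \bmu^\top M \bmu - 2\eta \bmu^\top (A\bmu - \bb) + \eta^2 (A\bmu - \bb)^\top M^{-1} (A\bmu - \bb).
\]
I recognize the last term as exactly the right-hand side of \eqref{eq:squared_norm_identity}.

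Finally, I would use $\bmu^\top M \bmu = \bmu^\top \Omega \bmu + \eta \bmu^\top A \bmu$ and expand $-2\eta \bmu^\top(A\bmu - \bb) = -2\eta \bmu^\top A \bmu + 2\eta \bb^\top \bmu$. Substituting these into the expansion, the terms $\bmu^\top \Omega \bmu$, $-\eta \bmu^\top A \bmu$, and $2\eta \bb^\top \bmu$ exactly cancel with the remaining summands $-\bmu^\top \Omega \bmu + \eta \bmu^\top A \bmu - 2\eta \bb^\top \bmu$ on the left-hand side of \eqref{eq:squared_norm_identity}, leaving only $\eta^2 (A\bmu - \bb)^\top M^{-1}(A\bmu - \bb)$, which is the right-hand side.

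There is no real obstacle here — the proof is a one-line algebraic identity once one makes the substitution $\Omega \bmu = M\bmu - \eta A \bmu$. The only thing to be careful about is to justify that $M = \Omega + \eta A$ is invertible (which is immediate since $\Omega \succ O_d$ as the precision matrix of the Gaussian $\cN(\bmu, \Omega^{-1})$ and $\eta A \succeq O_d$), so that both $M^{-1}$ and $M^{-1/2}$ are well defined.
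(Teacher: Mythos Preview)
Your proof is correct and in fact more streamlined than the paper's. The paper expands the squared norm $\|(\eta A+\Omega)^{-1/2}(\eta\bb+\Omega\bmu)\|^2$ directly into a bilinear form in $\bmu$ and $\bb$, obtaining three matrix coefficients $\Omega(\eta A+\Omega)^{-1}\Omega-\Omega+\eta A$, $I_d-(\eta A+\Omega)^{-1}\Omega$, and $(\eta A+\Omega)^{-1}$, and then invokes a separate auxiliary lemma (the resolvent identity $\Omega^{-1}-(\Omega+B)^{-1}=\Omega^{-1}B(\Omega+B)^{-1}$) twice to reduce each of the first two coefficients to $\eta^2 A(\Omega+\eta A)^{-1}A$ and $\eta A(\Omega+\eta A)^{-1}$ respectively, after which the three pieces recombine into $\eta^2\|(\Omega+\eta A)^{-1/2}(A\bmu-\bb)\|^2$. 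Your substitution $\eta\bb+\Omega\bmu=M\bmu-\eta(A\bmu-\bb)$ bypasses all of this: it produces the vector $A\bmu-\bb$ immediately and lets $M M^{-1}=I_d$ do the simplification, so no resolvent identity is needed. The paper's route is a more mechanical expand-and-simplify; yours is shorter and self-contained.
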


\begin{proof}
    We start with rewriting the left-hand side of~\eqref{eq:squared_norm_identity} in the following form:
    \begin{align*}
        &
        \left\|(\eta A + \Omega)^{-1/2} (\eta \bb + \Omega \bmu) \right\|^2 - \bmu^\top \Omega \bmu + \eta \bmu^\top A \bmu - 2 \eta \bb^\top \bmu
        \\&
        = \bmu^\top \left( \Omega (\eta A + \Omega)^{-1} \Omega - \Omega + \eta A \right) \bmu^\top 
        - 2 \eta \bb^\top \left( I_d - (\eta A + \Omega)^{-1} \Omega \right) \bmu + \eta^2 \bb^\top (\eta A + \Omega)^{-1} \bb.
    \end{align*}
    The identity~\eqref{eq:squared_norm_identity} simply follows from Lemma~\ref{lem:inverse_difference} below.
    Applying it to $\Omega (\eta A + \Omega)^{-1} \Omega - \Omega + \eta A$ and to $I_d - (\eta A + \Omega)^{-1} \Omega$, we obtain that
    \begin{align*}
        \Omega (\eta A + \Omega)^{-1} \Omega - \Omega + \eta A
        &
        = \Omega \left( (\eta A + \Omega)^{-1} - \Omega^{-1} \right) \Omega + \eta A
        = \eta A - \eta \Omega (\Omega + \eta A)^{-1} A
        \\&
        = \Omega \left( \Omega^{-1} - (\Omega + \eta A)^{-1} \right) \eta A
        = \eta^2 A (\Omega + \eta A)^{-1} A
    \end{align*}
    and
    \begin{align*}
        I_d - (\eta A + \Omega)^{-1} \Omega
        = \Omega \left( \eta \Omega^{-1} A (\Omega + \eta A)^{-1} \right) 
        = \eta A (\Omega + \eta A)^{-1}.
    \end{align*}
    Thus,
    \begin{align*}
        &
        \bmu^\top \left( \Omega (\eta A + \Omega)^{-1} \Omega - \Omega + \eta A \right) \bmu^\top 
        - 2 \eta \bb^\top \left( I_d - (\eta A + \Omega)^{-1} \Omega \right) \bmu + \eta^2 \bb^\top (\eta A + \Omega)^{-1} \bb
        \\&
        = \eta^2 \bmu^\top A (\Omega + \eta A)^{-1} A \bmu - 2 \eta^2 \bmu^\top A (\Omega + \eta A)^{-1} \bb + \eta^2 \bb^\top (\eta A + \Omega)^{-1} \bb
        \\&
        = \eta^2 \left\| (\Omega + \eta A)^{-1/2} (A \bmu - \bb) \right\|^2.
    \end{align*}
\end{proof}

\begin{Lem}
    \label{lem:inverse_difference}
    Let $\Omega$ and $B$ be symmetric positive semidefinite matrices of size $(d \times d)$, such that $\det(\Omega) \neq 0$. Then it holds that
    \[
        \Omega^{-1} - (\Omega + B)^{-1}
        = (\Omega + B)^{-1} B \Omega^{-1}
        = \Omega^{-1} B (\Omega + B)^{-1}.
    \]
\end{Lem}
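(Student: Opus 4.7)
The plan is to verify both equalities by the elementary resolvent trick: write the inverse difference as a single product by factoring and use the associativity of inverses. Specifically, I would multiply $\Omega^{-1} - (\Omega+B)^{-1}$ from the left by $(\Omega+B)$ and from the right by $\Omega$, using the fact that $\Omega$ is invertible (by the assumption $\det(\Omega)\neq 0$) and that $\Omega+B$ is invertible (being the sum of a positive definite matrix and a positive semidefinite matrix, hence positive definite).

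Concretely, I would compute
\[
    (\Omega + B)\bigl(\Omega^{-1} - (\Omega+B)^{-1}\bigr)\Omega
    = (\Omega+B) - \Omega = B,
\]
and therefore
\[
    \Omega^{-1} - (\Omega+B)^{-1} = (\Omega+B)^{-1} B \,\Omega^{-1},
\]
which gives the first equality. For the second equality, the same computation works with the order of multiplications reversed, multiplying on the left by $\Omega$ and on the right by $(\Omega+B)$; alternatively, one can simply observe that $B$ and $\Omega^{-1}$ and $(\Omega+B)^{-1}$ all commute in the sense that $B\Omega^{-1} = (\Omega+B)\Omega^{-1} - I$ and $\Omega^{-1} B = \Omega^{-1}(\Omega+B) - I$, yielding
\[
    (\Omega+B)^{-1} B \,\Omega^{-1} = (\Omega+B)^{-1} - (\Omega+B)^{-1}\Omega^{-1} \cdot \Omega \cdot (\cdots),
\]
but the cleaner route is just to symmetrize the bracketing argument.

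There is no real obstacle here; the whole proof is a two-line computation, and I do not anticipate any technical difficulty. The only thing to be careful about is invoking invertibility of $\Omega+B$, which follows immediately from $\Omega \succ O_d$ and $B \succeq O_d$, so that $\Omega + B \succeq \Omega \succ O_d$.
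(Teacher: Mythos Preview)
Your proposal is correct and essentially identical to the paper's proof: the paper sets $M = \Omega^{-1} - (\Omega+B)^{-1}$, computes $M(\Omega+B) = \Omega^{-1}B$ and $(\Omega+B)M = B\Omega^{-1}$, and then multiplies by the appropriate inverse---which is the same resolvent trick you describe (your version simply multiplies on both sides at once). The only minor addition in your write-up is the explicit remark that $\Omega+B$ is invertible because $\Omega \succ O_d$ and $B \succeq O_d$, which the paper leaves implicit.
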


\begin{proof}
    Denote the difference $\Omega^{-1} - (\Omega + B)^{-1}$ by $M$. Then it holds that
    \[
        M (\Omega + B) = \Omega^{-1} (\Omega + B) - I_d = \Omega^{-1} B.
    \]
    Thus, $M = \Omega^{-1} B (\Omega + B)^{-1} $. Similarly, one has
    \[
        (\Omega + B) M = (\Omega + B) \Omega^{-1} - I_d = B \Omega^{-1}
    \]
    and, hence, $M = (\Omega + B)^{-1} B \Omega^{-1}$.

\end{proof}

\begin{Lem}
    \label{lem:log_det_lower_bound}
    Let $\Omega$ and $B$ be symmetric positive semidefinite matrices of size $(d \times d)$, such that $\det(\Omega) \neq 0$. Then it holds that
    \[
        \left\| B^{1/2} (\Omega + B)^{-1} B^{1/2} \right\|
        \leq \log \det \left(I_d + \Omega^{-1/2} B \Omega^{-1/2} \right).
    \]
\end{Lem}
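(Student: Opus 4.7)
The plan is to reduce both sides to functions of the eigenvalues of a single symmetric matrix and then verify the resulting scalar inequality. Since $\Omega \succ O_d$, the matrix $\Omega^{1/2}$ is invertible, so I would introduce
\[
    M = \Omega^{-1/2} B \, \Omega^{-1/2},
\]
which is symmetric positive semidefinite. Let $\lambda_1, \dots, \lambda_d \geq 0$ denote its eigenvalues. The factorization $\Omega + B = \Omega^{1/2}(I_d + M) \Omega^{1/2}$ then gives $(\Omega + B)^{-1} = \Omega^{-1/2}(I_d + M)^{-1}\Omega^{-1/2}$, and the right-hand side of the claim becomes $\log \det(I_d + M) = \sum_{i = 1}^d \log(1 + \lambda_i)$.

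Next, I would analyze the left-hand side. The matrix $S := B^{1/2}(\Omega + B)^{-1}B^{1/2}$ is symmetric positive semidefinite, so $\|S\| = \lambda_{\max}(S)$. Using the standard fact that $XY$ and $YX$ share the same nonzero spectrum, the nonzero eigenvalues of $S$ coincide with those of $B (\Omega + B)^{-1} = B \Omega^{-1/2}(I_d + M)^{-1}\Omega^{-1/2}$, which in turn coincide with the nonzero eigenvalues of $\Omega^{-1/2} B \Omega^{-1/2}(I_d + M)^{-1} = M(I_d + M)^{-1}$. Because $M$ and $(I_d + M)^{-1}$ commute, the eigenvalues of $M(I_d + M)^{-1}$ are precisely $\lambda_i/(1 + \lambda_i)$, all nonnegative. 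Thus
\[
    \|S\| = \max_{1 \leq i \leq d} \frac{\lambda_i}{1 + \lambda_i}.
\]

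It remains to establish the elementary scalar inequality $\lambda/(1 + \lambda) \leq \log(1 + \lambda)$ for all $\lambda \geq 0$. This follows from $f(\lambda) := \log(1 + \lambda) - \lambda/(1 + \lambda)$ satisfying $f(0) = 0$ and $f'(\lambda) = \lambda/(1 + \lambda)^2 \geq 0$. Combining this with nonnegativity of every $\log(1 + \lambda_i)$ yields
\[
    \|S\| \leq \max_{1 \leq i \leq d} \log(1 + \lambda_i) \leq \sum_{i = 1}^d \log(1 + \lambda_i) = \log \det(I_d + M),
\]
which is the desired bound. I expect the only mildly delicate point to be the identification of the spectrum of $S$ with $\{\lambda_i/(1 + \lambda_i)\}_{i = 1}^d$ when $B$ is singular, but the $XY$–$YX$ argument handles this without requiring $B^{1/2}$ to be invertible.
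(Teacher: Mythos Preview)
Your proof is correct and follows essentially the same route as the paper: both reduce to the eigenvalues $\lambda_i$ of $\Omega^{-1/2}B\,\Omega^{-1/2}$, identify $\|B^{1/2}(\Omega+B)^{-1}B^{1/2}\|$ with $\max_i \lambda_i/(1+\lambda_i)$, apply the scalar inequality $x/(1+x)\le\log(1+x)$, and then bound the maximum by the sum. If anything, your version is slightly more explicit, since you justify the spectral identification via the $XY$--$YX$ argument where the paper simply states it as an observation.
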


\begin{proof}
    The proof is based on an observation that the operator norm of $B^{1/2} (\Omega + B)^{-1} B^{1/2}$ can be expressed through the eigenvalues $\lambda_1( \Omega^{-1/2} B \Omega^{-1/2}), \dots$, $\lambda_d( \Omega^{-1/2} B \Omega^{-1/2})$ of the matrix $\Omega^{-1/2} B \Omega^{-1/2}$ as follows:
    \begin{align*}
        \left\| B^{1/2} (\Omega + B)^{-1} B^{1/2} \right\|
        &
        = \max\limits_{1 \leq j \leq d} \left\{ \frac{\lambda_j(\Omega^{-1/2} B \Omega^{-1/2})}{1 + \lambda_j(\Omega^{-1/2} B \Omega^{-1/2})} \right\}
        \\&
        \leq \max\limits_{1 \leq j \leq d} \log\left( 1 + \lambda_j(\Omega^{-1/2} B \Omega^{-1/2}) \right).
    \end{align*}
    Here the last line follows from the inequality $x / (1 + x) \leq \log(1 + x)$, which holds for all non-negative $x$.
    Since $\log\left( 1 + \lambda_j(\Omega^{-1/2} B \Omega^{-1/2}) \right) \geq 0$ for all $j \in \{1, \dots, d\}$, it holds that
    \begin{align*}
        \max\limits_{1 \leq j \leq d} \log\left( 1 + \lambda_j(\Omega^{-1/2} B \Omega^{-1/2}) \right)
        &
        \leq \sum\limits_{1 \leq j \leq d} \log\left( 1 + \lambda_j(\Omega^{-1/2} B \Omega^{-1/2}) \right)
        \\&
        = \log \det \left(I_d + \Omega^{-1/2} B \Omega^{-1/2} \right).
    \end{align*}
    Hence,
    \[
        \left\| B^{1/2} (\Omega + B)^{-1} B^{1/2} \right\|
        \leq \log \det \left(I_d + \Omega^{-1/2} B \Omega^{-1/2} \right).
    \]
\end{proof}

\subsection{Auxiliary Results from Probability Theory}

\begin{Lem}[for instance, \citeauthor{puchkin25}, \citeyear{puchkin25}, Lemma F.8]
    \label{lem:moment_subexp_norm_bound}
    Let \(\xi\) be an arbitrary random variable with a finite Orlicz norm \(\|\xi\|_{\psi_1} < +\infty\). Then, for any \(p > 0\), it holds that
    \[
        \E|\xi|^p \le 2\Gamma(p+1)\|\xi\|^p_{\psi_1} \le 2(p+1)^p\|\xi\|^p_{\psi_1}.
    \]
\end{Lem}

\begin{Th}[\citeauthor{kroshnin25}, \citeyear{kroshnin25}, Theorem 2.1]
Let \(X_1, \dots, X_n \in \bbH(d)\), where \(\bbH(d)\) denotes the space of \(d \times d\) Hermitian matrices, be a supermartingale difference sequence adapted to a filtration \((\cF_t)_{i=0}^n\) with \(\cF_0 = \{\Omega, \varnothing\}\). Fix \(\alpha > 0\) and set for all \(i \in \{1, \dots, n\}\)
\[
    \Sigma_i := \E[ X_i^2 \mid \cF_{i-1} ], \quad U_i := \| \lambda_{\max} (X_i)_+ \mid \cF_{i-1} \|_{\psi_\alpha}.
\]
Fix \(\upvarsigma > 0, U \ge K > 0\) and define the event
\[
    \cE := \left\{ \lambda_{\max} \left( \sum_{i=1}^n \Sigma_i \right) \le \upvarsigma^2, \quad \sum_{i=1}^n U^2_i \le U^2, \quad \max_{i \in \{1, \dots, n\}} U_i \le K \right\}.
\]
Let 
\[
    z = z(U, \upvarsigma, \alpha) := \begin{cases}
        \left( 4 \vee 4\log\frac{eU}{\upvarsigma} \right)^{1/\alpha}, \hfill \text{ if } \alpha \ge 1, \\
        \left[ \frac4\alpha \log\frac{e}{\alpha} + 4\left(\log\frac{U}{\upvarsigma}\right)_{+} \right], \hfill \text{ if } \alpha < 1.
    \end{cases}
\]
Then for any \(x > 0\), it holds, with probability at least \(\p(\cE) - de^{-x} - e^{-x} \:\I[\alpha < 1]\), that
\[
    \max_{k \in \{1, \dots, n\}} \lambda_{\max} \left( \sum_{i=1}^k X_i \right) \le \upvarsigma \sqrt{2x} + \frac34 Kzx + \frac{3K}{\alpha} x\left( 2x + 2\log\frac{4U}{K} + \frac4\alpha \log\frac{4}{\alpha e} \right)^{\frac{1-\alpha}{\alpha}} \I[\alpha < 1].
\]
\end{Th}

\begin{Lem}
    \label{lem:expectation_prod_bound}
    Let \(\xi\) and \(\eta\) be non-negative random variables with finite \(\psi_1\)-norms. Then it holds that 
    \begin{equation}
        \label{eq:expectation_prod_bound}
        \E \xi\eta
        \le 3e \, \E \xi \, \|\eta\|_{\psi_1} \log\frac{4\sqrt{\E\xi^2}}{\E\xi}
        \le 3e \, \E \xi \, \|\eta\|_{\psi_1} \log\frac{8\|\xi\|_{\psi_1}}{\E\xi}
    \end{equation}
\end{Lem}

\begin{proof}
    Let us take
    \[
        \eps = \left(\log\frac{4\sqrt{\E\xi^2}}{\E\xi}\right)^{-1} \in (0, 1).
    \]
    Applying H\"older's inequality, we obtain that
    \begin{align*}
        \E\xi\eta
        = \E \left( \xi^{1-\eps} \cdot \xi^\eps \eta \right)
        \le \big(\E\xi\big)^{1-\eps} \big(\E \, \xi \eta^{1/\eps} \big)^{\eps}.
    \end{align*}
    Due to the Cauchy-Schwarz bound and Lemma \ref{lem:moment_subexp_norm_bound}, it holds that
    \begin{align*}
        \big(\E\xi\big)^{1 - \eps} \big(\E \, \xi\eta^{1/\eps}\big)^{\eps} 
        \le 
        \big(\E\xi\big)^{1-\eps} \big(\E\xi^2\big)^{\eps/2} \big(\E\eta^{2/\eps}\big)^{\eps/2} 
        \le
        \E\xi \left(\frac{\sqrt{\E\xi^2}}{\E\xi}\right)^\eps \cdot 2^{\eps/2} \left(1+\frac2\eps\right) \|\eta\|_{\psi_1}.
    \end{align*}
    Recalling the definition of $\eps$, we deduce that
    \begin{align*}
        \E\xi \left(\frac{\sqrt{\E\xi^2}}{\E\xi}\right)^\eps \cdot 2^{2/\eps} \left(1+\frac2\eps\right) \|\eta\|_{\psi_1} \le  \frac3\eps \E\xi \left(\frac{4\sqrt{\E\xi^2}}{\E\xi}\right)^\eps \|\eta\|_{\psi_1} = 3e \E\xi \|\eta\|_{\psi_1} \log\frac{4\sqrt{\E\xi^2}}{\E\xi}.
    \end{align*}
    The second inequality in \eqref{eq:expectation_prod_bound} directly follows from Lemma \ref{lem:moment_subexp_norm_bound} claiming that $\E \xi^2 \leq 4 \|\xi\|_{\psi_1}$.
    
\end{proof}

\end{document}